\newtheorem{theo}{Theorem}[section]
\newtheorem{assumption}[theorem]{Assumption}
\newtheorem{prop}{Proposition}
\numberwithin{equation}{section}
\definecolor{bluegray}{cmyk}{1,0,0,0}
\newcommand{\nn}{\nonumber}
\begin{document}

\title{F2A2: Flexible Fully-decentralized Approximate Actor-critic for Cooperative Multi-agent Reinforcement Learning}

\author{\name Wenhao~Li \email liwenhao@cuhk.edu.cn \\
      \addr School of Data Science\\
      The Chinese University of Hong Kong, Shenzhen\\
      Shenzhen Institute of Artificial Intelligence and Robotics for Society\\
      Shenzhen 518172, China
      \AND
      \name Bo~Jin \email bjin@tongji.edu.cn \\
      \addr Corresponding author\\
      School of Software Engineering\\
      Shanghai Research Institute for Intelligent Autonomous Systems\\
      Tongji University\\
      Shanghai 201804, China
      \AND
      \name Xiangfeng~Wang \email xfwang@cs.ecnu.edu.cn \\
      \addr Corresponding author\\
      School of Computer Science and Technology\\
      Key Laboratory of Mathematics and Engineering Applications, Ministry of Education\\
      East China Normal University\\
      Shanghai 200062, China
      \AND
      \name Junchi~Yan \email yanjunchi@sjtu.edu.cn \\
      \addr Department of Computer Science and Engineering \\
      Key Laboratory of Artificial Intelligence, Ministry of Education\\
      Shanghai Jiao Tong University \\
      Shanghai 200240, China 
      \AND
      \name Hongyuan~Zha \email zhahy@cuhk.edu.cn \\
      \addr School of Data Science\\
      The Chinese University of Hong Kong, Shenzhen\\
      Shenzhen Institute of Artificial Intelligence and Robotics for Society\\
      Shenzhen 518172, China}

\editor{Ambuj Tewari}

\maketitle

\begin{abstract}
Traditional centralized multi-agent reinforcement learning (MARL) algorithms are sometimes unpractical in complicated applications due to non-interactivity between agents, the curse of dimensionality, and computation complexity. 
Hence, several decentralized MARL algorithms are motivated. 
However, existing decentralized methods only handle the fully cooperative setting where massive information needs to be transmitted in training.
The block coordinate gradient descent scheme they used for successive independent actor and critic steps can simplify the calculation, but it causes serious bias.
This paper proposes a flexible fully decentralized actor-critic MARL framework, which can combine most of the actor-critic methods and handle large-scale general cooperative multi-agent settings. 
A primal-dual hybrid gradient descent type algorithm framework is designed to learn individual agents separately for decentralization.
From the perspective of each agent, policy improvement and value evaluation are jointly optimized, which can stabilize multi-agent policy learning. 
Furthermore, the proposed framework can achieve scalability and stability for the large-scale environment.
This framework also reduces information transmission by the parameter sharing mechanism and novel modeling-other-agents methods based on theory-of-mind and online supervised learning.
Sufficient experiments in cooperative Multi-agent Particle Environment and StarCraft II show that the proposed decentralized MARL instantiation algorithms perform competitively against conventional centralized and decentralized methods.
\end{abstract}

\begin{keywords}
  cooperative MARL, decentralized, actor-critic, primal-dual method
\end{keywords}

\section{Introduction} \label{sec:intro}

Multi-agent reinforcement learning (MARL,~\citet{Zhang2019MultiAgentRL}) has shown remarkable performance in interactive and complicated cooperative multi-agent environments, e.g. multi-robot controlling \citep{matignon2012coordinated} and multi-player games~\citep{peng2017multiagent}. 
MARL algorithms generally model a cooperative multi-agent learning system as a Markov game~\citep{littman1994markov} (or a stochastic game, \citep{owen1982gametheory}), where the joint actions of multiple agents influence a shared environment. 
In particular, each agent can access the full observation of the environment and takes action according to its current policy. 
These actions together determine the successive states of the environment~\citep{lowe2017multi, foerster2018counterfactual, rashid2018qmix}.

However, 
the global assumption that each agent can fully observe the environment is usually difficult to satisfy in many practical applications, such as intelligent connected vehicle \citep{adler2002cooperative}. 
Hence, a more reasonable solution is to model the problem as a more general formulation, i.e., cooperative partially observable stochastic game (POSG) \citep{hansen2004dynamic}.
Various MARL methods have been proposed, including value-based \citep{jiang2018graph, rashid2018qmix}, actor-critic-based \citep{foerster2018counterfactual, wei2018multiagent, iqbal2019actor}.
However, these methods are designed to solve a fully cooperative POSG with another global assumption, i.e., all agents share a global cost function. 
Therefore, some works try to solve the more general cooperative POSG problem,
where the cost functions of the agents might correspond to different tasks, and are only known to the corresponding agent.
The collective goal of the agents is to minimize the globally summed return~\citep{jiang2018learning, yang2018mean, li2019robust}.
This paper is focused on this general cooperative setting.
	
Existing MARL algorithms for cooperative POSG mostly follow two frameworks (Figure~\ref{fig:ctde-dtde}): Centralized Training Decentralized Execution (CTDE) \citep{oliehoek2008optimal} and Decentralized Training Decentralized Execution (DTDE).
CTDE assumes that a powerful central controller can receive and process all agents' information.
The left of Figure~\ref{fig:ctde-dtde} shows the CTDE framework of actor-critic-based MARL methods, where centralized training explicitly takes into account the observations, policies, and costs of all agents in the learning phase, thereby effectively solving the non-stationary environment problem in multi-agent reinforcement learning. 
However, CTDE suffers from some limitations. 
First, as the number of agents increases, the amount of information for the centralized controller to process will increase exponentially, eventually leading to the \textit{curse of dimensionality}. 
This will bring heavy space-time overhead to the entire system. 
Second, the centralized assumption for training is unrealistic in many real-world scenarios.
Besides, the presence of a centralized controller also degrades the system's capability to resist malicious attacks.

\begin{figure*}[tb!]
    \centering
    \includegraphics[width=\textwidth]{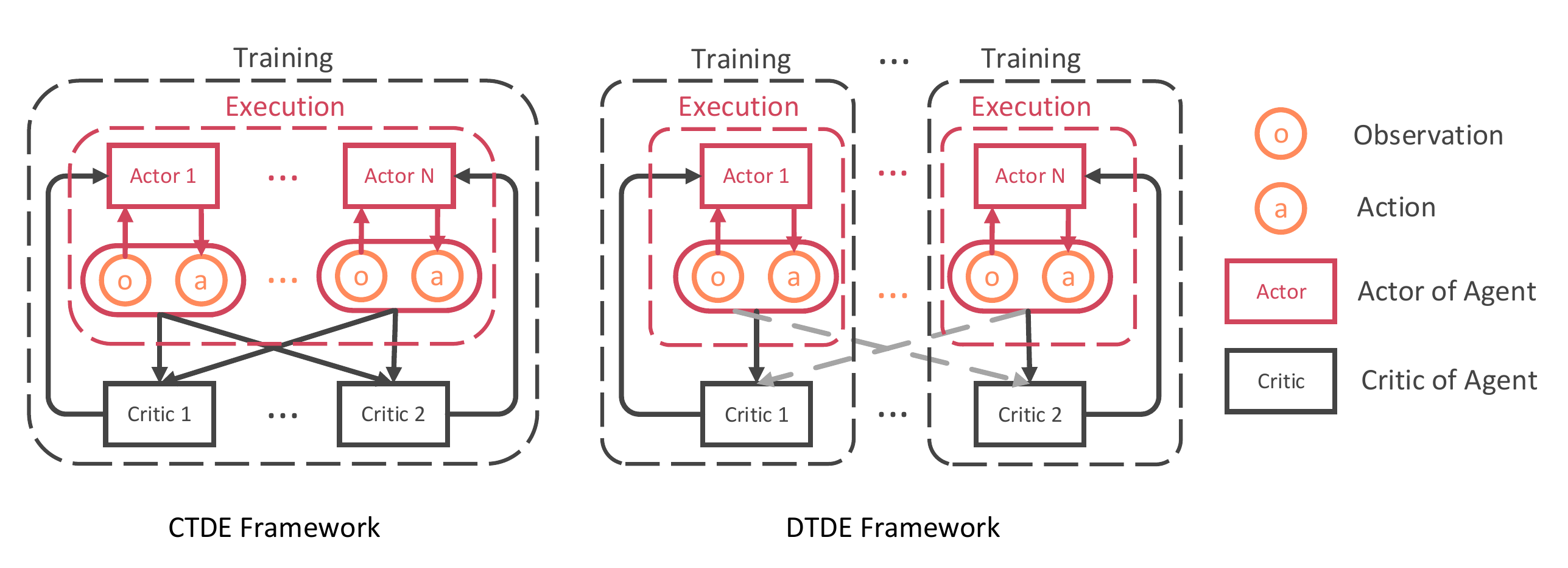}
    \vspace{-10pt}
    \caption{Centralized training decentralized execution (CTDE) and decentralized training decentralized execution (DTDE) actor-critic frameworks. The gray dotted arrow indicates that the relevant information of the agents is no longer scheduled by a centralized controller but is passing between agents by communication.}
    \label{fig:ctde-dtde}
\end{figure*}

Many recent works try to follow the DTDE framework (Figure~\ref{fig:ctde-dtde}) to decentralize cooperative MARL \citep{zhang2018fully,doan2019finite,doan2019convergence,suttle2019multi}.
Unfortunately, existing decentralized cooperative MARL methods either only focus on the policy evaluation stage in fully observable small-scale multi-agent problems or only could handle the fully cooperative POSG setting where massive information needs to be transmitted in decentralized training. 
Therefore, our motivation is to design a more general DTDE framework for general cooperative POSG problems.
To achieve a fully decentralized MARL algorithm, the main challenge need to be tackled is: 
How to effectively utilize partial observation of each agent to make the global decision in a fully decentralized scheme?
In other words, it needs to drive the performance of decentralized MARL equal or close to the centralized one.

In this paper, we propose a Flexible Fully-decentralized Approximate Actor-critic (F2A2) algorithm for DTDE cooperative MARL under a novel and general joint actor-critic framework in Figure~\ref{fig:decent}.
A fully decentralized mechanism is designed based on consensus constraints and primal-dual optimization,
whose benefit is that it can introduce a parameter sharing mechanism to increase the efficiency and adaptability for different settings.
Moreover, to reduce the effect of the information loss caused by decentralized settings, a novel modeling-other-agents (MOA) technique is adopted based on theory-of-mind (TOM)~\citep{rabinowitz2018machine} and online supervised learning, which enables the agent to estimate the information of other agents while making decisions based on local information, and improves the robustness and performance of the F2A2.
Several decentralized versions of typical MARL algorithms are devised in the proposed framework, i.e., F2A2-COMA, F2A2-DDPG, F2A2-TD3, and F2A2-SAC.
Extensive experiments on Cooperative Multi-agent Particle Environment and StarCraft II show that the proposed fully decentralized algorithms can obtain competitive performance against conventional centralized and decentralized methods.
Overall, the main contributions of this paper are:
\begin{itemize}
    \item \textit{Fully decentralized framework}. We reformulate actor-critic for general cooperative POSG problems in an additive joint form. Its separable characteristics lead to a novel fully decentralized actor-critic framework cooperating with separable primal-dual optimization.  This is the first fully decentralized MARL to our knowledge.
    \item \textit{Flexibility}. The proposed novel actor-critic framework is compatible with various actor-critic algorithms, and the general decentralizing mechanism can transform them into fully decentralized versions. Besides, the proposed framework has a flexible parameter-sharing and regularization mechanism, which makes F2A2 suitable for different kinds of settings, including small- and large-scale cooperative scenarios, on- and off-policy training schemes.
    \item \textit{Performance}. A sufficient comparison is made between existing centralized and decentralized algorithms with F2A2. The proposed decentralized solutions have achieved remarkable performance with a more general setting even compared with its centralized version. 
\end{itemize}

\begin{figure*}[tb!]
    \centering
    \includegraphics[width=0.8\textwidth]{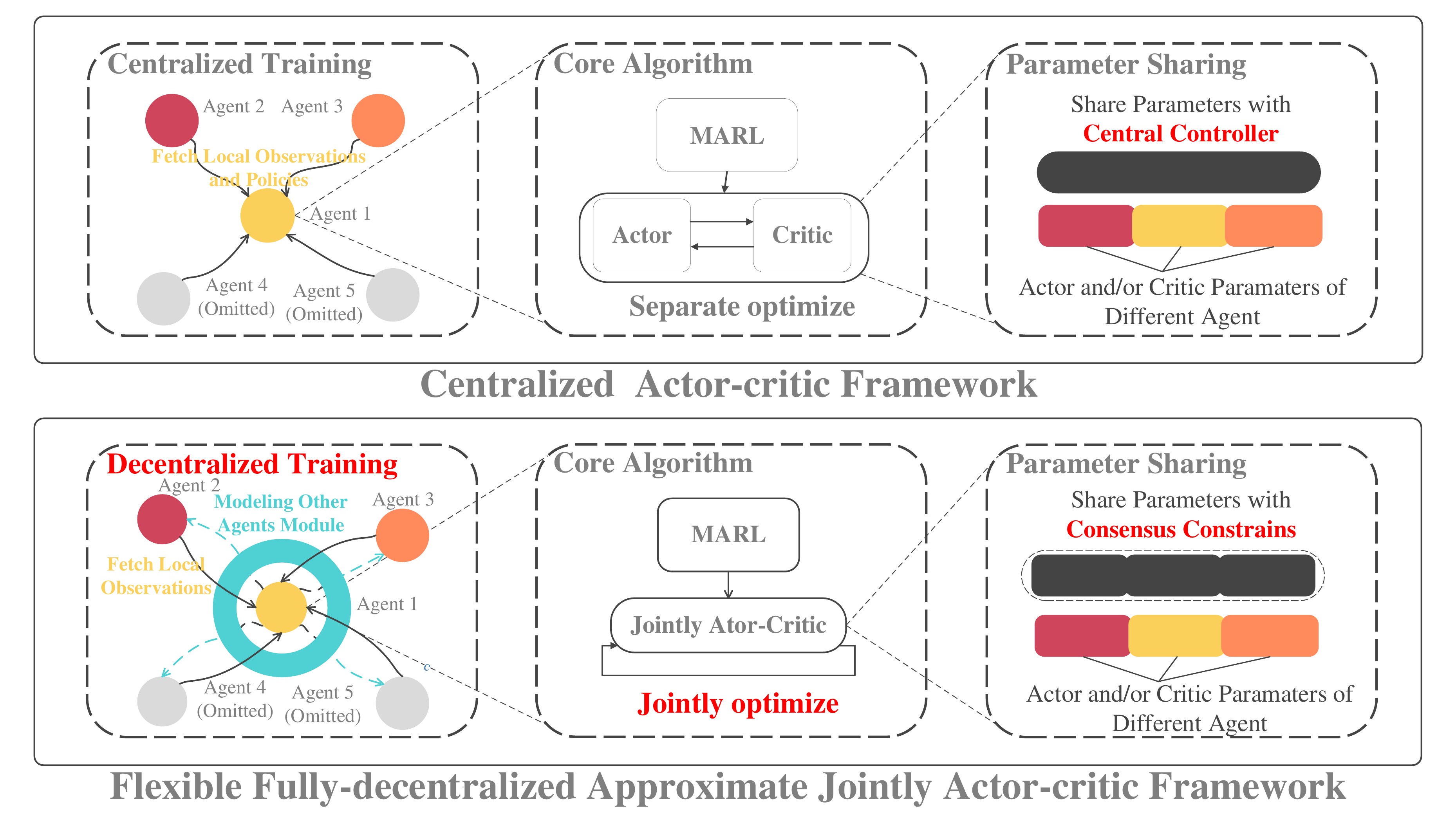}
    \vspace{-10pt}
    \caption{The F2A2 architecture. 
    Compared with the upper traditional centralized training methods, F2A2 solves general cooperative POSG by a novel additive joint policy optimization objective and decentralizes via a separable primal-dual algorithm. 
    The flexibility and superiority of the joint actor-critic framework allow it to combine any on/off-policy actor-critic algorithms with the advantages of the trust region. F2A2 contains a novel agent modeling scheme in decentralized training to improve the communication efficiency. Parameter sharing mechanism and consensus constraints enhance generality and scalability.}
    \label{fig:decent}
\end{figure*}

The following is the roadmap of this paper.
Section \ref{sec:related} provides a brief but complete introduction to related works in centralized MARL and decentralized MARL and provide the background material on cooperative POSG and actor-critic algorithm in Section \ref{sec:background}. 
Section \ref{sec:model} describes the proposed model. 
The fully decentralized actor-critic MARL algorithms and related analysis from various aspects are presented in Section \ref{sec:algorithm}.
Extensive experiments are presented in Section \ref{sec:experiment}, and we conclude this paper in Section \ref{sec:conclusion}.

\section{Related Work}\label{sec:related}
The cooperative multi-agent problem exists widely in real world, such as multi-robot control~\citep{matignon2012coordinated}, multi-player games~\citep{peng2017multiagent} and etc.
In this section, existing cooperative multi-agent reinforcement learning methods is divided into \textit{centralized} and \textit{decentralized} methods based on their training mechanism.
In this paper, the decentralized algorithm means it satisfies decentralization in both \textit{training} and \textit{execution} stages.

\subsection{Centralized Cooperative MARL.}
Most of the recent centralized cooperative MARL algorithms are follow the centralized training decentralized execution (CTDE) mechanism.
The rise of the CTDE framework mainly has two-fold reasons.
At early stages, the cooperative multi-agent problem is often modeled as a sizeable single-agent problem via joint action learning (JAL)~\citep{claus1998dynamics}, by learning a centralized policy based on global state or joint observation and then executing the joint action.
It has to assume that the agents can access the global state, or there is a communication channel 
to integrate the individual information of all agents, no matter for training or execution. 
Even if the above problems can be solved by technical means, when the number of agents grows, the joint action space's size will increase exponentially, leading to the centralized policy learning infeasible.
By contrast, independent learning (IL)~\citep{tan1993multi} can learn decentralized policies, but it results in nonstationarity as each agent treats the other agents as part of its environment.
To solve the infeasible policy learning and the non-stationary problem, \citet{oliehoek2008optimal} introduces the CTDE mechanism.
Specifically, in the training stage, the agents can freely share individual observations, parameters, and/or gradients, similar to JAL.
However, in the execution stage, the policies are decentralized carried out with local observations, just similar in IL.

The algorithms that follow the CTDE mechanism can be divided into two categories: value-based methods and actor-critic-based methods.
For value-based methods, some works \citep{rashid2018qmix,son2019qtran} obtain a decomposable joint Q-value function through centralized training and only use the local Q-value function for each agent in the execution stage.
The graph convolutional method~\citep{jiang2018graph} introduces the graph neural networks (GNN)~\citep{scarselli2008graph} to \textcolor{black}{factorize the joint Q-value function without decomposable assumption}.
\textcolor{black}{\citet{Yang2018MeanFM} and \citet{Wang2020BreakingTC} introduce the mean-field approximation to mitigate the curse of dimentionality of joint Q-value function.}
The actor-critic-based algorithms are more suitable for the CTDE training mechanism than the value-based methods because of their unique structure composed of the actor and the critic.
Actor-critic-based multi-agent reinforcement learning algorithms generally train a central-critic (based on individual observations and policies of all agents) centrally and use the actor, which is only based on the local observation during decentralized execution for each agent. 
COMA \citep{foerster2018counterfactual} learns a shared centralized counterfactual baseline for all agents, which addresses the credit assignment problem in cooperative multi-agent learning.
\citet{wei2018multiagent} extends the Soft Q-Learning~\citep{haarnoja2017reinforcement} to multi-agent to solve relative over-generalization problem.
A line of works \citep{lowe2017multi, li2019robust, reddy2019risk} extend the DDPG\citep{lillicrap2015continuous} algorithm to multi-agent scenarios. 
Unlike previous methods, MADDPG~\citep{lowe2017multi} centrally trains a central critic for each agent. 
M3DDPG~\citep{li2019robust} introduces the minimax method to maximize the expected returns when the other agents have the lowest returns.
RC-MADDPG~\citep{reddy2019risk}, in addition to maximizing the expected return of the agent, additionally introduces the variation of return (VOR) as an optimization goal, making the policy performance of the agent more stable.
\textcolor{black}{
\citet{Ryu2018MultiAgentAW} introduced an additional generative cooperative policy network, which is used to generate the more diverse samples via maximizing the expected return of other agents, to encourage the exploration based on \citet{lowe2017multi}.}
MAAC~\citep{iqbal2019actor}, which is also based on MADDPG, introduces the attention mechanism~\citep{bahdanau2014neural} to model centralized non-shared critic in actor-critic framework, enabling the agent to self-identify information quality.

In recent years there has been another main line of work to solve the cooperative POSG problem under CTDE framework, i.e. \textit{learning to communicate}~\citep{foerster2016learning, sukhbaatar2016learning, peng2017multiagent, jiang2018learning, sheng2020learning} and \textcolor{black}{\citep{Chu2020MultiagentRL}}.
The cooperation between agents is accomplished by passing information to each other, and the information is generated by a shared message generator which is training centrally.


\subsection{Decentralized Cooperative MARL.}
For the current mainstream CTDE framework, the existence of an unavoidable central controller during training makes the framework have many performance limitations, such as single points of failure, high communication requirements, massive computing burden, and limited flexibility and scalability.
In order to solve the problems caused by the CTDE framework, there are also some works on fully decentralized frameworks in recent years.
From the perspective of optimization goals, these methods are divided into two categories: \textit{policy evaluation} and \textit{optimal policy learning}.
For methods that belong to policy evaluation, their purpose is to learn the optimal value function corresponding to a fixed multi-agent policy (the fixed policy does not have to be optimal). 
For the methods that belong to the optimal policy learning, the goal is the same as all the above-mentioned centralized methods, to learn an optimal control policy for all agents. 
Therefore, \textit{the policy evaluation method can be regarded as a component of the optimal policy learning method.}

For policy evaluation methods, the objective of all agents is to jointly minimize the mean square projected Bellman error (MSPBE).
\citet{macua2015distributed} and \citet{stankovic2016multi} are fully decentralized multi-agent extensions of gradient temporal difference (GTD-2) \citep{sutton2009fast} and linear temporal-difference with gradient correction (TDC) \citep{sutton2009fast}.
\citet{lee2018primal} also develops a fully decentralized multi-agent extension of GTD-2 and using the ordinary differential equation (ODE) method to establish the asymptotic convergence.
\citet{wai2018multi} proposes a double averaging scheme that combines the dynamic consensus~\citep{qu2017harnessing} and the SAG algorithm~\citep{schmidt2017minimizing} to solve the distributed saddle-point version of MSPBE and achieving the linear convergence rate.
More recently, standard TD learning~\citep{tesauro1995temporal}, instead of gradient-TD, has been generalized to this MARL setting, with particular focuses on finite-sample analyses~\citep{doan2019finite}.


For optimal policy learning methods,
\textcolor{black}{
some early value-based multi-agent reinforcement learning algorithms include Team Q-learning~\citep{Littman2001ValuefunctionRL}, Distributed Q-learning~\citep{Lauer2000AnAF}, FMQ~\citep{Kapetanakis2002ReinforcementLO}, Hyper-Q learning~\citep{Tesauro2003ExtendingQT}, OAB~\citep{Wang2002ReinforcementLT}, Hysteretic Q-learning~\citep{Matignon2007HystereticQ} and Lenient Q-learning~\citep{Panait2008TheoreticalAO}.
All these works are based on tabular Q-learning.
\citet{Littman2001ValuefunctionRL,Lauer2000AnAF,Kapetanakis2002ReinforcementLO,Wang2002ReinforcementLT,Matignon2007HystereticQ,Panait2008TheoreticalAO} only consider the fully cooperative POSG setting and works \citet{Lauer2000AnAF,Matignon2007HystereticQ,Panait2008TheoreticalAO} have the limitation that they only work in deterministic environments.
}
In recent years,
\citet{kar2013cal} combines the idea of consensus and innovation to the standard Q-learning algorithm, proposes the $\mathcal{QD}$-learning algorithm.
\textcolor{black}{
\citet{Arslan2017DecentralizedQF} proposed a decentralized Q-learning algorithm for fully observable stochastic games with weakly acyclic and verified its effectiveness in some tabular cases.
The algorithm is fully decentralized in that each decision-maker has access to only its local information, the global state information, and the local cost function.
\citet{Zhang2018FiniteSampleAF} proposed two decentralized Q-learning algorithms which extended the fitted-Q iteration algorithm for single-agent RL into a multi-agent tabular-based scenario with both the cooperative and the competitive settings.
Each agent maintains a local estimation of the global average value function by exchanges local information over a communication network to keep the decentralization.
\citet{Yongacoglu2019LearningTF} proposed a variant of Q-learning algorithm for fully observable and fully cooperative stochastic games, which is conduct policy evaluation and policy improvement at two-timescale, and verified its team optimality in some tabular cases under the same decentralized settings as in \citet{Arslan2017DecentralizedQF}.
\citet{Cassano2019TeamPL} also proposed a decentralized tabular Q-learning algorithm, in which each agent maintains a local estimation of the global average value function for fully observable stochastic games under the same communicable settings as in \citet{Zhang2018FiniteSampleAF,zhang2018fully}.
\citet{Zeng2020FiniteTimeAO} proposed a similar MARL algorithm with the value function consensus constraint as \citet{Zhang2018FiniteSampleAF,zhang2018fully}, expect that it is a Q-learning algorithm and is to solve the multi-task fully observable stochastic game.
}

Compared with the value-based MARL methods, the actor-critic-based algorithms are also more applied to decentralized works.
Some works follow a local actor and a consensus critic update scheme.
\citet{cassano2018multi} and \citet{suttle2019multi} propose a fully decentralized policy gradient and actor-critic method to solve fully cooperative POSG problems separately. 
\citet{zhang2018fully} derives two multi-agent actor-critic algorithms for policy optimization, which employs a distributed evaluation strategy by combining diffusion learning \citep{sayed2014adaptive} and TD, to solve the fully observable stochastic games.
This method introduces each agent's local estimation of the counterfactual baseline and uses a consensus constraint to make the local estimation as accurate as possible compared to the centralized counterfactual baseline.
Although the above methods are decentralized for fully cooperative POSG (except for \citet{zhang2018fully}, which is proposed for general cooperative problem), they still assume that all agents share value functions. 
The 
massive information exchange makes the above method unable to extend to large-scale multi-agent scenarios. 
\citet{zhang2019distributed} is not the same as \citet{zhang2018fully} to learn the local estimate of the centralized optimal critic and then learn the optimal policy independently.
Instead, it learns the local estimate of the optimal centralized policy and then learns the optimal critic independently.
Later in~\citet{zhang2018networked}, the same idea as~\citet{zhang2018fully} is extended to the continuous control setting.
\citet{zhang2018networked} develop an on-policy actor-critic algorithm, using the recent development of the expected policy gradient method~\citep{whitesonexpected}.
\citet{qu2019value} is also working under the problem setting of~\citet{zhang2018fully}, which extends~\citet{dai2018sbeed} to the multi-agent scenario.
\textcolor{black}{
To solve a MARL problem with safety constraints, \citet{lu2021decentralized} formulate the problem as a distributed constrained Markov decision process with networked
agents and proposes a decentralized policy gradient method based on~\citet{zhang2018fully}.
\citet{qu2022scalable} extends~\citet{zhang2018fully} to large-scale multi-agent scenarios by introducing truncated Q-functions and~\citet{hu2021decentralized} further introduces reward machines on the basis of~\citet{qu2022scalable} to solve complex cooperation problems.
}

\paragraph{Remark 1.} Although these actor-critic-based methods are decentralized, they still assume that all agents \textcolor{black}{share value functions or policies}. 
The 
massive information exchange makes them unable to extend to large-scale multi-agent scenarios.
Meanwhile, the above actor-critic-based methods optimize the actor and critic \textit{seperately} via block coordinate gradient descent (BCGD) type methods according to the their related problem formulations.
This formulation is composed of successive independent actor and critic steps and ignores the influence between each other, which can introduce bias.
The biggest difference between these methods and the proposed algorithm framework is that a primal-dual hybrid gradient (PDHG) type method is introduced to \textit{jointly} optimize the actor and critic, thereby avoiding error accumulation and the unstable problem in practical deployment of methods above.
As will be shown, the proposed method not only solves the more general cooperative POSG problem, but also avoid direct sharing of policies between agents via a novel MOA module.

\section{Preliminaries}\label{sec:background}

{\color{black}{
\subsection{Markovian and Non-Markovian Policies and Values}
Under a few conditions~\citep{puterman2014markov}, for any given MDP, there exists an optimal policy which is deterministic and Markovian (that maps agent's immediate or current observation of the environment to actions).
In contrast, when dealing with partially observed MDPs (POMDPs), we might be interested in the class of non-Markovian and history-dependent policies (that map the state-action trajectory history to distributions over actions).
In this paper, we do not employ non-Markovian policies for several reasons.
    
Firstly, tackling non-Markovian and history dependent policies can be computationally undicidable~\citep{madani1999undecidability} for the infinite horizon, or PSPACE-Complete~\citep{papadimitriou1987complexity} in the finite horizon POMDPs.
When extended to the POSG, this problem will become more serious.
To avoid undesirable computational burdens, we focus on Makovian policies.

Secondly, indeed, many prior works study Markovian policies for POMDPs~\citep{baxter2001infinite,littman1994memoryless,baxter2000reinforcement,azizzadenesheli2016reinforcement} where, in general, the optimal policies are stochastic~\citep{littman1994memoryless,singh1994learning,montufar2015geometry}.
Acknowledging the computation complexity of Markovian policies~\citep{vlassis2012computational}, this line of work highlights the broad interest, importance, and applicability of Markovian policies.

Thirdly, one could alternatively consider history-dependent policies, which is a richer function class than Markovian policies.
However, utilizing history-dependent policies requires turning a given POMDP problem to a potentially non-stationary MDP with states as concatenations of the historical data.
As such, a thorough theoretical treatment of history-dependent policies for POMDPs would likely require leveraging theoretical analyses for non-stationary MDPs.
When extended to the POSG, the theoretical analysis of non-stationarity of POSG is still in a very preliminary stage. 
So much discussion in this area is beyond the scope of this paper, and we will delve into this in future work.

Fourthly, and most importantly, the expressiveness of general policy classes is mainly entangled with the definition of the observation.
Under some regularity conditions, for any given class of history-dependent policies on a POMDP, there exists a class of Markovian policies on a new POMDP~\citep{littman1994memoryless,castro2009equivalence,hausknecht2015deep,azizzadenesheli2018policy} such that:
(1) the observations of the new POMDP are the (typically) discrete concatenations of the historical data in the former POMDP; and 
(2) the two policy classes are equivalent, i.e., the repsective policies result in the same behavior (e.g., action sequence).
In other words, instead of making the policies non-Markovian and history-dependent, we can keep them Markovian and instead enrich the observation space.
Similarly, for any limited-memory policy class (depending on a fixed window of history instead of the whole history, e.g., the policies in MDPs of order more than one), there is an enrichment of the observation that results in an equivalent class of Markovian policies.
This viewpoint is known as the emergentism approach~\citep{sep-properties-emergent}.

Similar to the policy class, in this paper, we also focus on the Markovian value function.
Recent work~\citep{mao2020information} in cooperative POSG has shown that, with appropriate enrichment of the observation space, the optimal state-value function ($V$) or state-action-value function ($Q$) obtained under the newly constructed POSG can approximate the optimal one obtained under the original POSG.
However, \citet{mao2020information} adopts a value-based approach. 
\ul{Fortunately, after enriching the observation space, we can naturally extend the policy gradient theorem under MDP to POMDP} based on~\citet{mao2020information} as the basis for the subsequent theoretical derivation of this paper\footnote{\textcolor{black}{For the specific derivation process, please refer to the Appendix.~C}}.

The above reasoning implies that, by considering the class of Markovian policies and value functions in episodic POMDPs, we often will not restrict the generality of the results in this paper and similar conclusions can also be obtained by extending to the POSG.
Therefore, despite focusing on Markovian policies and value functions, our results also hold for both classes of limited-memory as well as history-dependent policies and value functions through representing histories as observations.
So in this paper, both $o$ and $\boldsymbol{O}$ refer to the \underline{enriched} observation spaces unless specified.
}}

{\color{black}{
\subsection{Problematic Instances of POMDPs}
Certain aberrant instances of worst-case scenarios in POMDP can potentially render the proposed F2A2 computationally infeasible. 
Specifically, in the realm of POMDPs, the identification of the system dynamics is hindered by pathological instances. 
Such cases are characterized by observations that offer no valuable information for discerning the system's behavior. 
The arduousness of the situation was proven by~\citet{krishnamurthy2016pac}, who demonstrated that, in the worst-case scenario, the discovery of a near-optimal policy for a POMDP necessitates an exponential number of samples proportional to the horizon. 
This result is formally expressed through the following proposition, where $\boldsymbol{\mathcal{A}}$ represents the joint action space with a cardinality of $|\boldsymbol{\mathcal{A}}|=A$.

\begin{prop}[\citealt{krishnamurthy2016pac,jin2020sample}]
    There exists a class of $2$-states $T$-horizon POMDPs whose observations reveal no information about the underlying states up to the end, such that any algorithm requires at least $A^{\Omega(T)}$ samples to learn an $\mathcal{O}(1)$-optimal policy with a probability of $1 / 2$ or higher.
\end{prop}

A running example can be found in~\citet[$\S$3.1 Hard instances of POMDPs]{liu2022partially}. 
For POMDPs with more than two states, the proposition presented earlier can be easily extended to situations where two mixtures of latent states with disjoint support exist, causing observations to fail to differentiate between the two mixtures. 
A mixture of states is represented by a probability vector $\nu \in \Delta_\mathcal{S}$, where $\nu_1$ and $\nu_2$ are considered to have disjoint support if $\operatorname{supp}\left(\nu_1\right) \cap \operatorname{supp}\left(\nu_2\right)=\varnothing$.

To avoid the aforementioned pathological instances, a simple approach is to assume that any two latent state mixtures $\nu_1$ and $\nu_2$ with disjoint support yield distinct distributions over observations, meaning that $\mathbb{O}_t \nu_1 \neq \mathbb{O}_t \nu_2$ for all $t \in[T]$, where $\mathbb{O}_h$ is the $O \times S$ emission matrix at step $h$. 
Here, $\boldsymbol{\mathcal{S}}$ and $\boldsymbol{\mathcal{O}}$ denote the state and joint observation spaces, respectively, with cardinalities of $|\boldsymbol{\mathcal{S}}|=S$ and $|\boldsymbol{\mathcal{O}}|=O$. 
It can be shown using a linear algebraic argument that this condition is equivalent to the rank of the emission matrix $\mathbb{O}_t$ being $S$.

\begin{prop}[\citealt{jin2020sample}]
    The emission matrix $\mathbb{O}_h$ is rank $S$ if and only if the induced distributions over observations are distinct for any two mixtures of latent states with disjoint support.
\end{prop}

Building upon the proposition discussed earlier, \citet{jin2020sample} suggested a more robust assumption to guarantee that observations possess adequate information to differentiate any two-state mixtures, given a sufficiently large number of samples.

\begin{assumption}[$\alpha$-weakly revealing condition, \citealt{liu2022partially}]\label{ass:weakly}
    There exists $\alpha>0$, such that $\min _t \sigma_S\left(\mathbb{O}_t\right) \geq \alpha$.
\end{assumption}

\citet{liu2022partially} refer to Assumption~\ref{ass:weakly} as the ``weakly'' revealing condition, in contrast to the rich observation or block MDP setup in the literature. 
However, it should be noted that Assumption~\ref{ass:weakly} implicitly assumes that $S \leq O$, as $\mathbb{O}_t$ is a matrix of size $O \times S$. 
Thus, it only holds in the \textit{undercomplete} setting. 
To ensure the generality of F2A2, we adopt the \textit{overcomplete} version assumption proposed by~\citep{liu2022partially} where $S>O$.

Observations in a single step are insufficient to distinguish between any two mixtures of latent states due to information-theoretic limitations.
Instead, we must examine the distribution of observations for a sequence of $m$ consecutive steps corresponding to the enriched observation space in F2A2. 
It is worth noting that the number of possible joint observable sequences of length $m$, $\left(\boldsymbol{o}_1, \boldsymbol{a}1, \ldots, \boldsymbol{a}{m-1}, \boldsymbol{o}_m\right)$, is $O^m A^{m-1}$, which is greater than $S$ when $m \geq \Omega(\log S)$. 
To formalize this, we define the m-step emission-action matrices $\left\{\mathbb{M}_t \in \mathbb{R}^{\left(A^{m-1} O^m\right) \times S}\right\}_{t \in[T-m+1]}$, and introduce the following assumption.

\begin{assumption}[$m$-step $\alpha$-weakly revealing condition, \citealt{liu2022partially}]\label{ass:weakly-m}
    There exists $m \in \mathbb{N}, \alpha>0$ such that $\min _{t \in[T-m+1]} \sigma_S\left(\mathbb{M}_t\right) \geq \alpha$.
\end{assumption}

Assumption~\ref{ass:weakly-m} guarantees that the observable sequence in the next $m$ consecutive steps contains enough information to distinguish any two mixtures of states when a sufficiently large number of enriched observations is available. 
This assumption encompasses Assumption~\ref{ass:weakly} as a specific case with $m=1$.
}}

\subsection{Cooperative Partially Observable Stochastic Games}

\textit{Partially observable stochastic game} (POSG)~\citep{hansen2004dynamic} is denoted as a tuple based on Markov Game as follows:
$$
\langle \mathcal{X}, \mathcal{S}, \left\{ \mathcal{A}^i \right\}_{i=1}^{n}, \left\{ \mathcal{O}^i \right\}_{i=1}^{n}, \mathcal{P}, \mathcal{E}, \left\{ \mathcal{C}^i \right\}_{i=1}^{n} \rangle,
$$
where $n$ is the total number of agents, $\mathcal{X}$ represents the agent space, $\mathcal{S}$ is a finite set of states,
	$\mathcal{A}^i$ is a finite action set of agent $i$, $\boldsymbol{\mathcal{A}}=\mathcal{A}^1\times\mathcal{A}^2\times\cdots\times\mathcal{A}^n$ is the finite set of joint actions,
	$\mathcal{P}(s^{\prime}|s, \boldsymbol{a})$ is the Markovian state transition probability function,
$\mathcal{O}^i$ is a finite observation set of agent $i$, $\boldsymbol{\mathcal{O}}=\mathcal{O}^1\times\mathcal{O}^2\times\cdots\times\mathcal{O}^n$ is the finite set of joint observations,
$\mathcal{E}(\boldsymbol{o}|s)$ is the Markovian observation emission probability function,
and $\mathcal{C}^i: \mathcal{S}\times\boldsymbol{\mathcal{A}}\times\mathcal{S} \rightarrow {\mathbb{R}}$ is the cost function of agent $i$.

The game in POSG unfolds over a finite or infinite sequence of stages (or timesteps), where the number of stages is called \textit{horizon}. 
This paper considers the episodic \textcolor{black}{infinite} horizon problem.
The objective for each agent is to minimize the expected \textcolor{black}{discounted} cumulative cost received during the game. 
For a cooperative POSG, the definition in \citep{song2019arena} is quoted,
\begin{equation}
\forall x \in \mathcal{X}, \forall x^{\prime} \in \mathcal{X} \backslash\{x\}, \forall \pi_{x} \in \Pi_{x}, \forall \pi_{x^{\prime}} \in \Pi_{x^{\prime}}, \frac{\partial \mathcal{C}^{x^{\prime}}}{\partial \mathcal{C}^{x}} \geqslant 0, \nonumber
\end{equation}
where $x$ and $x^\prime$ are a pair of agents in agent space $\mathcal{X}$;
$\pi_{x}$ and $\pi_{x^\prime}$ are the corresponding policies in the policy space $\Pi_{x}$ and $\Pi_{x^\prime}$ separately.
Intuitively, this definition means that there is no conflict of interest for any pair of agents.
The most common example of cooperative POSG is the fully cooperative POSG (also called decentralized partially observable Markov decision process, Dec-POMDP), that all the agents share the same global cost at each stage, and $\mathcal{C}^1 = \mathcal{C}^2 = \cdots = \mathcal{C}^n$.

This paper aims to solve the general cooperative POSG. 
Each agent completes a common task based on their local observations, cost, and learning process.
Without loss of generality, the optimization goal of the general cooperative POSG problem is defined as follows
\begin{equation}\label{eq:optimization-goal-general cooperative POSG}
\min_{\Psi} \sum_{i=1}^{n} \sum_{t=0}^{\textcolor{black}{\infty}} \mathbf{E}_{\textcolor{black}{s_0 \sim p_0}, \boldsymbol{o} \sim \mathcal{E}, a \sim \boldsymbol{\pi}_{\Psi}} \left[ \textcolor{black}{\gamma^{t}}c_{t+1}^{i} \right],
\end{equation}
where $\Psi:=\{\psi^i\}_{i=1}^{n}$ denotes parameters of the approximated policy \textcolor{black}{$\pi^i_{\psi^i}:\mathcal{O}^i\rightarrow\mathcal{A}^i$} of all agents and $\boldsymbol{\pi}_{\Psi}:=\prod_{i=1}^{n} \pi^i_{\psi^i}$ represents the factorizable joint policy of all agents.
\textcolor{black}{
$\gamma$ is the discount factor.
Note $p_{0}$ is the distribution of the initial state $s_0$.}
\textcolor{black}{$c_{t+1}^{i}$ represents the reward received by the agent $i$ at timestep $t+1$ after executes action $a_t^i$ in local observation $o_t^i$.}

{\color{black}{

\subsection{Single-agent Actor-critic-type Algorithms for POMDP}

Single-agent actor-critic methods optimize actor $\pi(\cdot|o;\psi)$ directly by minimizing the expected discounted accumulated cost:
\begin{equation}\label{eq:actor-critic-0}
    \sum_{t=0}^{\infty} \mathbf{E}_{s_0 \sim p_{0}, o \sim \mathcal{E}, a \sim \pi_{\psi}} \left[ \gamma^t c_{t+1}\right],
\end{equation}
where $\psi$ is the parameters of the approximated policy, $p_{0}$ is the distribution of the initial state $s_0$.
However, this may lead to a high estimation variance with policy gradient methods. 
Instead, a critic $V_{\phi}(o)$ is introduced to estimate the expected accumulate cost.
The optimization formulation of the vanilla actor-critic algorithm~\citep{sutton2000policy} can be reformalized into a bi-level problem as follows~\citep{Yang2018ConvergentRL,Yang2019ProvablyGC,Hong2020ATF}:
\begin{equation}
    \begin{aligned}
        &\psi^{*} = \arg\min_{\psi}  J_{\text{actor}}(\psi, \phi^{*}(\psi)) := \mathbf{E}_{s_0 \sim p_{0}, o_0 \sim \mathcal{E}} \left[V^{\pi}_{\phi^{*}}(o_0) \right],\\
        &\text{where } \phi^{*}(\psi) = \arg\min_{\phi} J_{\text{critic}}(\psi, \phi) := \mathbf{E}_{s \sim d_{\psi}, o \sim \mathcal{E}} \left[ \left( V^{\pi}_{\phi}(o) - V^{\pi}_{tg}(o) \right)^2 \right],\label{eq:actor-critic}
    \end{aligned}
\end{equation}
where $V^{\pi}_{tg}(o):=\mathbf{E}_{a\sim\pi_{\psi},s'\sim\mathcal{P}, o^{'}\sim\cal{E}}\left(\mathcal{C}_{s, a}^{s'}+\gamma V^{\pi}_{\phi}\left(o'\right) \right)$ and $\mathcal{C}_{s, a}^{s'}$ is equivalent to $\mathcal{C}^i(s,a,s')$ defined in previous subsection.
$d_{\psi}$ is the distribution of the state-occupancy measure of policy $\pi_{\psi}$.

Note the second term of the right side of the Eq.~\eqref{eq:actor-critic} is the \textit{bellman error} and the $V_{tg}(\cdot)$ stands for the temporal difference target in it.
It can be seen that the first term of the right side of the Eq.~\eqref{eq:actor-critic} is equivalent to Eq.~\eqref{eq:actor-critic-0}.
The critic $V_{\phi}(\cdot)$ is introduced to estimate the expected accumulated cost to reduce the variance.
Moreover, correspondingly the second term, bellman error term, is used to fit the introduced critic $V_{tg}(\cdot)$.
Traditional actor-critic algorithms~\citep{lillicrap2015continuous,schulman2015trust,schulman2017proximal,haarnoja2018soft, fujimoto2018addressing}, minimizes Eq.~(\ref{eq:actor-critic}) where updates the actor and the critic parameters alternatively as follows,
\begin{subequations}\label{eq:algorithm-actor-critic}
    \begin{align}
        \min_{\psi} J_{\text{actor}}(\psi, \phi) &:= \mathbf{E}_{s_0 \sim p_{0}, o_0 \sim \mathcal{E}} \left[V^{\pi}_{\phi}(o_0) \right],\label{eq:actor}\\
        \min_{\phi} J_{\text{critic}}(\psi, \phi) &:= \mathbf{E}_{s \sim d_{\psi}, o \sim \mathcal{E}} \left[ \left( V^{\pi}_{\phi}(o) - V^{\pi}_{tg}(o) \right)^2 \right].\label{eq:critic}    
    \end{align}
\end{subequations}The algorithm scheme \eqref{eq:algorithm-actor-critic} can be considered as applying the block coordinate gradient descent (BCGD) type  algorithm on (\ref{eq:actor-critic}).

It is worth noting that when solving (\ref{eq:critic}), the bootstrapping method is generally used~\citep{sutton2018reinforcement}.
Bootstrapping methods are not in fact instances of true gradient descent~\citep{barnard1993temporal}.
They take into account the effect of changing the parameter $\phi$ on the estimated value, but ignore its effect on the target.
They include only a part of the gradient and, accordingly, are called \underline{semi-gradient methods}.
Although semi-gradient (bootstrapping) methods do not converge as robustly as gradient methods, they do converge reliably in important cases such as the linear case~\citep{sutton2018reinforcement}.
Moreover, they offer important advantages that make them ofter clearly preferred.
One reason for this is that they typically enable significantly faster learning.
Another is that they enable learning to be continual and online, without waiting for the end of an episode.
This enables them to be used on continuing problems and provides computational advantages.

}}

\section{Joint Actor-Critic MARL Framework}
\label{sec:model}
Actor-critic type algorithms are popular for MARL, which combines the advantages of both policy gradient and value-based methods, often being more tractable and efficient in either high-dimensional or continuous action space. Therefore, many existing MARL methods~\citep{foerster2018counterfactual,wei2018multiagent,iqbal2019actor} are based on the actor-critic framework.

For general cooperative POSG, this paper first draw on the work of \citet{Yang2018ConvergentRL}, \citet{Yang2019ProvablyGC} and \citet{Hong2020ATF} to look at the actor-critic algorithm from a bi-level perspective and extend it to multi-agent scenarios,
\begin{equation}
\min_{\{\psi^i\}}\sum_{i=1}^n\mathbf{E} \left[V^{\boldsymbol{\pi},i}_{\phi^{i,\star}}(\boldsymbol{o_0}) \right]\ \ s.t.\ \{\phi^{i,\star}\}=\arg\min_{\{\phi^i\}}\sum_{i=1}^n \mathbf{E} \left[ \left( V^{\boldsymbol{\pi},i}_{\phi^i}(\boldsymbol{o}) - V^{\boldsymbol{\pi},i}_{tg}(\boldsymbol{o}) \right)^2 \right],\label{eq:multi-actor-critic-bi}
\end{equation}
where the expectations are taken on $s_0 \sim d_0, s \sim d_{\Psi}, \boldsymbol{o}_0,\boldsymbol{o} \sim \mathcal{E}$; $\{ \phi^i \}$ and $\{ \psi^i \}$ denotes the critic parameters and the actor parameters of all agents respectively, which are alternatively optimized; 
$V^{\boldsymbol{\pi},i}_{tg}(\boldsymbol{o})$ is defined as
$$
\mathbf{E}_{a \sim \boldsymbol{\pi}_{\Psi}, s'\sim\mathcal{P}, \boldsymbol{o}^{'}\sim\cal{E}}\left(\mathcal{C}_{s, \boldsymbol{a}}^{s',i}+ \gamma V^{\boldsymbol{\pi},i}_{\phi^i}\left(\boldsymbol{o}'\right) \right).
$$
Then, the above bi-level form can be simplified to a single-level problem with a penalty term
and obtain the standard optimization formulation for multi-agent cases:
\begin{equation}
    \begin{aligned}
        \min_{\{{\psi}^{i}\}, \{{\phi}^{i}\}}\  \left\{\mathcal{J}_{\text{ac}}(\{{\psi}^{i}\}, \{{\phi}^{i}\}) := 
        \alpha_1\sum_{i=1}^n \mathbf{E}\left[V^{\boldsymbol{\pi},i}_{\phi^i}(\boldsymbol{o_0}) \right] + \alpha_2\sum_{i=1}^n \mathbf{E} \left[ \left( V^{\boldsymbol{\pi},i}_{\phi^i}(\boldsymbol{o}) - V^{\boldsymbol{\pi},i}_{tg}(\boldsymbol{o}) \right)^2 \right]\right\}.\label{eq:multi-actor-critic}
    \end{aligned}
\end{equation}
For each agent $i$, the critic $V^{\boldsymbol{\pi},i}_{\phi^i}(\boldsymbol{o})$ is determined by all $\boldsymbol{\pi} = \prod_{i=1}^n\pi^i_{\psi^i}$ and the specific $\phi^i$ based on joint observation $\boldsymbol{o}$; 
$\pi^i$ represents the policy of each agent $i$.
It is worth noting that each agent's critic $V^{\boldsymbol{\pi},i}_{\phi^i}(\boldsymbol{o})$ defined in Equation~\eqref{eq:multi-actor-critic} are only based on their critic parameters $\phi^i$ but on all agents' actor parameters $\{\psi^i\}$.
The reason is that in the general cooperative POSG problem, each agent has an independent cost function $\mathcal{C}^i$ mentioned in Section 3.1 but this cost function is based on the policy of all agents.
Further, the objective function for each agent $i$ in actor and critic parameter updating phases can be defined as follows respectively:
\begin{equation}\label{eq:multi-actor-critic-agent-i}
\begin{aligned}
J_{\text{actor}}^i(\{ \psi^i\}, \phi^i) &:= \mathbf{E}\left[V^{\boldsymbol{\pi},i}_{\phi^i}(\boldsymbol{o_0}) \right],\\
J_{\text{critic}}^i(\{ \psi^i\}, \phi^i) &:= \mathbf{E} \left[ \left( V^{\boldsymbol{\pi},i}_{\phi^i}(\boldsymbol{o}) - V^{\boldsymbol{\pi},i}_{tg}(\boldsymbol{o}) \right)^2 \right].
\end{aligned}
\end{equation}
Therefore, the optimization problem (\ref{eq:multi-actor-critic}) can be reformulated into
\begin{equation}
\label{eq:framework-v2}
\min_{ {\mathbf{w}} } {\cal{J}}({\mathbf{w}}):= \overbrace{\alpha_1 \underbrace{\sum_{i=1}^n J_{\text{actor}}^i(\{ \psi^i\}, \phi^i) }_{{\cal{J}}_{\text{actor}} ({\mathbf{w}})} + \alpha_2 \underbrace{\sum_{i=1}^n  J_{\text{critic}}^i(\{ \psi^i\}, \phi^i)}_{{\cal{J}}_{\text{critic}} ({\mathbf{w}})} }^{\mathcal{J}_{\text{ac}}(\mathbf{w})} + {\cal{R}} ( {\mathbf{w}} ), 
\end{equation}
where ${\mathbf{w}}:= (\{ \psi^i\}, \{\phi^i\})$ denotes all the parameters;
$\alpha_1$ and $\alpha_2$ denote penalty factors in actor and critic, respectively;
${\cal{R}}(\cdot)$ denotes an \textit{optional} regularizer to prevent over-fitting or for sparsity and etc., and it is assumed with separable structure in the proposed framework.
Some works \citep{farebrother2018generalization,liu2019regularization} have shown that model regularization techniques have a significant impact on the performance of single-agent reinforcement learning models.
The scale of multi-agent reinforcement learning models is generally much larger than that of single-agent models. 
Thus the impact of regularization should not be ignored.
Therefore, the regularization term ${\cal{R}}(\cdot)$ is usually added to the multi-agent actor-critic objective function to make the proposed framework more generalizable.

When solving the general cooperative POSG problem, in order to learn the decentralized policies to achieve global collaboration, the following three types of techniques are mainly used~\citep{heider1944experimental,rasouli2017agreeing,de2019multi}.
(1) \textit{common knowledge} based algorithms~\citep{brafman2003learning,de2019multi} use the common knowledge protocol to achieve global collaboration by establishing common knowledge about other agents' actions or observations, based on the global common knowledge generator;
(2) \textit{explicit communication} based algorithms~\citep{peng2017multiagent,jiang2018learning} achieve certain consensus by explicit exchanging information in the decision phase. But the information exchanged need to be able to be understood by all agents, so the message generation module also needs to be shared between agents;
(3) \textit{implicit communication} algorithms~\citep{foerster2018counterfactual,iqbal2019actor} directly share individual action and/or observation between agents. At the same time, similar to the explicit communication-based algorithms, the model of each agent to process this global information must also be consistent with maintaining the same understanding of the environment for the achievement of global collaboration.

In order to propose an effective framework to be compatible with these collaboration skills, 
and considering that the core modules in these methods can be accessed globally, 
the flexible \textit{parameter sharing} mechanism is introduced to the proposed framework.
In addition, the parameter sharing mechanism also can be used for better algorithm scalability~\citep{yang2018mean}.
\textcolor{black}{Recent work~\citep{Terry2020RevisitingPS,Christianos2021ScalingMR,Grupen2021FairnessFC} also shows that parameter sharing plays a crucial role in improving algorithm performance.}
To achieve highly integration of the parameter sharing mechanism and the general cooperative POSG optimization objective, without loss of generalization, the parameters are reformulated and divided into shared and non-shared parts, denoted as $\mathbf{w}_{in}$ and $\mathbf{w}_{sh}$ respectively(``{\em{in}}" and ``{\em{sh}}" denote ``individual" and ``shared" parameters, respectively). 
Then, the problem (\ref{eq:framework-v2}) can be reformulated into a more general form, i.e., 
\begin{equation}
\min_{\mathbf{w}_{in}, \mathbf{w}_{sh}} \mathcal{J}(\mathbf{w}_{in}, \mathbf{w}_{sh}) =  \alpha_1 {\cal{J}}_{\text{actor}}(\mathbf{w}_{in}, \mathbf{w}_{sh}) + \alpha_2 {\cal{J}}_{\text{critic}}(\mathbf{w}_{in}, \mathbf{w}_{sh}) + \mathcal{R}(\mathbf{w}_{in}, \mathbf{w}_{sh}),\label{eq:framework-v3}
\end{equation}where the general form of ${\cal{J}}_{\text{actor}}, {\cal{J}}_{\text{critic}}$ and $\mathcal{R}(\mathbf{w}_{in}, \mathbf{w}_{sh})$ are as follows:
\begin{eqnarray}
{\cal{J}}_{\text{actor}}(\mathbf{w}_{in}, \mathbf{w}_{sh}) \!\!\!&=&\!\!\! \sum_{i=1}^n J^i_{\text{actor}} \left( \{ \psi_{in}^i\}, \phi_{in}^i, \psi_{sh}, \phi_{sh} \right),\nonumber\\
{\cal{J}}_{\text{critic}}(\mathbf{w}_{in}, \mathbf{w}_{sh}) \!\!\!&=&\!\!\! \sum_{i=1}^n J^i_{\text{critic}} \left( \{ \psi_{in}^i\}, \phi_{in}^i, \psi_{sh}, \phi_{sh} \right),\nonumber\\
\mathcal{R}(\mathbf{w}_{in}, \mathbf{w}_{sh}) \!\!\!&=&\!\!\! \sum_{i=1}^n \underbrace{\left( {{r_{\psi}^{se} ( \psi_{in}^i) + r_{\phi}^{se} (\phi_{in}^i) }}\right)}_{{\mathcal{R}}_{in} (\mathbf{w}_{in})} + \underbrace{r_{\psi}^{sh} (\psi_{sh}) + r_{\phi}^{sh} (\phi_{sh})}_{{\mathcal{R}}_{sh} (\mathbf{w}_{sh})},\nonumber
\end{eqnarray}
with $\mathbf{w}_{in}=(\{ \psi_{in}^i\}, \{ \phi_{in}^i\})$ and $\mathbf{w}_{sh} = (\psi_{sh}, \phi_{sh})$; the regularizer ${\mathcal{R}}(\cdot)$ is separable with $r_{\psi}^{se}, r_{\phi}^{se}, r_{\psi}^{sh}$ and $r_{\phi}^{sh}$ be related regularization functions.
Many existing MARL algorithms are equivalent to solve this general optimization formulation \eqref{eq:framework-v3}.
Some state-of-the-art algorithms, i.e. MADDPG~\citep{lowe2017multi}, COMA~\citep{foerster2018counterfactual}, MAAC~\citep{iqbal2019actor} and etc are summarized in the following Table \ref{table:Summary}, which include all elements $\mathbf{w}_{in}$, $\mathbf{w}_{sh}$, ${\cal{J}}_{\text{actor}}$, ${\cal{J}}_{\text{critic}}$ in framework (\ref{eq:framework-v3}) for them. More detailed derivation can be found in supplemental material.

\begin{table}[tb!]
\centering
    \resizebox{\textwidth}{!}
    {\begin{tabular}{|c|c|c|c|c|c|c|}
	\hline
    \multirow{2}{*}{Algorithm} & \multicolumn{2}{|c|}{$\mathbf{w}_{in}$} & \multicolumn{2}{c|}{$\mathbf{w}_{sh}$}& \multicolumn{2}{c|}{${\cal{J}}_{\text{actor}}$, ${\cal{J}}_{\text{critic}}$} \\
	\cline{2-7}
	 & $\psi_{in}^i$ & $\phi_{in}^i$ & $\psi_{sh}$ & $\phi_{sh}$ & $J^i_{\text{actor}} \left(\{ \{ \psi_{in}^i\}, \phi_{in}^i, \psi_{sh}, \phi_{sh} \right)$ & $J^i_{\text{critic}} \left( \{ \psi_{in}^i\}, \phi_{in}^i, \psi_{sh}, \phi_{sh} \right)$ \\
	\hline
	MADDPG & $\pi^i(o^i)$ & $\varnothing$ & $\varnothing$ & $Q(\boldsymbol{o},\boldsymbol{a})$ & $\mathbb{E}\left[Q^{\boldsymbol{\pi},i}_{\phi_{sh}}(\boldsymbol{o},\boldsymbol{a})\right]$ & $\mathbb{E} \left[ \left( Q^{\boldsymbol{\pi},i}_{\phi_{sh}}(\boldsymbol{o}, \boldsymbol{a}) - Q^{\boldsymbol{\pi},i}_{tg} \right)^2 \right]$ \\
	\hline
	COMA & \makecell{$\pi^i(o^i)$(GRUs\footnote{Gated recurrent units.})} & $\varnothing$ & $\varnothing$ & \makecell{$Q(\boldsymbol{o}, \boldsymbol{a})$ \\ $V(\boldsymbol{o})$(MLP\footnote{Multiple layer perceptron.})} & $\mathbb{E}\left[Q^{\boldsymbol{\pi},i}_{\phi_{sh}}(\boldsymbol{o},\boldsymbol{a})  - \mathcal{B}(\boldsymbol{o}, \boldsymbol{a^{\setminus i}})\right]$ & $\mathbb{E} \left[ \left( Q^{\boldsymbol{\pi},i}_{\phi_{sh}}(\boldsymbol{o}, \boldsymbol{a}) - Q^{\boldsymbol{\pi},i}_{tg} \right)^2 \right]$ \\
	\hline
	MAAC & \makecell{$\pi^i(o^i)$(MLP)} & $\zeta^i, \digamma^i$ & $\varnothing$ & $\left\{ V_h, W_h^{key}, W_h^{que} \right\}_{h=1}^{H}$ & $\mathbb{E}\left[Q^{\boldsymbol{\pi},i}_{\phi_{in}^i, \phi_{sh}}(\boldsymbol{o},\boldsymbol{a}) + \alpha \mathcal{H}(\cdot|\pi^i_{\psi_{in}^i}(o^i)) - \mathcal{B}(\boldsymbol{o}, \boldsymbol{a^{\setminus i}})\right]$ & $\mathbb{E} \left[ \left( Q^{\boldsymbol{\pi},i}_{\phi_{in}^i, \phi_{sh}}(\boldsymbol{o}, \boldsymbol{a}) - Q^{\boldsymbol{\pi},i}_{tg} \right)^2 \right]$ \\
	\hline
	\end{tabular}}
    \caption{Summary of MADDPG, COMA and MAAC algorithms.}
	\label{table:Summary}
\end{table}

Most MARL methods solve the actor-critic framework based on the block coordinate gradient descent (BCGD) type techniques,
whose standard procedure is composed of successive independent actor and critic steps, i.e.,
\begin{equation}\label{eq:iterative-scheme}
    \left\{\begin{array}{rrl}
            {\hbox{Critic-step:}}& \left(\begin{array}{c} \phi_{in}^i \\ \phi_{sh} \end{array} \right) \leftarrow \left(\begin{array}{c} \phi_{in}^i \\ \phi_{sh} \end{array} \right) - \gamma \left(\begin{array}{c} \nabla_{\phi_{in}^i} {\cal{J}}_{\text{critic}}(\mathbf{w}_{in}, \mathbf{w}_{sh})  \\ \nabla_{\phi_{sh}} {\cal{J}}_{\text{critic}}(\mathbf{w}_{in}, \mathbf{w}_{sh}) \end{array} \right);\\
            {\hbox{Actor-step:}}& \left(\begin{array}{c} \psi_{in}^i \\ \psi_{sh} \end{array} \right) \leftarrow \left(\begin{array}{c} \psi_{in}^i \\ \psi_{sh} \end{array} \right) - \gamma \left(\begin{array}{c} \nabla_{\psi_{in}^i} {\cal{J}}_{\text{actor}}(\mathbf{w}_{in}, \mathbf{w}_{sh})  \\ \nabla_{\psi_{sh}} {\cal{J}}_{\text{actor}}(\mathbf{w}_{in}, \mathbf{w}_{sh}) \end{array} \right),
            \end{array}\right. 
\end{equation}
while the actor parameters $(\left\{\psi_{in}^i\right\},\psi_{sh})$ and the critic parameters $(\left\{\phi_{in}^i\right\},\phi_{sh})$ are fixed in critic and actor steps respectively.
By the way, the three sate-of-the-art MARL algorithms mentioned above follow this iterative scheme (\ref{eq:iterative-scheme}), and the details are presented in supplemental material.
In this case, together with the existence of sharing parameters, most typical MARL algorithms are included in the CTDE algorithm framework.
These algorithms have to maintain a globally accessible shared module in order to handle the parameter sharing.
More important, when optimizing actor and critic parts separately, they ignored the influence between each other.
Although this separation optimization scheme simplifies the solution calculation, it also introduces serious bias.

\section{Flexible Fully-decentralized Approximate Actor-critic Framework}
\label{sec:algorithm}

Current decentralized multi-agent actor-critic algorithms~\citep{zhang2018networked,zhang2018fully,zhang2019distributed, suttle2019multi} also use BCGD-type procedure to optimize actor and critic of each agent.
For example, \citet{zhang2018fully} can be regarded as a distributed version of the COMA.
The core idea of \citet{zhang2019distributed} is the opposite of~\citet{zhang2018fully}, and it also uses the same procedure as the former.
In centralized training, the concentrative information collection and process might guarantee the global convergence to a certain degree~\citep{konda2000actor}.
However, in decentralized training, the hysteretic information exchange would cause error accumulation and an unstable problem in many practical deployments.
The standard procedure needs to be modified to a more rational and flexible form to achieve full decentralization, satisfying the demand for the mutual observation, reward assignment, and policy interaction between agents, and synchronize the optimization.
At the same time, the above algorithms generally assume that the local observations and policies of other agents are known when performing decentralized optimization. 
This constraint requires a large amount of inter-agent communication so that the above-mentioned decentralized algorithms cannot be extended to a large-scale multi-agent environment.

Based on the analysis above, the following issues need to be considered to propose a fully decentralized multi-agent actor-critic algorithm.
First, the algorithm can solve error accumulation and instability problems caused by the BCGD-type procedure of the current multi-agent actor-critic algorithms under decentralized training.
Second, it can work with fewer constraints. 
It does not require the precise policies of all other agents, thereby avoiding the high communication costs.
Third, it can be flexibly combined with most actor-critic algorithms and compatible with on-policy and off-policy techniques.

Therefore, the fully decentralized algorithm framework is proposed to solve the general formulation above (\ref{eq:framework-v3}).
Firstly, the consensus variables $\left\{\tilde{\mathbf{w}}_{sh}^i \right\}$ are introduced to help achieve the fully decentralized structure.
Recall the definition of ${\cal{J}}_{\text{actor}}(\mathbf{w}_{in}, {\mathbf{w}}_{sh})$, ${\cal{J}}_{\text{critic}}(\mathbf{w}_{in}, \mathbf{w}_{sh})$ and the regularizer $\mathcal{R}(\mathbf{w}_{in}, \mathbf{w}_{sh})$, the comprehensive formulation can be obtained as follows:
\begin{eqnarray}
\min_{\mathbf{w}_{in}, \left\{\tilde{\mathbf{w}}_{sh}^i \right\}, \mathbf{w}_{sh} } && \alpha_1 \sum_{i=1}^n J^i_{\text{actor}} \left( \mathbf{w}_{in}, \tilde{\mathbf{w}}_{sh}^i \right) + \alpha_2 \sum_{i=1}^n J^i_{\text{critic}} \left( \mathbf{w}_{in}, \tilde{\mathbf{w}}_{sh}^i \right) + \mathcal{R}(\mathbf{w}_{in}, \mathbf{w}_{sh}), \nn \\ 
{\hbox{s.t.}} && \mathbf{w}_{sh} = \tilde{\mathbf{w}}_{sh}^i,\ i = 1,\cdots,n.
\label{eq:framework-v4}
\end{eqnarray}
This can be considered as an equivalent reformulation of the general formulation (\ref{eq:framework-v3}). 
(\ref{eq:framework-v4}) is a typical linear constrained optimization problem while the consensus constraints only relate to the shared parameters for all $n$ agents.
$\left\{\tilde{\mathbf{w}}_{sh}^i\right\}$ are introduced to help communicating the shared parameters and each $\tilde{\mathbf{w}}_{sh}^i$ belongs to agent $i$ respectively.
As a result, the primal-dual hybrid gradient (PDHG) type method (or inexact alternating direction method of multipliers (ADMM) type method) is utilized  to solve (\ref{eq:framework-v4}) because the PDHG-type method naturally has decentralized computing architecture.
The augmented Lagrangian function is defined as follows,
\begin{eqnarray}
\mathcal{L} \left( \mathbf{w}_{in}, \left\{\tilde{\mathbf{w}}_{sh}^i\right\}, \mathbf{w}_{sh}, \left\{\boldsymbol{\lambda}_i \right\} \right) \!\!\!&=&\!\!\! \overbrace{\alpha_1 \sum_{i=1}^n J^i_{\text{actor}} \left( \mathbf{w}_{in}, \tilde{\mathbf{w}}_{sh}^i \right) + \alpha_2 \sum_{i=1}^n J^i_{\text{critic}} \left( \mathbf{w}_{in}, \tilde{\mathbf{w}}_{sh}^i \right)}^{\sum_{i=1}^n J_{ac}^i \left( \mathbf{w}_{in}, \tilde{\mathbf{w}}_{sh}^i \right)} + \mathcal{R}(\mathbf{w}_{in}, \mathbf{w}_{sh})\nonumber\\
&&\qquad - \sum_{i=1}^n \left\langle \boldsymbol{\lambda}_i, \mathbf{w}_{sh} - \tilde{\mathbf{w}}_{sh}^i \right\rangle + \frac{\beta}{2} \sum_{i=1}^n  \left\| \mathbf{w}_{sh} - \tilde{\mathbf{w}}_{sh}^i \right\|^2,
\end{eqnarray}where $\left\{\boldsymbol{\lambda}_i\right\}$ denote the Lagrangian dual variables concerning the consensus linear constraints with a unified penalty parameter $\beta$ (in order to express more clearly, a unified penalty parameter is used, and it can also be modified into separate and different parameters).
Motivated by the popular ADMM, the PDHG algorithm framework can be designed by alternatively calculate the primal variables $\mathbf{w}_{in}, \left\{\tilde{\mathbf{w}}_{sh}^i\right\}, \mathbf{w}_{sh}$ and the dual variables $\left\{\boldsymbol{\lambda}_i\right\}$.
Similar to the definition $\mathbf{w}_{in}=(\{ \psi_{in}^i\}, \{ \phi_{in}^i\})$ and $\mathbf{w}_{sh} = (\psi_{sh}, \phi_{sh})$, we have
$
\tilde{\mathbf{w}}_{sh}^i = (\tilde{\psi}_{sh}^i, \tilde{\phi}_{sh}^i).
$
The augmented Lagrangian $\mathcal{L} \left( \mathbf{w}_{in}, \left\{\tilde{\mathbf{w}}_{sh}^i\right\}, \mathbf{w}_{sh}, \left\{\boldsymbol{\lambda}_i \right\} \right)$ can be denoted equivalently as
\begin{equation}\label{eq:final-obj}
\mathcal{L} \left( \left\{\left(\phi_{in}^i,\tilde{\phi}_{sh}^i  \right)\right\}, \left\{\left( \psi_{in}^i,\tilde{\psi}_{sh}^i \right) \right\},  \left( \psi_{sh},\phi_{sh} \right), \left\{\boldsymbol{\lambda}_i \right\} \right).
\end{equation}
The classical ADMM framework works on this augmented Lagrangian function $\mathcal{L}$, and in each iteration it minimizes variable blocks $\left\{ \left(\phi_{in}^i,\tilde{\phi}_{sh}^i  \right) \right\}$, $\left\{ \left( \psi_{in}^i,\tilde{\psi}_{sh}^i \right) \right\}$ and $\left( \psi_{sh},\phi_{sh} \right)$ based on Gauss-Seidel scheme and further updates the Lagrangian multiplier $\left\{ \boldsymbol{\lambda}_i \right\}$.
Instead of minimizing the Lagrangian function directly, the gradient descent technique is employed to approximately updating primal variable blocks progressively, and as a result, the brief primal-dual hybrid gradient algorithm framework is proposed in the following calculation scheme \citep{Boyd2011,Chambolle2011}
\begin{subequations}\label{eq:algorithm-scheme}
\begin{align}
\left[\!\!\begin{array}{c} \phi_{in}^i\\ \tilde{\phi}_{sh}^i \end{array} \!\! \right] &\leftarrow \left[ \!\! \begin{array}{c} \phi_{in}^i \\ \tilde{\phi}_{sh}^i \end{array}\!\!\right] - \textcolor{black}{\beta_{\phi}}\left[ \!\! \begin{array}{c} \nabla_{\phi_{in}^i} \mathcal{L} \left( \mathbf{w}_{in}, \left\{\tilde{\mathbf{w}}_{sh}^i\right\}, \mathbf{w}_{sh}, \left\{\boldsymbol{\lambda}_i \right\} \right) \\ \nabla_{\tilde{\phi}_{sh}^i} \mathcal{L} \left( \mathbf{w}_{in}, \left\{\tilde{\mathbf{w}}_{sh}^i\right\}, \mathbf{w}_{sh}, \left\{\boldsymbol{\lambda}_i \right\} \right)
\end{array} \!\! \right],\quad i=1,\cdots,n; \label{eq:algorithm-scheme-phi} \\
\left[\!\!\begin{array}{c} \psi_{in}^i\\ \tilde{\psi}_{sh}^i \end{array} \!\! \right] &\leftarrow \left[ \!\! \begin{array}{c} \psi_{in}^i\\ \tilde{\psi}_{sh}^i \end{array}\!\!\right] - \textcolor{black}{\beta_{\psi}}\left[ \!\! \begin{array}{c} \nabla_{\psi_{in}^i} \mathcal{L} \left( \mathbf{w}_{in}^i, \left\{\tilde{\mathbf{w}}_{sh}^i\right\}, \mathbf{w}_{sh}, \left\{\boldsymbol{\lambda}_i \right\} \right) \\ \nabla_{\tilde{\psi}_{sh}^i} \mathcal{L} \left( \mathbf{w}_{in}, \left\{\tilde{\mathbf{w}}_{sh}^i\right\}, \mathbf{w}_{sh}, \left\{\boldsymbol{\lambda}_i \right\} \right)
\end{array} \!\! \right],\quad i=1,\cdots,n; \label{eq:algorithm-scheme-psi} \\
\left[\!\!\begin{array}{c} \psi_{sh}\\ \phi_{sh} \end{array} \!\! \right] &\leftarrow \left[ \!\! \begin{array}{c} (1/n)\sum_{i=1}^n \tilde{\psi}_{sh}^i \\ (1/n)\sum_{i=1}^n \tilde{\phi}_{sh}^i \end{array}\!\!\right]; \label{eq:algorithm-scheme-sh} \\
            \boldsymbol{\lambda}_i &\leftarrow \boldsymbol{\lambda}_i - \beta \left( \mathbf{w}_{sh} - \tilde{\mathbf{w}}_{sh}^i \right),\quad i=1,\cdots,n.\label{eq:algorithm-scheme-lambda}
\end{align}
\end{subequations}
The update step (\ref{eq:algorithm-scheme-phi}) calculates the critic parameters $\phi_{in}^i$ together with the splitting shared critic parameters $\tilde{\phi}_{sh}^i$, and (\ref{eq:algorithm-scheme-psi}) updates the actor parameters $\psi_{in}^i$ with the splitting shared actor parameters $\tilde{\psi}_{sh}^i$.
(\ref{eq:algorithm-scheme-sh}) aims to update the shared parameters $(\phi_{sh},\psi_{sh})$ by averaging all the splitting shared critic and actor respectively.
It is evident that each agent $i$ computes its actor and critic parameters and its splitting shared parameters.
The overall shared parameters are updated through (\ref{eq:algorithm-scheme-sh}) and broadcast to all agents.
(\ref{eq:algorithm-scheme}) can be decentralized implemented and full details can be found in Algorithm \ref{algorithm-1}.

\begin{algorithm}
	\caption{{\bf{The F2A2 Algorithm Framework}}.}\label{algorithm-1}
	\begin{algorithmic}[1]
		\STATE Initialize independent parameters $\mathbf{w}_{in}$, shared parameters $\tilde{\mathbf{w}}_{sh}^i$.
		Set consensus parameters $\tilde{\mathbf{w}}_s$, dual parameters $\{\boldsymbol{\lambda}_i\}$ and the unified penalty parameter $\beta$ to zero.
		For each agent $i$, initialize all other policies estimation parameters $\psi_{j,in}^i$ for $j=1,\cdots,i-1,i+1,\cdots,n$.
		\FOR {each episode}
		\FOR {$t=1$ to \textit{the pre-defined max length of the episode}}
		\STATE Each agent observes initial observation $o_i^t$;
		\STATE For agent $i$, select action $a_i^t$ by current policy $\pi_i$;
		\STATE Execute $\boldsymbol{a}^t=(a_1^t, \cdots, a_N^t)$ and get the cost $c_i^{t+1}$ and next observation $o^{t+1}_i$; 
		\STATE Store $(\boldsymbol{o}^t, \boldsymbol{a}^t, \boldsymbol{c}^{t+1}, \boldsymbol{o}^{t+1})$ to replay buffer $\mathcal{D}$;
		\FOR {each agent $i$}
		\STATE Sample a batch tuple $\{(\boldsymbol{o}^k, \boldsymbol{a}^k, \boldsymbol{c}^{k+1}, \boldsymbol{o}^{k+1})\}$ from replay buffer $\mathcal{D}$;
		\STATE For each tuple sample the next action $\boldsymbol{a}^{k+1}$ according to the estimated policies $\tilde{\boldsymbol{\pi}}$ of all other agents;
		\STATE {\bf{Critic-step}}: Update independent critic parameters and consensus critic parameters $\left\{ \phi_{in}^i,\tilde{\phi}_{sh}^i \right\}$:
        $$
        \left[\!\!\begin{array}{c} \phi_{in}^i\\ \tilde{\phi}_{sh}^i \end{array} \!\! \right] \leftarrow \left[ \!\! \begin{array}{c} \phi_{in}^i \\ \tilde{\phi}_{sh}^i \end{array}\!\!\right] - \textcolor{black}{\beta_{\phi}}\left[ \!\! \begin{array}{c} \nabla_{\phi_{in}^i} \mathcal{L} \left( \mathbf{w}_{in}, \left\{\tilde{\mathbf{w}}_{sh}^i\right\}, \mathbf{w}_{sh}, \left\{\boldsymbol{\lambda}_i \right\} \right) \\ \nabla_{\tilde{\phi}_{sh}^i} \mathcal{L} \left( \mathbf{w}_{in}, \left\{\tilde{\mathbf{w}}_{sh}^i\right\}, \mathbf{w}_{sh}, \left\{\boldsymbol{\lambda}_i \right\} \right) \end{array} \!\! \right],
        $$
		and use the formula in Prop.~\ref{the:ddpg} or its variants to calculate the sampled off-policy policy gradient and $\alpha$ is the step-size of gradient descent method.
        \STATE {\bf{Actor-step}}: Update independent actor parameters and consensus actor parameters $\left\{ \psi_{in}^i,\tilde{\psi}_{sh}^i \right\}$ by minimizing the loss:
        $$
        \left[\!\!\begin{array}{c} \psi_{in}^i\\ \tilde{\psi}_{sh}^i \end{array} \!\! \right] \leftarrow \left[ \!\! \begin{array}{c} \psi_{in}^i\\ \tilde{\psi}_{sh}^i \end{array}\!\!\right] - \textcolor{black}{\beta_{\psi}}\left[ \!\! \begin{array}{c} \nabla_{\psi_{in}^i} \mathcal{L} \left( \mathbf{w}_{in}, \left\{\tilde{\mathbf{w}}_{sh}^i\right\}, \mathbf{w}_{sh}, \left\{\boldsymbol{\lambda}_i \right\} \right) \\ \nabla_{\tilde{\psi}_{sh}^i} \mathcal{L} \left( \mathbf{w}_{in}, \left\{\tilde{\mathbf{w}}_{sh}^i\right\}, \mathbf{w}_{sh}, \left\{\boldsymbol{\lambda}_i \right\} \right)
\end{array} \!\! \right],
        $$
		where use the recent obtained $\left\{ \phi_{in}^i,\tilde{\phi}_{sh}^i \right\}$ into critic-step;
		and use the formula in Prop.~\ref{the:ddpg} or its variants to calculate the sampled off-policy policy gradient;
		and $\alpha$ is the step-size of gradient descent method.
		\STATE {\bf{Consensus-step}}: Eq.~\ref{eq:algorithm-scheme} can be implemented in a decentralized manner. Specifically, agent $i$ broadcast the recent obtained consensus parameter $\tilde{\phi}_{sh}^i, \tilde{\psi}_{sh}^i$ and receive all others recent obtained consensus parameter $\tilde{\phi}_{sh}^j, \tilde{\psi}_{sh}^j$ for all $j \neq i$. Then update shared parameter $\mathbf{w}_{sh}$:
	    $$ \left[\!\!\begin{array}{c} \psi_{sh}\\ \phi_{sh} \end{array} \!\! \right] \leftarrow \left[ \!\! \begin{array}{c} (1/n)\sum_{i=1}^n \tilde{\psi}_{sh}^i \\ (1/n)\sum_{i=1}^n \tilde{\phi}_{sh}^i \end{array}\!\!\right],$$
		\STATE and update the dual multiplier parameters $\left\{ \boldsymbol{\lambda}_i \right\}$:
		$
		\boldsymbol{\lambda}_i \leftarrow \boldsymbol{\lambda}_i - \beta \left( \mathbf{w}_{sh} - \tilde{\mathbf{w}}_{sh}^i \right)
		$
		\IF{$\text{t}\;\textbf{mod}\;\textit{the update frequency of the policy estimation model} == 0$} 
		    \STATE Sample a batch tuple $\{(\boldsymbol{o}^k, \boldsymbol{a}^k, \boldsymbol{c}^{k+1}, \boldsymbol{o}^{k+1})\}$ from replay buffer $\mathcal{D}$;
		    \STATE Update the estimated policies $\tilde{\boldsymbol{\pi}}$ of all others by supervised learning.
		\ENDIF
		\ENDFOR
		\ENDFOR
		\ENDFOR
	\end{algorithmic}
	\normalsize
\end{algorithm}
Noting that if no shared parameters is used in formulation (\ref{eq:framework-v3}) and (\ref{eq:framework-v4}), then the whole problem is degenerated to a simplified version, i.e.,
\begin{equation}
\min_{\mathbf{w}_{in}} \mathcal{J}(\mathbf{w}_{in}) =  \alpha_1 {\cal{J}}_{\text{actor}}(\mathbf{w}_{in}) + \alpha_2 {\cal{J}}_{\text{critic}}(\mathbf{w}_{in}) + \mathcal{R}_{in} (\mathbf{w}_{in}).\label{eq:framework-v4-simple}
\end{equation}
The scheme (\ref{eq:algorithm-scheme}) with be simplified into
\begin{subequations}\label{eq:algorithm-scheme-simple}
\begin{align}
\phi_{in}^i &\leftarrow \phi_{in}^i - \nabla_{\phi_{in}^i} \mathcal{J}(\mathbf{w}_{in}),\quad i=1,\cdots,n; \label{eq:algorithm-scheme-phi-simple} \\
\psi_{in}^i &\leftarrow \psi_{in}^i - \nabla_{\psi_{in}^i} \mathcal{J}(\mathbf{w}_{in}),\quad i=1,\cdots,n. \label{eq:algorithm-scheme-psi-simple}
\end{align}
\end{subequations}
(\ref{eq:algorithm-scheme-simple}) can be considered to minimize (\ref{eq:framework-v4-simple}) concerning  the critic parameters $\left\{\phi_{in}^i\right\}$ and actor parameters $\left\{ \psi_{in}^i \right\}$ alternatively.
This basic scheme is different from traditional multi-agent actor-critic algorithms (e.g., \ref{eq:algorithm-actor-critic}), while the BCGD-type scheme is employed on the jointly MARL framework (\ref{eq:framework-v4-simple}).
%

\textcolor{black}{
\paragraph{Remark 2.} Although some previous value-based multi-agent reinforcement learning algorithms~\citep{Lauer2000AnAF,Matignon2007HystereticQ,Panait2008TheoreticalAO,Arslan2017DecentralizedQF} do not pass any messages between agents during the learning process and belong to pure decentralization, it is still reasonable to name the proposed framework as ``fully decentralized" similar with \citet{zhang2018fully}.
These messages transmit between agents in the learning procedure are not uniformly collected and distributed by a centralized controller, but each agent sends and receives them individually.
\paragraph{Remark 3.} There are two key formulations in this paper, i.e., Equation~\ref{eq:multi-actor-critic-bi} and \ref{eq:multi-actor-critic}.
As discussed above, problem (\ref{eq:multi-actor-critic-bi}) is the core problem that actor-critic type methods aim to solve, and problem~(\ref{eq:multi-actor-critic}) can be considered as an approximation version of problem~(\ref{eq:multi-actor-critic-bi}).
Problem~(\ref{eq:multi-actor-critic}) has the separable structure which motivated us to design the proposed Algorithm 1.
The optimal solutions of problem~(\ref{eq:multi-actor-critic-bi}) and (\ref{eq:multi-actor-critic}) seems to be different, however the optimal solution set of problem~(\ref{eq:multi-actor-critic-bi}) is more difficult to guarantee because of its bi-level programming structure.
Very few works have discussed the theoretical analysis of the RL algorithm to solve the bi-level formulation, while nearly all of them focused on the single-agent case.
For instance, \citet{Yang2018ConvergentRL}, \citet{Yang2019ProvablyGC} and \citet{Hong2020ATF} consider the single-agent actor-critic algorithm as a specific solution to the corresponding bi-level problem, and give some convergence results on linear function approximation case.
As for the multi-agent case, \citet{zhang2018fully} proposes two decentralized actor-critic algorithms with function approximation, and convergence analyses of the algorithms are provided when the value functions are approximated within the class of linear functions.
\citet{zhang2018fully} still could not guarantee the convergence to the stationary point of the bi-level programming problem~(\ref{eq:multi-actor-critic-bi}).
For the stability of the training, a regularization term is added to the problem~(\ref{eq:multi-actor-critic}) to make the problem into the more general problem~(\ref{eq:framework-v3}), which can be equivalent converted into problem~(\ref{eq:framework-v4}).
In this paper, some convergence results of the proposed Algorithm 1 can be further established.
Expressly, Algorithm 1 can be incorporated into the algorithm framework of~\citet{hong2016convergence} for solving the problem (4) (5.1 in the modified version).
If Algorithm 1 can satisfy \cite[Assumption A]{hong2016convergence} then theoretical results similar to can \cite[Theorem 2.4]{hong2016convergence} be obtained, i.e., \textit{Any limit point $\{ \{\phi^{i,*}_{in}\},\{\psi^{i,*}_{in}\},\{\phi^{*}_{sh}\},\{\psi^{*}_{sh}\}, \{\boldsymbol{\lambda}^{i,*}\} \}$ of the sequence $\{ \{\phi^{i,k}_{in}\},\{\psi^{i,k}_{in}\},\{\phi^{k}_{sh}\},\{\psi^{k}_{sh}\}, \{\boldsymbol{\lambda}^{i,k}\} \}$ which is obtained from Algorithm 1 is a stationary point of problem~(\ref{eq:framework-v4}).
Because problem~(\ref{eq:framework-v4}) is equivalent with problem~(\ref{eq:framework-v3}), we have that $\{ \{\phi^{i,*}_{in}\},\{\psi^{i,*}_{in}\},\{\phi^{*}_{sh}\},\{\psi^{*}_{sh}\}, \{\boldsymbol{\lambda}^{i,*}\} \}$ is a stationary point of problem~(\ref{eq:framework-v3}).}
The policy or value approximation functions are not limited to the linear case, and the local convergence of the obtained sequence for general cases in Algorithm 1 can be proved.
To emphasize, the convergence to the stationary point of the bi-level programming formulation problem~(\ref{eq:multi-actor-critic-bi}) could not be guaranteed.
}

\subsection{Instantiation Algorithms}
In the proposed PDHG algorithm framework, the gradient of the augmented Lagrangian function $\mathcal{L}$ concerning the primal variables (see Eq.~\ref{eq:algorithm-scheme-phi} and \ref{eq:algorithm-scheme-psi}) need to be calculated to optimize the actors and critics of each agent.
The proposed optimization objective functions are different from traditional centralized and decentralized MARL algorithms. 
Therefore, the related results in these algorithms cannot be directly used.
This section will give the detailed form of the gradient of the Lagrangian function $\mathcal{L}$.
In Section 4, it can be seen that the proposed algorithm framework has good flexibility, so the proposed framework could introduce various single-agent reinforcement learning algorithms as the backbone.

Specifically, one on-policy algorithm, i.e. COMA~\citep{foerster2018counterfactual}, and three state-of-the-art single-agent off-policy actor-critic algorithms, i.e. DDPG~\citep{lillicrap2015continuous}, TD3~\citep{fujimoto2018addressing} and SAC~\citep{haarnoja2018soft} are incorporated into the proposed decentralized framework.

\subsubsection{On-policy F2A2 Instantiation Algorithms}

For the on-policy methods, the current state-of-the-art algorithm COMA is chosen as the backbone algorithm.
The corresponding proposition for the COMA algorithm is proposed to calculate the on-policy joint gradient.

\begin{tcolorbox}
\begin{prop}
        [On-Policy COMA-Based Joint Gradient]\label{the:coma}
        \textcolor{black}{$d_{\boldsymbol{\pi}}$ represents the distribution of the state-occupancy measure of policy $\boldsymbol{\pi}$}, and $\delta$ is the TD(0)-error.
        The counterfactual baseline $\mathcal{B}(\boldsymbol{o}, \boldsymbol{a}^{\setminus i})$ also be introduced from COMA.
        So the gradient of $J_{ac}^i \left( \mathbf{w}_{in}, \tilde{\mathbf{w}}_{sh}^i \right)$ is
    	\begin{equation}
    	    \begin{aligned}
    	        &\nabla_{\psi_{in}^i} J_{ac}^i \left( \mathbf{w}_{in}, \tilde{\mathbf{w}}_{sh}^i \right) = \\
    	        &\quad\quad\mathbf{E}_{s \sim d_{\boldsymbol{\pi}}, \boldsymbol{o} \sim \mathcal{E},\boldsymbol{a}\sim\boldsymbol{\pi}}\Bigg[\alpha_1\nabla_{\psi_{in}^i}\log\pi_{\psi_{in}^i}^i(a^i|o^i)\left(Q^{\boldsymbol{\pi},i}_{\tilde{\phi}^i_{sh}}(\boldsymbol{o},\boldsymbol{a}) - \mathcal{B}(\boldsymbol{o}, \boldsymbol{a}^{\setminus i})+\frac{\alpha_2}{\alpha_1}\delta^2\right)\Bigg], 
    	        \\
    	        &\nabla_{\tilde{\phi}^i_{sh}} J_{ac}^i \left( \mathbf{w}_{in}, \tilde{\mathbf{w}}_{sh}^i \right) 
    	        =\mathbf{E}_{\substack{s\sim d_{\boldsymbol{\pi}}, \boldsymbol{o}\sim\mathcal{E},\boldsymbol{a}\sim\boldsymbol{\pi}}}\left[ (\alpha_1+2\alpha_2\delta)\nabla_{\tilde{\phi}^i_{sh}}Q^{\boldsymbol{\pi},i}_{\tilde{\phi}^i_{sh}}(\boldsymbol{o}, \boldsymbol{a}) \right]. \nonumber
    	    \end{aligned}
    	\end{equation}
\end{prop}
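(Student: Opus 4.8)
The starting point is the decomposition $J_{ac}^i(\mathbf{w}_{in},\tilde{\mathbf{w}}_{sh}^i) = \alpha_1 J^i_{\text{actor}}(\mathbf{w}_{in},\tilde{\mathbf{w}}_{sh}^i) + \alpha_2 J^i_{\text{critic}}(\mathbf{w}_{in},\tilde{\mathbf{w}}_{sh}^i)$ together with the COMA instantiation from Table~\ref{table:Summary}: $J^i_{\text{actor}}$ is the expected return estimated through the counterfactual form $Q^{\boldsymbol{\pi},i}_{\tilde{\phi}^i_{sh}}(\boldsymbol{o},\boldsymbol{a}) - \mathcal{B}(\boldsymbol{o},\boldsymbol{a}^{\setminus i})$, and $J^i_{\text{critic}} = \mathbf{E}[\delta^2]$ with the TD(0)-error $\delta = Q^{\boldsymbol{\pi},i}_{\tilde{\phi}^i_{sh}}(\boldsymbol{o},\boldsymbol{a}) - Q^{\boldsymbol{\pi},i}_{tg}$. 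I will differentiate each of the two summands with respect to $\psi_{in}^i$ and to $\tilde{\phi}^i_{sh}$ in turn and then recombine with the weights $\alpha_1,\alpha_2$. Because COMA is on-policy, the sampling distribution is exactly $d_{\boldsymbol{\pi}}$, so, unlike in Proposition~\ref{the:ddpg}, no importance-sampling correction enters.

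For $\nabla_{\psi_{in}^i}$, the plan is to apply the policy gradient theorem of \cite{sutton2000policy} in its counterfactual (COMA) version: since the joint policy factorizes as $\boldsymbol{\pi}_{\Psi} = \prod_j \pi^j_{\psi^j}$, only the score function $\nabla_{\psi_{in}^i}\log\pi^i_{\psi_{in}^i}(a^i|o^i)$ survives, and the dependence of the state-visitation measure on $\psi_{in}^i$ is absorbed into $d_{\boldsymbol{\pi}}$ exactly as in the single-agent case; this gives $\nabla_{\psi_{in}^i}J^i_{\text{actor}} = \mathbf{E}[\nabla_{\psi_{in}^i}\log\pi^i_{\psi_{in}^i}(a^i|o^i)(Q^{\boldsymbol{\pi},i}_{\tilde{\phi}^i_{sh}}(\boldsymbol{o},\boldsymbol{a}) - \mathcal{B}(\boldsymbol{o},\boldsymbol{a}^{\setminus i}))]$, the baseline being admissible because it is independent of $a^i$, so that $\mathbf{E}_{a^i\sim\pi^i}[\nabla_{\psi_{in}^i}\log\pi^i_{\psi_{in}^i}(a^i|o^i)] = 0$. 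Treating the bootstrapped target $Q^{\boldsymbol{\pi},i}_{tg}$ as fixed (semi-gradient), the same score-function argument applied to the pseudo-objective $\mathbf{E}[\delta^2]$ yields $\nabla_{\psi_{in}^i}J^i_{\text{critic}} = \mathbf{E}[\nabla_{\psi_{in}^i}\log\pi^i_{\psi_{in}^i}(a^i|o^i)\,\delta^2]$. Forming $\alpha_1\nabla_{\psi_{in}^i}J^i_{\text{actor}} + \alpha_2\nabla_{\psi_{in}^i}J^i_{\text{critic}}$ and pulling out $\alpha_1\nabla_{\psi_{in}^i}\log\pi^i_{\psi_{in}^i}(a^i|o^i)$ produces the first identity, with the $\frac{\alpha_2}{\alpha_1}\delta^2$ term being precisely the reweighted critic contribution.

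For $\nabla_{\tilde{\phi}^i_{sh}}$, I would differentiate directly: the critic parameters enter $J^i_{\text{actor}}$ only through $Q^{\boldsymbol{\pi},i}_{\tilde{\phi}^i_{sh}}$ (the baseline $\mathcal{B}$ treated as a control variate / stop-gradient, consistently with $\mathbf{E}_{\boldsymbol{a}\sim\boldsymbol{\pi}}[\mathcal{B}(\boldsymbol{o},\boldsymbol{a}^{\setminus i})] = \mathbf{E}_{\boldsymbol{a}\sim\boldsymbol{\pi}}[Q^{\boldsymbol{\pi},i}_{\tilde{\phi}^i_{sh}}(\boldsymbol{o},\boldsymbol{a})]$), giving $\nabla_{\tilde{\phi}^i_{sh}}J^i_{\text{actor}} = \mathbf{E}[\nabla_{\tilde{\phi}^i_{sh}}Q^{\boldsymbol{\pi},i}_{\tilde{\phi}^i_{sh}}(\boldsymbol{o},\boldsymbol{a})]$; and by the chain rule with the target held fixed, $\nabla_{\tilde{\phi}^i_{sh}}J^i_{\text{critic}} = 2\mathbf{E}[\delta\,\nabla_{\tilde{\phi}^i_{sh}}Q^{\boldsymbol{\pi},i}_{\tilde{\phi}^i_{sh}}(\boldsymbol{o},\boldsymbol{a})]$. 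Adding $\alpha_1$ and $\alpha_2$ times these yields $\mathbf{E}[(\alpha_1 + 2\alpha_2\delta)\nabla_{\tilde{\phi}^i_{sh}}Q^{\boldsymbol{\pi},i}_{\tilde{\phi}^i_{sh}}(\boldsymbol{o},\boldsymbol{a})]$, the second identity.

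The main obstacle is not the calculus but making the two tacit conventions precise and consistent with the rest of the paper: (i) the semi-gradient treatment of the bootstrapped target $Q^{\boldsymbol{\pi},i}_{tg}$ — the standard actor–critic convention, which is why these equalities should really be read at the approximation level indicated by the ``$\simeq$'' of Proposition~\ref{the:ddpg}; and (ii) the role of the counterfactual baseline, which must be verified to contribute zero bias to the $\psi$-gradient (via $\sum_{a^i}\pi^i_{\psi_{in}^i}(a^i|o^i)\nabla_{\psi_{in}^i}\log\pi^i_{\psi_{in}^i}(a^i|o^i) = \nabla_{\psi_{in}^i}\sum_{a^i}\pi^i_{\psi_{in}^i}(a^i|o^i) = 0$) while its critic-parameter dependence is absorbed by the stop-gradient. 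Once these are fixed, the two displayed formulas follow by assembling the pieces above.
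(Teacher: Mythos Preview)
Your proposal is correct and follows essentially the same route as the paper's proof: decompose $J_{ac}^i=\alpha_1 J^i_{\text{actor}}+\alpha_2 J^i_{\text{critic}}$, apply the score-function identity to each piece for the $\psi_{in}^i$-gradient, differentiate $Q$ directly for the $\tilde{\phi}^i_{sh}$-gradient, and recombine. If anything, your treatment is more careful than the paper's: the paper silently drops $\nabla_{\tilde{\phi}^i_{sh}}\mathcal{B}$ and the semi-gradient target without comment, whereas you correctly flag both the stop-gradient convention on the baseline and the fixed-target convention as the tacit assumptions that make the displayed identities hold.
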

\end{tcolorbox}

For other single-agent on-policy actor-critic algorithms need to be incorporated, we just need to replace the $J^i_{\text{actor}}$ and $J^i_{\text{critic}}$ part in $J_{ac}^i \left( \mathbf{w}_{in}, \tilde{\mathbf{w}}_{sh}^i \right)$ with the corresponding form and derive their corresponding on-policy joint gradients. 
The specific forms of F2A2-COMA and the proof of Proposition \ref{the:coma} are given in supplemental material.
With the gradient calculated above, below, the F2A2-COMA algorithm is formally proposed.

\paragraph{F2A2-COMA.} COMA (Counterfactual Multi-Agent Policy Gradient) method learns a centralized critic with a \textit{counterfactual baseline} which is inspired by \textit{difference rewards} to solve the \textit{multi-agent credit assignment} problem.
The COMA algorithm is introduced into the F2A2 framework, and the F2A2-COMA algorithm is then proposed accordingly.
F2A2-COMA focuses on settings with discrete actions but can be easily extended to continuous action spaces by estimating counterfactual baseline with Monte Carlo samples or using functional forms that render it analytical, e.g., Gaussian policies and critic.
In the F2A2-COMA algorithm, all agents share a centralized counterfactual baseline function, but the policy functions are independent of each other.
All the other settings are the same as the COMA algorithm.

\subsubsection{Off-policy F2A2 Instantiation Algorithms}

The following proposition for off-policy methods is firstly proposed since the above single-agent off-policy algorithms are either based on or related to the DDPG algorithm.

\begin{tcolorbox}
{\color{black}{
\begin{prop}
        [Off-Policy DDPG-Based Joint Gradient]\label{the:ddpg}
        $\boldsymbol{\pi}_0$ represents the data collection policy sampled from experience replay buffer, \textcolor{black}{$d_0$ represents the distribution of the state-occupancy measure of policy $\boldsymbol{\pi}_0$}, and $\delta$ is the TD(0)-error.
        $\epsilon,\alpha_1,\alpha_2$ are hyperparameters.
        $\psi_i:=\{\psi_{in}^i,{\tilde{\psi}}_{sh}^i\}$, $\phi_i:=\{\phi_{in}^i,{\tilde{\phi}}_{sh}^i\}$,
        the gradients of $J_{ac}^i \left( \mathbf{w}_{in}, \tilde{\mathbf{w}}_{sh}^i \right)$ w.r.t. $\{\psi_{in}^i,{\tilde{\psi}}_{sh}^i\}$ and $\{\phi_{in}^i,{\tilde{\phi}}_{sh}^i\}$ are
    	\begin{equation}
    		\begin{aligned}
    		    &\nabla_{\psi_{in}^i} J_{ac}^i \left( \mathbf{w}_{in}, \tilde{\mathbf{w}}_{sh}^i \right) \\
    	       & = 
    	        \mathbf{E}_{s \sim d_0, \boldsymbol{o} \sim \mathcal{E},\boldsymbol{a}\sim\boldsymbol{\pi}}\Bigg[(\alpha_1+2\alpha_2\delta \left(\frac{\pi^i_{0}(a^i|o^i)}{\pi^i_{\psi_{in}^i}(a^i|o^i)}\right)\nabla_{\psi_{in}^i}\pi^i_{\psi_{in}^i}(a^i|o^i)\nabla_{a^i} Q^{\boldsymbol{\pi},i}_{\tilde{\phi}^i_{sh}}(\boldsymbol{o},\boldsymbol{a})\Bigg],\\
    	        &\nabla_{\tilde{\phi}^i_{sh}} J_{ac}^i \left( \mathbf{w}_{in}, \tilde{\mathbf{w}}_{sh}^i \right) \\
	           &= \mathbf{E}_{\substack{s\sim d_{0}, \boldsymbol{o}\sim\mathcal{E},\boldsymbol{a}\sim\boldsymbol{\pi_0}}}\left[ \left(\alpha_1 \left(\frac{\pi^i_{\psi_{in}^i}(a^i|o^i)}{\pi^i_{0}(a^i|o^i)}\right)+2\alpha_2\delta\right)\nabla_{\tilde{\phi}^i_{sh}}Q^{\boldsymbol{\pi},i}_{\tilde{\phi}^i_{sh}}(\boldsymbol{o}, \boldsymbol{a}) \right]+\\
	            &\qquad\qquad\mathbf{E}_{\substack{s\sim d_{0}, \boldsymbol{o}\sim\mathcal{E},\boldsymbol{a}\sim\boldsymbol{\pi}}}\left[ \left(\alpha_1+2\alpha_2\delta \left(\frac{\pi^i_{0}(a^i|o^i)}{\pi^i_{\psi_{in}^i}(a^i|o^i)}\right)\right)\nabla_{\tilde{\phi}^i_{sh}}Q^{\boldsymbol{\pi},i}_{\tilde{\phi}^i_{sh}}(\boldsymbol{o}, \boldsymbol{a}) \right].
	            \nonumber
    		\end{aligned}
    	\end{equation}
    \end{prop}
}}
\end{tcolorbox}

The $J^i_{\text{actor}}$ and $J^i_{\text{critic}}$ part in $J_{ac}^i \left( \mathbf{w}_{in}, \tilde{\mathbf{w}}_{sh}^i \right)$ can be replaced with the corresponding form of the single-agent off-policy actor-critic algorithm need to be incorporated, and their corresponding joint gradients can be derived accordingly. 
The specific forms of F2A2-DDPG, F2A2-TD3, and F2A2-SAC are shown in Figure~\ref{fig:f2a2-eq} in the supplementary material.
The proof of Proposition \ref{the:ddpg} and its variants (of F2A2-TD3 and F2A2-SAC) are given in supplemental material.
With the gradient calculated above, below, three off-policy F2A2 instantiation algorithms are proposed formally.

\paragraph{F2A2-DDPG.}
The DDPG algorithm is introduced into the F2A2 framework, and the F2A2-DDPG algorithm is proposed accordingly. 
In F2A2-DDPG, all agents share a centralized value function, but the policy functions are independent. 
At the same time, considering that the simulated environments in this paper are all designed with discrete action space, DDPG cannot directly deal with the above situation. 
So this paper learns from the ideas of~\citet{lowe2017multi}, rather than using policies that deterministically output an action, policies that produce differentiable samples through a Gumbel-Softmax distribution~\citep{jang2017categorical} is then used.

\paragraph{F2A2-TD3.}
While DDPG can achieve excellent performance sometimes, it is frequently brittle concerning hyperparameters and other kinds of tuning. For example, a standard failure mode for DDPG is that the learned Q-function begins to dramatically overestimate Q-values, leading to policy breaking because it exploits the errors in the Q-function. 
TD3 (Twin Delayed DDPG) is an algorithm that addresses this issue by introducing three critical tricks, clipped double Q-learning, delayed policy updates, and target policy smoothing.
In the F2A2-TD3 algorithm, all agent's two centralized Q-value functions share each other, and the policy function is independent.
Each agent has a different policy update frequency and policy smoothing noise for better exploration.
Since TD3 cannot be applied to discrete action space, the same approach as F2A2-DDPG is adopted to modify the TD3 algorithm.

\paragraph{F2A2-SAC.}
SAC (Soft Actor Critic) is an algorithm that optimizes a stochastic policy in an off-policy way, forming a bridge between stochastic policy optimization and DDPG-style approaches. 
It is not a direct successor to TD3 (having been published roughly concurrently). 
However, it incorporates the clipped double-Q trick, and due to the inherent stochasticity of the policy in SAC, it also winds up benefiting from something like target policy smoothing.
Same as F2A2-TD3, all agents have the same centralized Q-value function and independent policy function.

\textcolor{black}{
\paragraph{Remark 4.}
In the practical implementation, we use the truncated importance sampling ratio inspired by~\citet{munos2016safe} for all off-policy instantiation algorithms (F2A2-DDPG, F2A2-TD3, F2A2-SAC) to stabilize the training process, i.e.,
$$
\left(\frac{\pi^i_{\psi_{in}^i}(a^i|o^i)}{\pi^i_{0}(a^i|o^i)}\right) \rightarrow \min\left(\epsilon, \frac{\pi^i_{\psi_{in}^i}(a^i|o^i)}{\pi^i_{0}(a^i|o^i)}\right), \text{and }\left(\frac{\pi^i_{0}(a^i|o^i)}{\pi^i_{\psi_{in}^i}(a^i|o^i)}\right) \rightarrow \min\left(\epsilon, \frac{\pi^i_{0}(a^i|o^i)}{\pi^i_{\psi_{in}^i}(a^i|o^i)}\right),
$$
and we set $\epsilon=1$ in all experiments.
Compared to original importance sampling ratio, it does not suffer from the variance explosion of the product of importance sampling ratios.
Truncated importance sampling ratio has other theoretical advantages and interested readers can refer to the original paper~\citep{munos2016safe}.
}

\subsection{Modeling Other Agents}
Existing decentralized reinforcement learning works \citep{zhang2018fully,doan2019convergence,suttle2019multi} assume that each agent \textcolor{black}{can observe others' actions.}
While such an assumption is too strict, and the more realistic assumption is that the agent has to \textcolor{black}{model other agents' policies and predict other agents' actions} based on their historical observations.
Some further improvements are involved into the proposed algorithm framework to make the proposed framework still works better under such a more general assumption.
Specifically, in this more general assumption the symbol $\boldsymbol{\pi}$, in Proposition~\ref{the:ddpg},~\ref{the:coma} and their variants, are changed to $\tilde{\boldsymbol{\pi}}$ and the symbol $\boldsymbol{a}$ is changed to $\tilde{\boldsymbol{a}}:=\tilde{\boldsymbol{\pi}}(\boldsymbol{o})$.
Another important issue is that the approximate policy $\tilde{\boldsymbol{\pi}}$ and action $\tilde{\boldsymbol{a}}$ may lead to distorted gradients in Algorithm \ref{algorithm-1}.
However, gradients with noise may help optimization algorithms converge to global optimal solution~\citep{jastrzkebski2017three,neelakantan2015adding,smith2019super}.
Although the estimated policy might bring bias, it could reduce the algorithm's variance, which further improves the robustness of the system. 
This will be demonstrated later in the experiments.

To better estimate the policies of other agents in complex multi-agent environments, a novel modeling other agents (MOA) approach is devised based on theory-of-mind (TOM) inspired by \citet{rabinowitz2018machine}, and similar approach have also been found in other works~\citep{jaques2019social}.
The method is further connected with online supervised learning.
The algorithm architecture is shown in Figure~\ref{fig:moa}. 
Specifically, each agent first randomly samples a fixed number of trajectories in the environment before training and encode each trajectory using an LSTM. 
Then, the average of all the trajectories encoding is taken as the \textit{character} of the agent. 
After training starts, for each current unfinished trajectory, another LSTM is used to encode its historical segment as the current \textit{mental} of the agent. 
On the one hand, the current state, the character, and the current mental of the agent are used together as input to the \textit{natural prediction network}; 
on the other hand, the \textit{impromptu prediction network} is proposed to predict the action of agents only relying on the current state. 
The outputs of the two networks are combined to produce the final predictions. 
Note that the entire network's training process is performed together with the reinforcement learning algorithm, and the training data is periodically collected online.
\begin{figure}[tb!]
    \centering
    \includegraphics[width=0.8\textwidth]{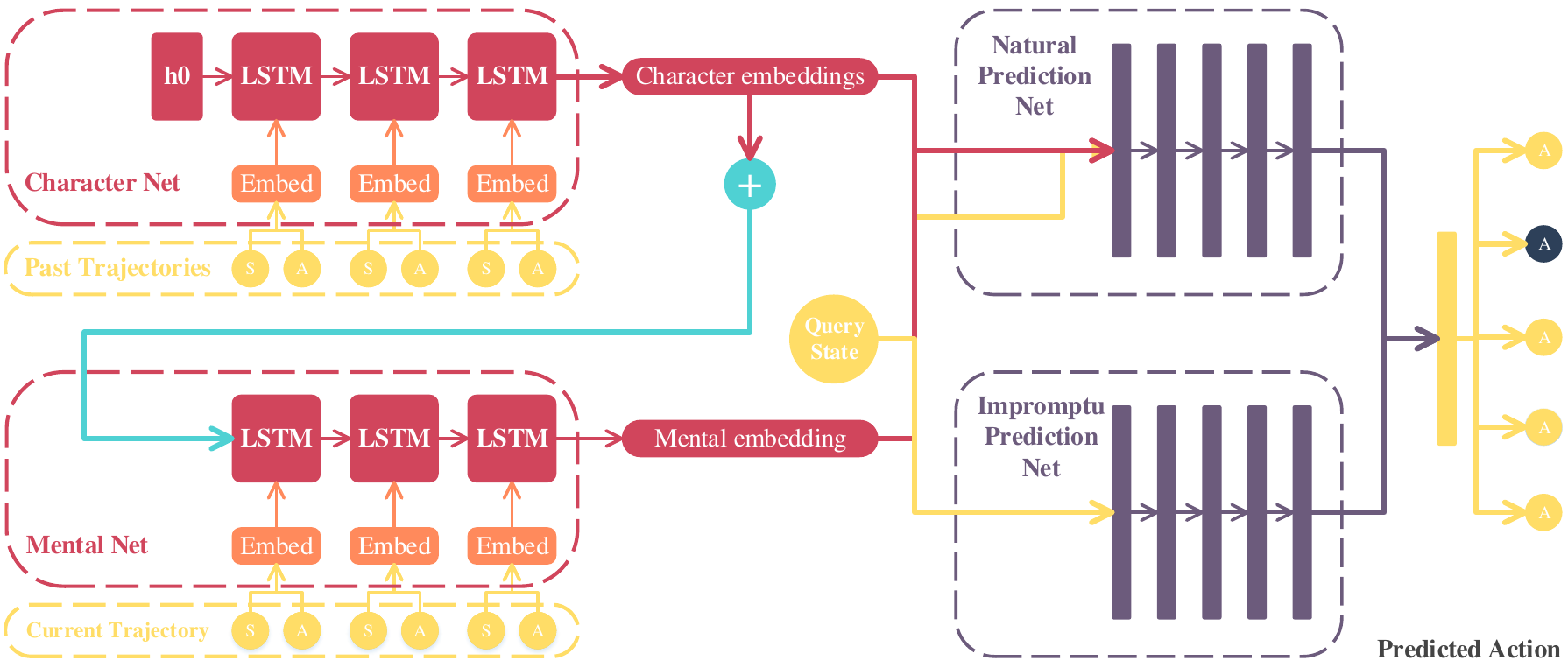}
    \vspace{-10pt}
    \caption{Modeling other agents' policies. 
    Character and Mental networks are used to encode the historical and instant information of the agent.
    Natural Prediction and Impromptu Prediction networks obtain the final prediction via online supervised learning with previous outputs and the current state.}
    \label{fig:moa}
\end{figure}

Finally, as detailed in Figure~\ref{fig:alg}, we propose the fully decentralized algorithm framework, which follows the primal-dual hybrid gradient scheme and simultaneously splits joint tasks with the divide-and-conquer strategy.
Algorithm~\ref{algorithm-1} is the corresponding pseudocode.

\begin{figure}[tb!]
    \centering
    \includegraphics[width=0.7\textwidth]{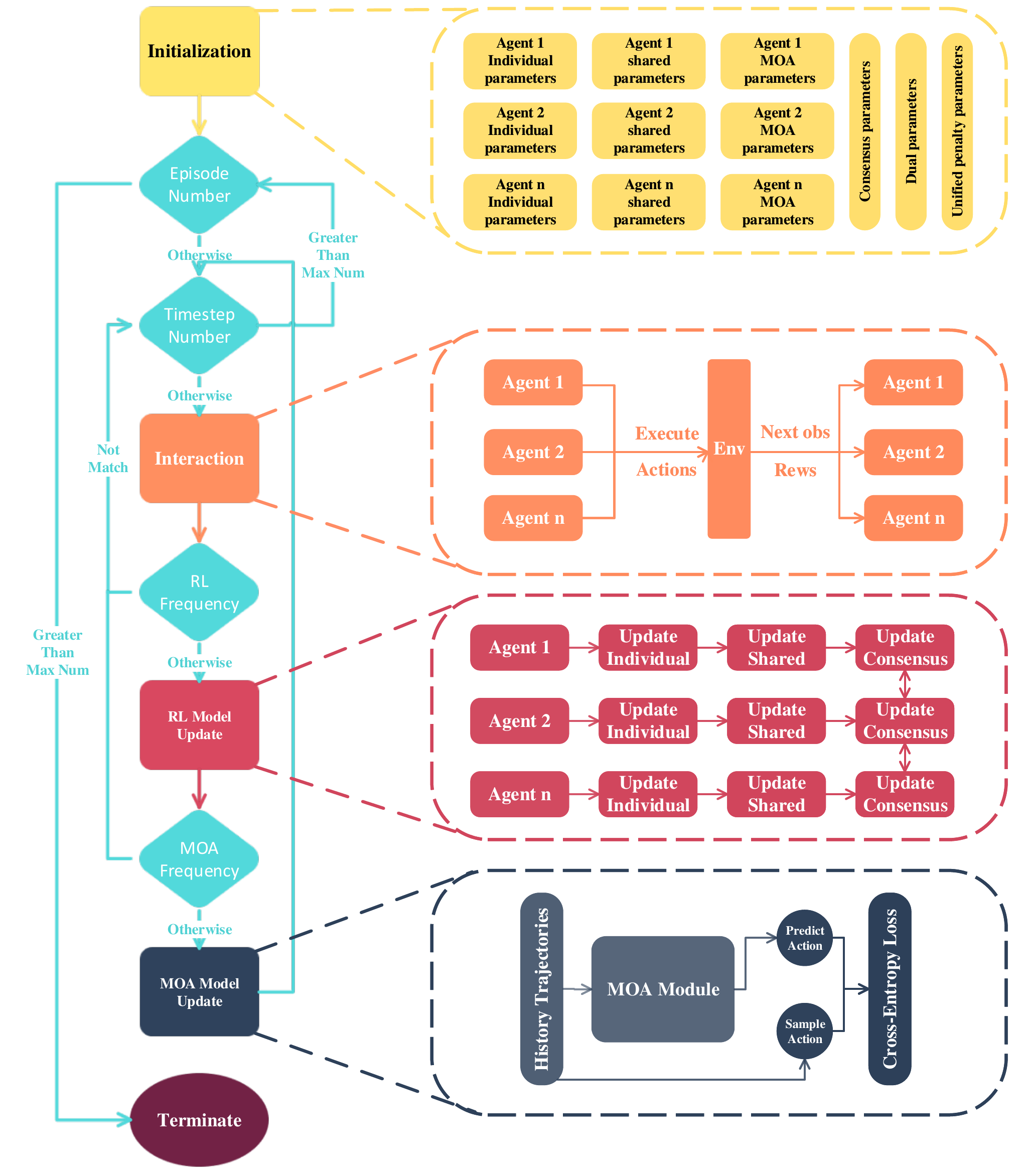}
    \vspace{-10pt}
    \caption{Flowchart corresponding to Algorithm~\ref{algorithm-1}.}
    \label{fig:alg}
\end{figure}

\subsection{Information Setting Comparison}

Finally, to clearly show the differences among the proposed F2A2, the MARL algorithms in the CTDE framework, and existing decentralized MARL algorithms, the information settings of the above algorithms is summarized and the details are shown in Table~\ref{tab:information-structures}.

\begin{table}[tbp!]
\centering
\resizebox{\textwidth}{!}{%
\begin{tabular}{|c|c|c|c|c|c|c|}
\hline
Schemes & Algorithms & Joint Observation & Joint Policy & Joint Cost & Shared Actor & Shared Critic \\ \hline
\multirow{3}{*}{\begin{tabular}[c]{@{}c@{}}Centralized\\ Algorithms\end{tabular}} & MADDPG~\citep{lowe2017multi} & Accessible & Accessible & - & - & Accessible \\ \cline{2-7} 
 & COMA~\citep{foerster2018counterfactual} & Accessible & Accessible & Accessible & - & Accessible \\ \cline{2-7} 
 & MAAC~\citep{iqbal2019actor} & Accessible & Accessible & - & - & Accessible \\ \hline
\multirow{3}{*}{\begin{tabular}[c]{@{}c@{}}Decentralized\\ Algorithms\end{tabular}} & MA-AC~\citep{zhang2019distributed} & Accessible & Communication & - & Communication & - \\ \cline{2-7} 
 & MA-AC~\citep{zhang2018fully} & Accessible & Communication & - & - & Communication \\ \cline{2-7} 
 & \textbf{F2A2} & \textbf{Accessible} & \textbf{Estimation} & \textbf{-} & \textbf{Communication} & \textbf{Communication} \\ \hline
\end{tabular}%
}
\caption{The information settings of typical centralized, decentralized algorithms, and F2A2. Dashes indicate that certain elements do not exist.}
\label{tab:information-structures}
\end{table}

For centralized MARL algorithms (i.e., MADDPG, COMA, MAAC), each agent must use joint observations (and the joint actions taken by all agents under the current joint policy) as input when calculates the joint Q-value (and trains the joint Q-value function), due to the existence of the centralized critic. 
Therefore, the joint observations and joint policy must be accessible for each agent during the algorithm training process.
For MADDPG and MAAC, because they solve the general cooperative POSG problem, each agent only needs to optimize its own expected cumulative costs. 
Therefore, each agent does not need to access the joint cost.
However, for the COMA algorithm, it is necessary to access the joint cost because it solves the fully cooperative MARL problem.

Existing decentralized algorithms achieve global access to some centralized modules (e.g., joint policy, centralized critic, centralized actor) by communicating with each other.
However, for the partially observed multi-agent environment, to make the agent's policy have a more robust representation ability, its policy function is often modeled by a recurrent neural network. 
Transmission of such a large amount of parameters through communication will make the algorithm less scalable. 
Therefore, F2A2 estimates the policies of the other agents by introducing the MOA module.

\section{Experiments}
\label{sec:experiment}
The numerical experiments of the proposed framework are conducted from the following three aspects: effectiveness, performance, and scalability.
Specifically, the effectiveness of F2A2 is verified in two general cooperative POSG environments designed in~\citet{iqbal2019actor} first.
Then, F2A2 is combined with the recurrent neural network to verify the performance on the more challenging cooperative Starcraft II unit micromanagement tasks~\citep{samvelyan2019starcraft}.
Finally, the scalability is verified in a large-scale general cooperative POSG environment MAgent\citep{zheng2018magent}.
Table \ref{table:attr} contains the concise introduction for each environments.

\begin{table}[tb!]
\centering
	\begin{tabular}{|c|c|c|c|}
	\hline
 Environment & Scenario & Scale & Agent \# \\
	\hline
	\multirow{2}{*}{\begin{tabular}[c]{@{}c@{}} Cooperative MPE\\\citep{iqbal2019actor}\end{tabular}} & Cooperative Treasure Collection & Small-Scale & 8 \\
	\cline{2-4}
	 & Rover Tower  & Small-Scale & 8\\
	\hline
	\multirow{2}{*}{\begin{tabular}[c]{@{}c@{}} StarCraft II\\Micromanagement\\\citep{samvelyan2019starcraft} \end{tabular}} & Map 2s3z & Small-Scale & 5\\
	\cline{2-4}
	& Map 3m~ & Small-Scale & 3\\
	\cline{2-4}
	& Map 8m~ & Small-Scale & 8\\
	\hline
	MAgent~\citep{zheng2018magent} & Battle & Large-Scale & 256 \\
	\hline
	\end{tabular}
    \caption{Attributes of the experiment environments.}
	\label{table:attr}
\end{table}

In the above environments,
the corresponding centralized baseline method MADDPG, MAAC, and COMA is set for the F2A2-framework instantiation F2A2-DDPG, F2A2-SAC, and F2A2-COMA.
Like MAAC, the counterfactual baseline proposed by COMA is also introduced in F2A2-SAC.
For fairness, the attentional critic is used as same as the MAAC algorithm in F2A2-SAC.
In addition, as far as we know, there is no published centralized MARL algorithm using TD3 as the backbone. 
Therefore, the performance of the F2A2-TD3 is used as an indicator to measure the adaptability of the F2A2 framework to the single-agent algorithm instead of setting a corresponding baseline for the F2A2-TD3 algorithm. 
Specifically, because the TD3 algorithm is generally better than DDPG when the F2A2 framework adopts these single agent algorithms as the backbone, it can keep the relative order of performance after instantiation (that is, F2A2-TD3 should be better than F2A2-DDPG).
For decentralized works, MA-AC~\citep{zhang2018fully} is chosen as the baseline.

\subsection{Cooperative Multi-agent Particle Environments}
The \textit{Multi-agent Particle Environments} (MPE) was first used in~\citet{lowe2017multi}. 
However, the MPE includes cooperative tasks, competitive tasks, and cooperative-competitive-hybrid tasks, and the number of agents is small. 
To this end, the larger-scale collaborative environment based on the MPE environment proposed by~\citet{iqbal2019actor}, which is denoted as \textit{Cooperative Multi-agent Particle Environments (Cooperative MPE)}, is used to more effectively verify the effectiveness of the algorithm.
The two cooperative environments are introduced in \textbf{Appendix} in detail.

\begin{figure*}[tb!]
    \centering
    \subfigure[Cooperative Treasure Collection.]{
    \label{fig:fct}
    \includegraphics[width=0.4\textwidth]{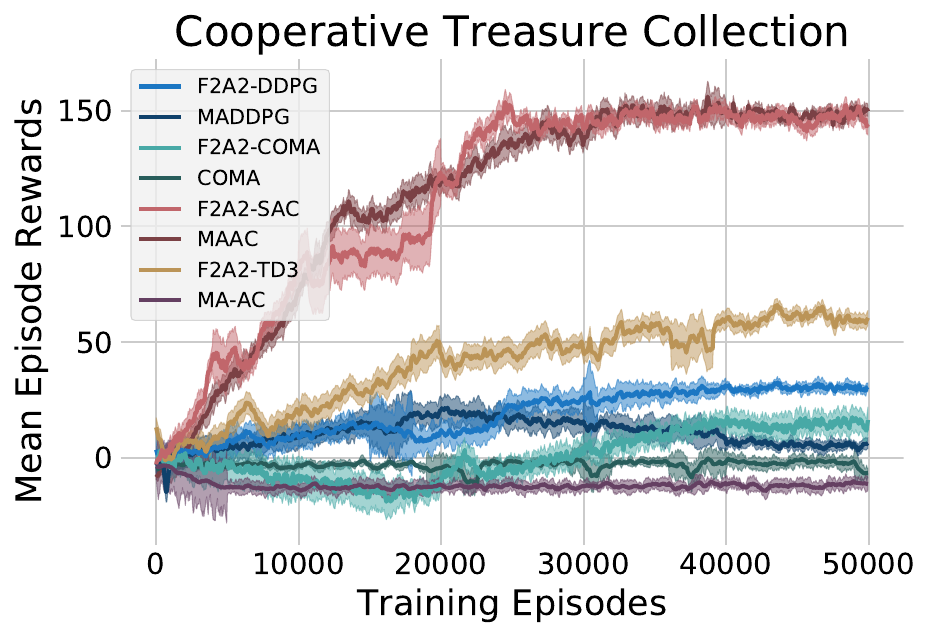}
    }
    \subfigure[Rover Tower.]{
    \label{fig:msl}
    \includegraphics[width=0.4\textwidth]{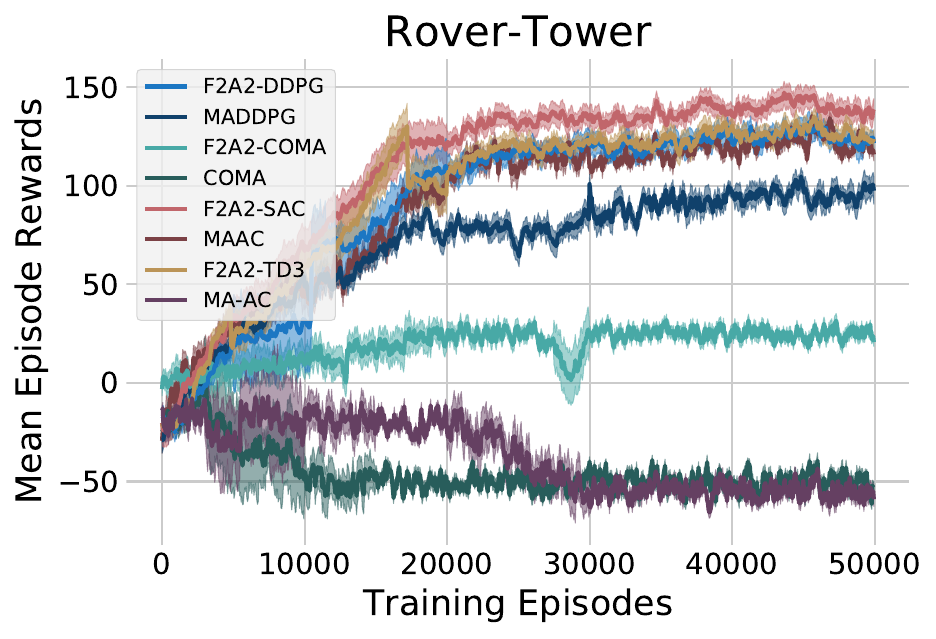}
    }
    \vspace{-10pt}
    \caption{Cooperative Multi-agent Particle Environment. The solid line represents the average under $10$ seeds, and the shaded part represents the 95\% confidence interval.}
\end{figure*}

Figure~\ref{fig:fct} and~\ref{fig:msl} plot F2A2 instantiations' performance relative to their centralized baselines and the decentralized baseline against the training episodes in Cooperative MPE.
It can be seen that the algorithms instantiated by F2A2 can reach or exceed the performance of their corresponding centralized baselines, which shows that the F2A2 framework can effectively make global decisions based on local observations and also reflects the effectiveness of PDHG and MOA. 
At the same time, the performance of the F2A2-TD3 algorithm can exceed the F2A2-DDPG algorithm, which shows that the F2A2 framework can flexibly integrate the single-agent algorithms and guarantee their original advantages.

Specifically, from Figure~\ref{fig:fct} and~\ref{fig:msl}, it can be seen that in the \textit{Rover-Tower} environment, the performance gaps between the F2A2 instantiated algorithms and corresponding centralized baselines are greater than the \textit{Cooperative Treasure Collection} environment. 
We believe this is due to the characteristics of the environments. 
Compared with the \textit{Cooperative Treasure Collection} environment, the interaction between agents in the \textit{Rover-Tower} environment is more sparse. 
Therefore, too much consideration of the behavior of other agents can easily cause the algorithm to overfit other agents and converge to a poor local optimal. 
The MOA module in F2A2 can play a role in regularization, allowing the algorithm to ignore other agents' effects partially.

We mentioned earlier that MA-AC could be seen as a decentralized version of the centralized algorithm COMA. 
From the performance comparison of the two in the figure, it can be seen that if joint training and MOA are lacking, the performance of the decentralized algorithm cannot exceed the performance of the centralized algorithm. 
It should be noted here that, in theory, if the centralized algorithm can find the optimal global solution, then the decentralized algorithm cannot exceed the performance of the centralized algorithm. 
However, the current centralized MARL algorithms can only achieve local optimal.


\subsection{StarCraft II Micromanagement}

This section focuses on the decentralized micromanagement problem in StarCraft II. 
The combat scenarios where two groups of identical units are placed symmetrically on the map are considered. 
The units of the first group, allied,  are controlled by the proposed algorithms. 
The enemy units are controlled by a built-in StarCraft II AI, which makes use of handcrafted heuristics. 
The initial placement of units within the groups varies across episodes. 
The difficulty of the computer AI controlling the enemy units is set to \textit{medium}.
The results on a set of maps where each unit group consists of 3 Marines (3m), 8 Marines (8m), and 2 Stalkers
and 3 Zealots (2s3z) are compared with baselines.

Similar to the work of~\citet{foerster2018counterfactual} and~\citet{rashid2018qmix}, the action space of agents consists of the following set of discrete actions: \textit{move[direction]}, \textit{attack[enemy id]}, \textit{stop}, and \textit{noop}. 
Agents can only move in four directions: north, south, east, or west. 
A unit is allowed to perform the \textit{attack[enemy id]} action only if the enemy is within its shooting range. 
This facilitates the decentralization of the problem.
The introduction of the unit sight range achieves partial observability.
Moreover, agents can only observe others if they are alive and cannot distinguish between units that are dead or out of range.
At each time step, the agents receive a joint cost equal to the total negative damage dealt on the enemy units. 
In addition, agents receive a bonus of $10$ points after killing each opponent and $200$ points after killing all opponents. 
These costs are all normalized to ensure the maximum cumulative cost achievable in an episode is $-20$.

\begin{figure*}[tb!]
    \centering
    \subfigure[StarCraft II 3m map.]{
    \label{fig:sc2_3m}
    \includegraphics[width=0.31\textwidth]{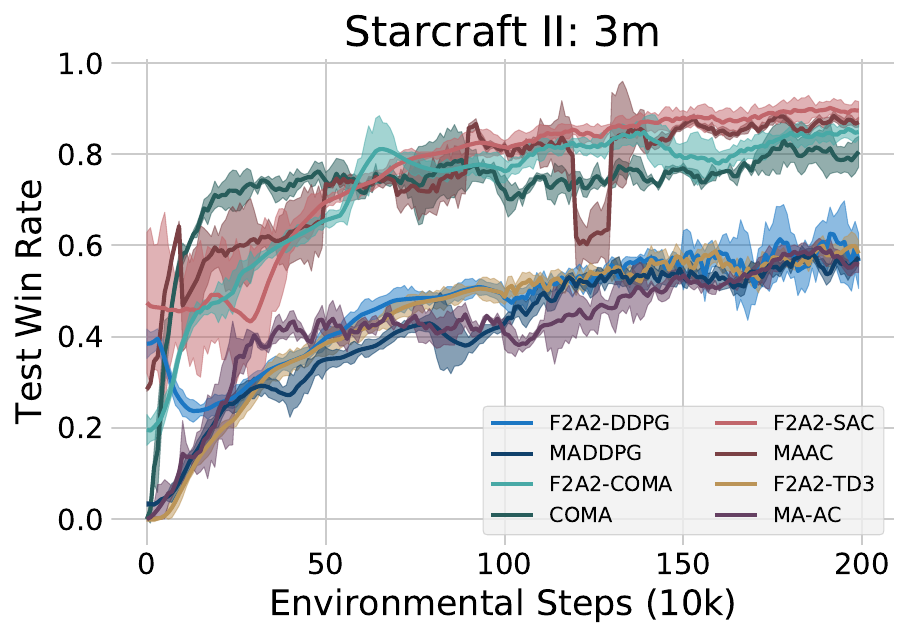}
    }
    \subfigure[StarCraft II 8m map.]{
    \label{fig:sc2_8m}
    \includegraphics[width=0.31\textwidth]{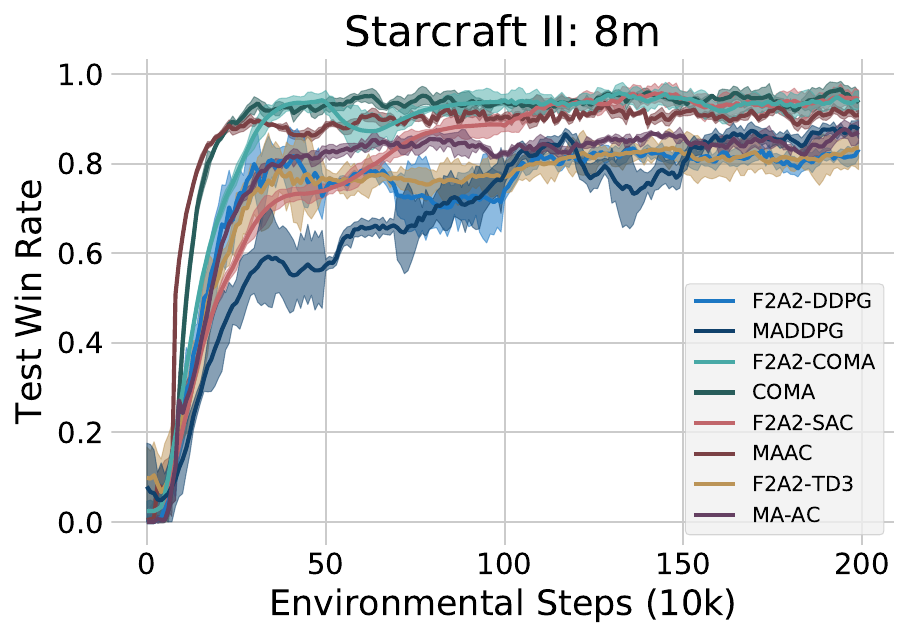}
    }
    \subfigure[StarCraft II 2s3z map.]{
    \label{fig:sc2_2s3z}
    \includegraphics[width=0.31\textwidth]{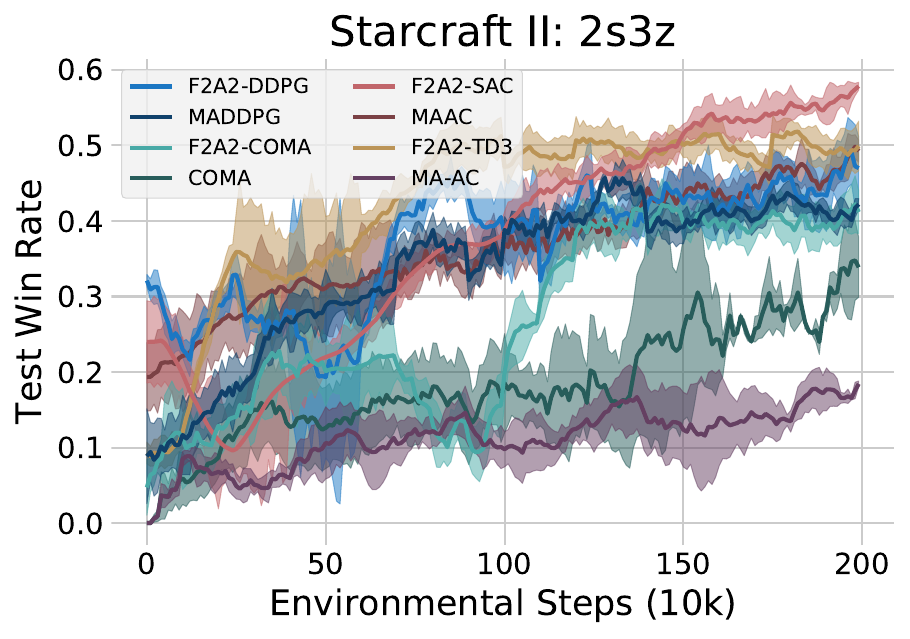}
    }
    \vspace{-10pt}
    \caption{StarCraft II Environment. The solid line represents the average under $10$ random seeds, and the shaded part represents the corresponding $95$\% confidence interval.}
\end{figure*}

Figure~\ref{fig:sc2_3m},~\ref{fig:sc2_8m} and ~\ref{fig:sc2_2s3z} plot F2A2 instantiations' win rate in test environment relative to their centralized baselines against the environmental steps in three StarCraft II micromanagement maps.
It can be seen from the figure that even if the Starcraft II environment is more complicated than Cooperative MPE, the instantiation algorithms of the F2A2 framework can still reach or exceed its corresponding centralized baseline in performance. 
This demonstrates the robustness of the F2A2 framework.
In addition, the performance comparison between MA-AC and COMA shows similar results to the Cooperative MPE.


\begin{figure}[tb!]
    \centering
    \subfigure[Negative costs of different numbers of agents over training epochs.]{
            \includegraphics[width=0.4\linewidth]{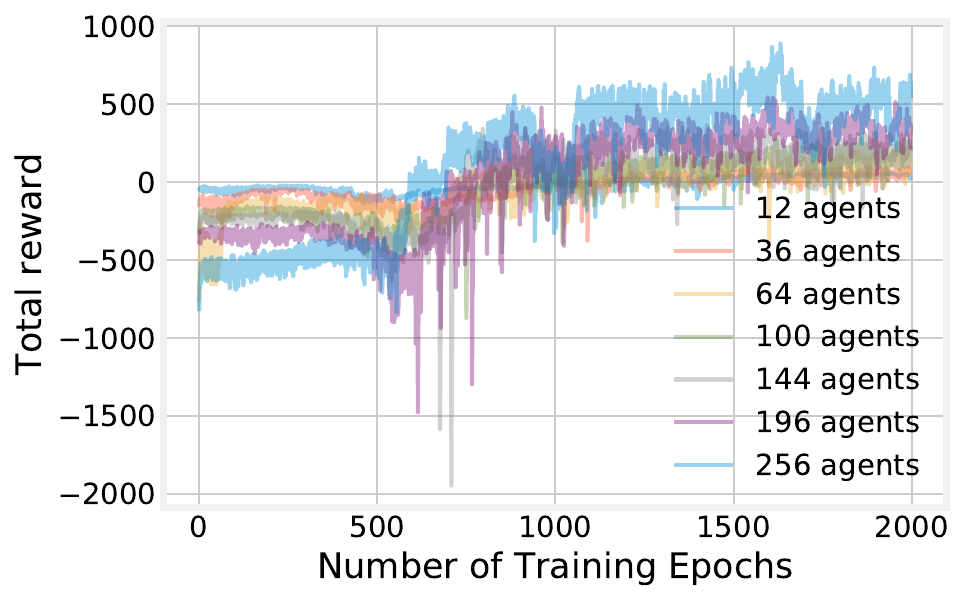}
            \label{fig:magent-rewards}
    }
    \subfigure[Convergence times as the number of agents changes.]{
            \includegraphics[width=0.4\linewidth]{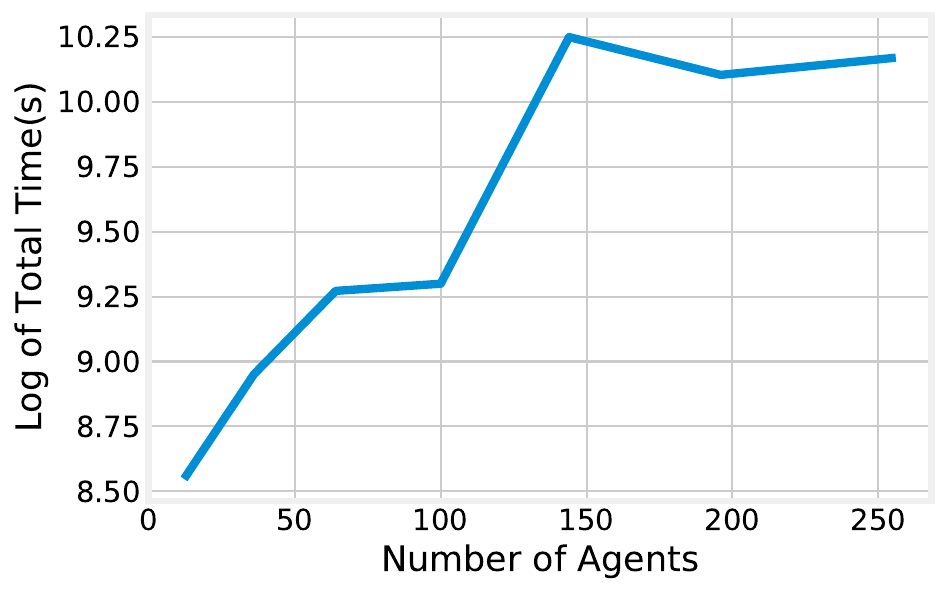}
            \label{fig:magent-times}
    }
    \subfigure[Exploring]{
            \includegraphics[width=0.25\linewidth]{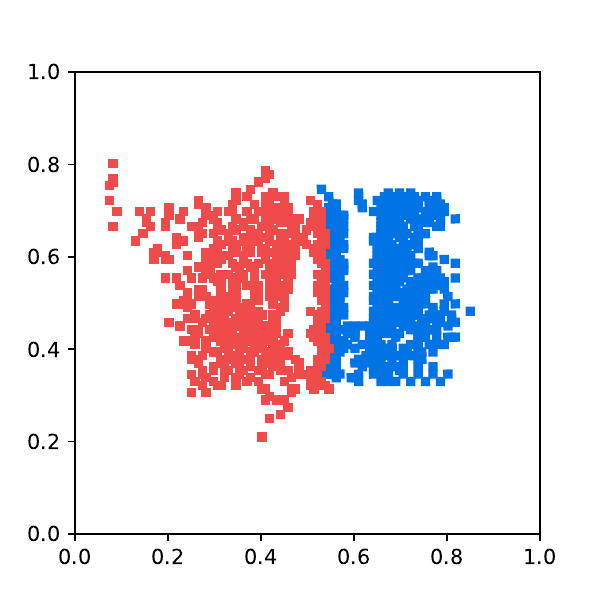}
            \label{fig:magent-explore}
    }
    \subfigure[Pursuiting]{
            \includegraphics[width=0.25\linewidth]{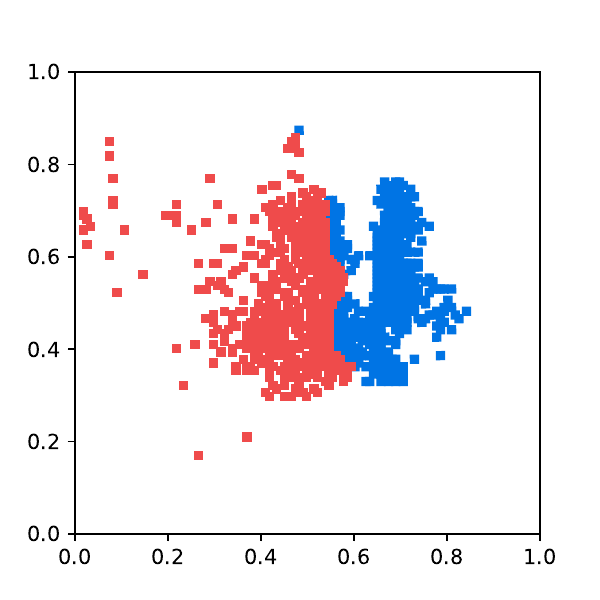}
            \label{fig:magent-pursuit}
    }
    \subfigure[Surrounding]{
            \includegraphics[width=0.25\linewidth]{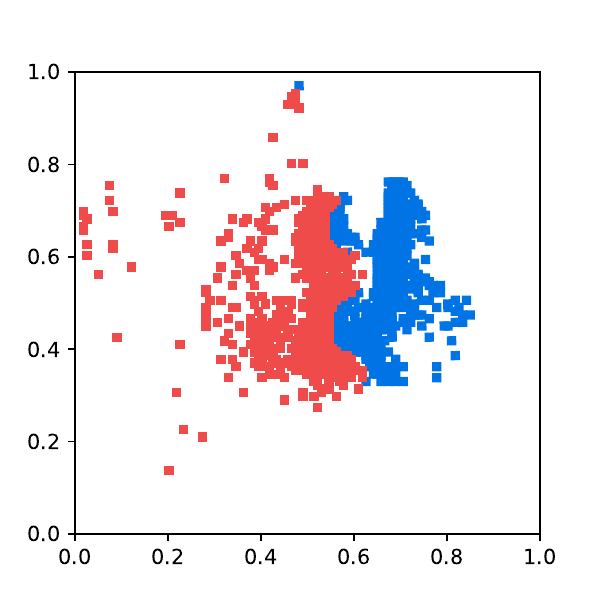}
            \label{fig:magent-surround}
    }
    \vspace{-10pt}
    \caption{MAgent training. 
    (a-b) the negative cost (reward) and convergence performance of F2A2 for $12-256$ agents, the solid line represents the average under $10$ random seeds.
    (c-e), F2A2 is first trained in the environment with $256$ agents and then transferred to a larger environment with $625$ agents directly. Here some interesting patterns that the agents show after transfer are shown.}
    \label{fig:magent}
\end{figure}

\subsection{Large-scale MAgent Environment}
The \textit{Battle} scenario in the MAgent platform provided by~\citet{zheng2018magent} is chosen as the simulation environment.
The Battle game is a general cooperative POSG scenario with two armies fighting against each other in a grid world, each empowered by a different RL algorithm. 
Each army consists of $12-256$ homogeneous agents. 
The goal of each army is to get fewer costs by collaborating with teammates to destroy all the opponents. 
The agent can take action to either move to or attack nearby grids. 
Ideally, the agent should learn skills such as chasing to hunt after training. 
The default cost setting is adopted: $0.005$ for every move, $-0.2$ for attacking an enemy, $-5$ for killing an enemy, $0.1$ for attacking an empty grid, and $0.1$ for being attacked or killed.
Since the proposed F2A2-DDPG/TD3/SAC/COMA are similar to MADDPG, they can only be run on a small scale due to global critic.
Hence F2A2 is extended to a large-scale off-policy fully-decentralized actor-critic method, i.e. F2A2-ISAC, which combines F2A2-SAC with IQL \citep{tan1993multi}.
It differs from F2A2-SAC only in the input accepted by the critic. 
The former requires observations and (estimated) policies of all agents, while F2A2-ISAC only requires individual local observation and individual policy. 
Therefore, the MOA module is no longer required in F2A2-ISAC.
Although the MAgent environment scale is enormous, each agent's task is relatively simple so that F2A2-ISAC can achieve good results.
The parameters in actor and critic for all agents are shared, similar to \citep{gupta2017cooperative}.

In Figure~\ref{fig:magent-rewards}, as the number of agents increases, the cost curves have a similar trend, which indicates that the decentralized learning ability is not affected by the large-scale setting.
In Figure~\ref{fig:magent-times}, the convergence time does not increase significantly as the number of agents grows. 
The curve shows that the proposed algorithms have good scalability. 
Algorithms also learn some interesting patterns. 
In figure~\ref{fig:magent-explore}, in the early stage of confrontation, the rear agents explore the environment because they cannot directly engage the enemy;
in \ref{fig:magent-pursuit}, when a small maniple of enemies escape, agents are splitted to chase;
in \ref{fig:magent-surround}, the large forces use the means of encirclement to conquest.

\subsection{More Analysis}

\begin{table}[tb!]
    \resizebox{\textwidth}{!}{%
    \begin{tabular}{|c|c|cccccc|}
    \hline
    \multirow{3}{*}{ctc} & MAAC & \multicolumn{6}{c|}{F2A2-SAC} \\ \cline{2-8} 
     & MAAC & \multicolumn{1}{c|}{F2A2-SAC-0} & \multicolumn{1}{c|}{F2A2-SAC-20} & \multicolumn{1}{c|}{F2A2-SAC-40} & \multicolumn{1}{c|}{F2A2-SAC-60} & \multicolumn{1}{c|}{F2A2-SAC-80} & F2A2-SAC-100 \\ \cline{2-8} 
     & \textbf{148($\pm$8)} & \multicolumn{1}{c|}{23($\pm$4)} & \multicolumn{1}{c|}{37($\pm$4)} & \multicolumn{1}{c|}{45($\pm$5)} & \multicolumn{1}{c|}{112($\pm$4)} & \multicolumn{1}{c|}{142($\pm$8)} & \textbf{147($\pm$9)} \\ \hline
    \multirow{3}{*}{rt} & MAAC & \multicolumn{6}{c|}{F2A2-SAC} \\ \cline{2-8} 
     & MAAC & \multicolumn{1}{c|}{F2A2-SAC-0} & \multicolumn{1}{c|}{F2A2-SAC-20} & \multicolumn{1}{c|}{F2A2-SAC-40} & \multicolumn{1}{c|}{F2A2-SAC-60} & \multicolumn{1}{c|}{F2A2-SAC-80} & F2A2-SAC-100 \\ \cline{2-8} 
     & 121($\pm$5) & \multicolumn{1}{c|}{52($\pm$7)} & \multicolumn{1}{c|}{52($\pm$7)} & \multicolumn{1}{c|}{78($\pm$5)} & \multicolumn{1}{c|}{126($\pm$6)} & \multicolumn{1}{c|}{135($\pm$5)} & \textbf{139($\pm$6)} \\ \hline
    \multirow{3}{*}{3m} & MAAC & \multicolumn{6}{c|}{F2A2-SAC} \\ \cline{2-8} 
     & MAAC & \multicolumn{1}{c|}{F2A2-SAC-0} & \multicolumn{1}{c|}{F2A2-SAC-20} & \multicolumn{1}{c|}{F2A2-SAC-40} & \multicolumn{1}{c|}{F2A2-SAC-60} & \multicolumn{1}{c|}{F2A2-SAC-80} & F2A2-SAC-100 \\ \cline{2-8} 
     & 0.88($\pm$0.11) & \multicolumn{1}{c|}{0.65($\pm$0.08)} & \multicolumn{1}{c|}{0.68($\pm$0.07)} & \multicolumn{1}{c|}{0.70($\pm$0.08)} & \multicolumn{1}{c|}{0.76($\pm$0.09)} & \multicolumn{1}{c|}{\textbf{0.91($\pm$0.10)}} & \textbf{0.92($\pm$0.09)} \\ \hline
    \multirow{3}{*}{8m} & COMA & \multicolumn{6}{c|}{F2A2-COMA} \\ \cline{2-8} 
     & COMA & \multicolumn{1}{c|}{F2A2-COMA-0} & \multicolumn{1}{c|}{F2A2-COMA-20} & \multicolumn{1}{c|}{F2A2-COMA-40} & \multicolumn{1}{c|}{F2A2-COMA-60} & \multicolumn{1}{c|}{F2A2-COMA-80} & F2A2-COMA-100 \\ \cline{2-8} 
     & \textbf{0.97($\pm$0.01)} & \multicolumn{1}{c|}{0.83($\pm$0.01)} & \multicolumn{1}{c|}{0.82($\pm$0.01)} & \multicolumn{1}{c|}{0.85($\pm$0.02)} & \multicolumn{1}{c|}{0.84($\pm$0.01)} & \multicolumn{1}{c|}{\textbf{0.94($\pm$0.01)}} & \textbf{0.96($\pm$0.02)} \\ \hline
    \multirow{3}{*}{2s3z} & MAAC & \multicolumn{6}{c|}{F2A2-SAC} \\ \cline{2-8} 
     & MAAC & \multicolumn{1}{c|}{F2A2-SAC-0} & \multicolumn{1}{c|}{F2A2-SAC-20} & \multicolumn{1}{c|}{F2A2-SAC-40} & \multicolumn{1}{c|}{F2A2-SAC-60} & \multicolumn{1}{c|}{F2A2-SAC-80} & F2A2-SAC-100 \\ \cline{2-8} 
     & 0.48($\pm$0.02) & \multicolumn{1}{c|}{0.35($\pm$0.02)} & \multicolumn{1}{c|}{0.35($\pm$0.03)} & \multicolumn{1}{c|}{0.41($\pm$0.02)} & \multicolumn{1}{c|}{0.43($\pm$0.02)} & \multicolumn{1}{c|}{\textbf{0.55($\pm$0.01)}} & \textbf{0.57($\pm$0.02)} \\ \hline
    \end{tabular}%
    }
    \caption{The performance comparison of each algorithm under different communication topologies. The numbers following the F2A2 algorithm represent the proportions of edges added in the ring topology, and $0$ means that no edges exist in the communication topology. ``ctc'' and ``rt'' represent the \textit{Cooperative Treasure Collection} task and the \textit{Rover Tower} task in the \textit{Cooperative MPE} environment, respectively. The data in the \textit{Cooperative MPE} environment represents the mean episode reward of all agents in the last $1k$ timesteps; And ``3m'', ``8m'', ``2s3z'' represent different maps in \textit{SC II} environment. The data in the \textit{SC II} environment represents the mean test win rate of all agents in the last $10k$ timesteps. Numbers in parentheses indicate $95\%$ confidence intervals, obtained under $5$ random seeds and blacked numbers indicate best results.}
    \label{tab:ablation-topo}
\end{table}

\begin{table}[htb!]
    \resizebox{\textwidth}{!}{%
    \begin{tabular}{|c|c|cccccc|}
    \hline
    \multirow{3}{*}{ctc} & MAAC & \multicolumn{6}{c|}{F2A2-SAC} \\ \cline{2-8} 
     & MAAC & \multicolumn{1}{c|}{F2A2-SAC-$\mathcal{N}(\infty,0)$} & \multicolumn{1}{c|}{F2A2-SAC-$\mathcal{N}(2000,1000)$} & \multicolumn{1}{c|}{F2A2-SAC-$\mathcal{N}(200,100)$} & \multicolumn{1}{c|}{F2A2-SAC-$\mathcal{N}(20,10)$} & \multicolumn{1}{c|}{F2A2-SAC-$\mathcal{N}(8,4)$} & F2A2-SAC-$\mathcal{N}(1,0)$ \\ \cline{2-8} 
     & \textbf{148($\pm$8)} & \multicolumn{1}{c|}{23($\pm$4)} & \multicolumn{1}{c|}{24($\pm$4)} & \multicolumn{1}{c|}{33($\pm$7)} & \multicolumn{1}{c|}{139($\pm$7)} & \multicolumn{1}{c|}{\textbf{146($\pm$9)}} & \textbf{147($\pm$9)} \\ \hline
    \multirow{3}{*}{rt} & MAAC & \multicolumn{6}{c|}{F2A2-SAC} \\ \cline{2-8} 
     & MAAC & \multicolumn{1}{c|}{F2A2-SAC-$\mathcal{N}(\infty,0)$} & \multicolumn{1}{c|}{F2A2-SAC-$\mathcal{N}(2000,1000)$} & \multicolumn{1}{c|}{F2A2-SAC-$\mathcal{N}(200,100)$} & \multicolumn{1}{c|}{F2A2-SAC-$\mathcal{N}(20,10)$} & \multicolumn{1}{c|}{F2A2-SAC-$\mathcal{N}(8,4)$} & F2A2-SAC-$\mathcal{N}(1,0)$ \\ \cline{2-8} 
     & 121($\pm$5) & \multicolumn{1}{c|}{52($\pm$7)} & \multicolumn{1}{c|}{52($\pm$8)} & \multicolumn{1}{c|}{56($\pm$8)} & \multicolumn{1}{c|}{\textbf{136($\pm$6)}} & \multicolumn{1}{c|}{\textbf{139($\pm$5)}} & \textbf{139($\pm$6)} \\ \hline
    \multirow{3}{*}{3m} & MAAC & \multicolumn{6}{c|}{F2A2-SAC} \\ \cline{2-8} 
     & MAAC & \multicolumn{1}{c|}{F2A2-SAC-$\mathcal{N}(\infty,0)$} & \multicolumn{1}{c|}{F2A2-SAC-$\mathcal{N}(2000,1000)$} & \multicolumn{1}{c|}{F2A2-SAC-$\mathcal{N}(200,100)$} & \multicolumn{1}{c|}{F2A2-SAC-$\mathcal{N}(20,10)$} & \multicolumn{1}{c|}{F2A2-SAC-$\mathcal{N}(8,4)$} & F2A2-SAC-$\mathcal{N}(1,0)$ \\ \cline{2-8} 
     & 0.88($\pm$0.11) & \multicolumn{1}{c|}{0.65($\pm$0.08)} & \multicolumn{1}{c|}{0.65($\pm$0.07)} & \multicolumn{1}{c|}{0.74($\pm$0.06)} & \multicolumn{1}{c|}{\textbf{0.91($\pm$0.09)}} & \multicolumn{1}{c|}{\textbf{0.91($\pm$0.10)}} & \textbf{0.92($\pm$0.09)} \\ \hline
    \multirow{3}{*}{8m} & COMA & \multicolumn{6}{c|}{F2A2-COMA} \\ \cline{2-8} 
     & COMA & \multicolumn{1}{c|}{F2A2-COMA-$\mathcal{N}(\infty,0)$} & \multicolumn{1}{c|}{F2A2-COMA-$\mathcal{N}(2000,1000)$} & \multicolumn{1}{c|}{F2A2-COMA-$\mathcal{N}(200,100)$} & \multicolumn{1}{c|}{F2A2-COMA-$\mathcal{N}(20,10)$} & \multicolumn{1}{c|}{F2A2-COMA-$\mathcal{N}(8,4)$} & F2A2-COMA-$\mathcal{N}(1,0)$ \\ \cline{2-8} 
     & \textbf{0.97($\pm$0.01)} & \multicolumn{1}{c|}{0.83($\pm$0.01)} & \multicolumn{1}{c|}{0.83($\pm$0.02)} & \multicolumn{1}{c|}{0.83($\pm$0.02)} & \multicolumn{1}{c|}{\textbf{0.93($\pm$0.02)}} & \multicolumn{1}{c|}{\textbf{0.95($\pm$0.01)}} & \textbf{0.96($\pm$0.02)} \\ \hline
    \multirow{3}{*}{2s3z} & MAAC & \multicolumn{6}{c|}{F2A2-SAC} \\ \cline{2-8} 
     & MAAC & \multicolumn{1}{c|}{F2A2-SAC-$\mathcal{N}(\infty,0)$} & \multicolumn{1}{c|}{F2A2-SAC-$\mathcal{N}(2000,1000)$} & \multicolumn{1}{c|}{F2A2-SAC-$\mathcal{N}(200,100)$} & \multicolumn{1}{c|}{F2A2-SAC-$\mathcal{N}(20,10)$} & \multicolumn{1}{c|}{F2A2-SAC-$\mathcal{N}(8,4)$} & F2A2-SAC-$\mathcal{N}(1,0)$ \\ \cline{2-8} 
     & 0.48($\pm$0.02) & \multicolumn{1}{c|}{0.35($\pm$0.02)} & \multicolumn{1}{c|}{0.37($\pm$0.01)} & \multicolumn{1}{c|}{0.42($\pm$0.01)} & \multicolumn{1}{c|}{0.53($\pm$0.01)} & \multicolumn{1}{c|}{\textbf{0.57($\pm$0.02)}} & \textbf{0.57($\pm$0.02)} \\ \hline
    \end{tabular}%
    }
    \caption{The performance comparison of each algorithm under different parameter exchange frequencies. The normal distribution followed by the F2A2 algorithm indicates that each agent samples the time interval for executing the next \textit{consensus step} from the normal distribution after each execution of the current \textit{consensus step}, where $\infty$ indicates the predefined maximum executable update step of the algorithm. ``ctc'' and ``rt'' represent the \textit{Cooperative Treasure Collection} task and the \textit{Rover Tower} task in the \textit{Cooperative MPE} environment, respectively. The data in the \textit{Cooperative MPE} environment represents the mean episode reward of all agents in the last $1k$ timesteps; And ``3m'', ``8m'', ``2s3z'' represent different maps in \textit{SC II} environment. The data in the \textit{SC II} environment represents the mean test win rate of all agents in the last $10k$ timesteps. Numbers in parentheses indicate $95\%$ confidence intervals, obtained under $5$ random seeds and blacked numbers indicate best results.}
    \label{tab:ablation-freq}
\end{table}

\paragraph{F2A2 reduce the information transmission during the decentralized algorithm learning.}
We only consider the information transmission between the agents of the decentralized algorithm here. 
For the centralized algorithm, all the information transmission is carried out on one physical machine, so it is meaningless to count the amount of information transmission.
We count the appropriate parameter amount and information transmission amount of the decentralized algorithm involved in the experiment in the two simulation environments. 
After statistics, it can be seen that F2A2 dramatically reduces the transfer of parameters in addition to shared parameters caused by policy sharing and only requires one-tenth of the transmissions.

\paragraph{The PDHG-type methods have advantages over the BCGD-type methods.}
To verify the superiority of the PDHG-type method, we select the corresponding optimal algorithms in two multi-agent environments and conduct comparative experiments. 
It can be seen from Figure~\ref{fig:scii-ablation} and Figure~\ref{fig:msl-ablation} that the PDHG-type jointly optimization methods (*-with-JOINT) have a significant performance improvement compared to the BCGD-type separately optimization methods (*-w/o-JOINT). 

\paragraph{The MOA module further improves the performance of the decentralized MARL algorithm.}
In the previous section, we argue that the MOA module can prevent the agent from overfitting other agents' policies by introducing noise, and increase exploration, thereby ultimately improving algorithm performance. 
Moreover, in some scenarios, it can exceed the centralized algorithms. 
To quantify the effect of the MOA module, the comparative experiments are also conducted.
The experimental results show in Figure~\ref{fig:msl-ablation} and Figure~\ref{fig:scii-ablation} are clearly support the above conclusions.

\begin{figure*}[tb!]
    \centering
    \subfigure[StarCraft II 2s3z scenario.]{
    \label{fig:scii-ablation}
    \includegraphics[width=0.47\textwidth]{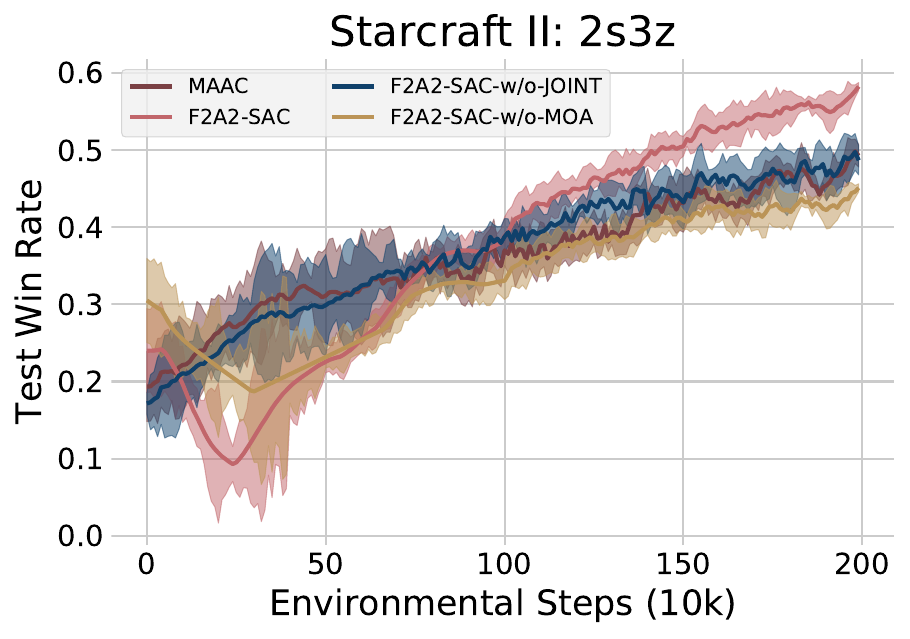}
    }
    \subfigure[Cooperative MPE Rover-Tower scenario.]{
    \label{fig:msl-ablation}
    \includegraphics[width=0.47\textwidth]{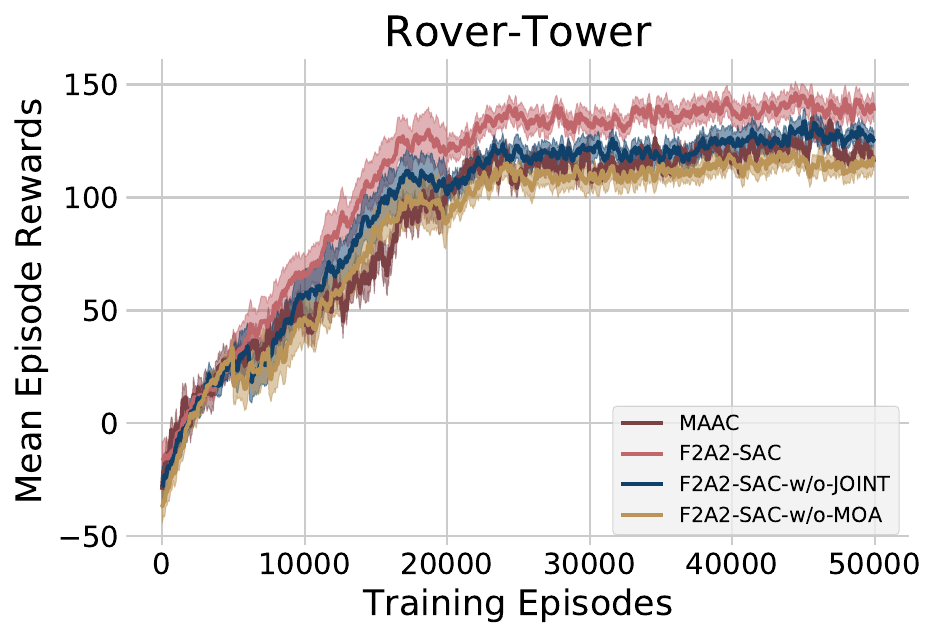}
    }
    \vspace{-10pt}
    \caption{The ablation study of F2A2-SAC algorithm in two multi-agent environments.}
\end{figure*}

Finally, to delve into why the F2A2 algorithm can outperform the centralized algorithm in some cases, we conduct a fine-grained comparison between centralized and decentralized methods for various levels of communication budgets.
Specifically, the communication budget of the algorithm is constrained in the following two ways:
    \begin{itemize}
        \item \textbf{Communication topology.} 
            The above experiments assume that the communication topology is a fully connected graph. 
            In order to add the communication budget, we construct the communication topology as follows. 
            For experiments under each random seed, we first sort all agents in random order to construct a ring topology (guarantee that there is exactly one connected component in the whole graph). 
            Then randomly add edges between agents in different predefined proportions at each update step, and the upper limit of the number of edges is the total number of edges in the fully connected graph. 
            Furthermore, we also consider an extreme case where there are no edges in the graph.
        \item \textbf{Parameter exchange frequency.}
            At each \textit{consensus step}, each agent samples the time interval to execute the next \textit{consensus step} from a normal distribution. 
            The mean of this normal distribution is set to different values from small to large to represent the change in communication budget from large to small. 
            The variance of the normal distribution is fixed, and for each communication budget, different agents sample from the same normal distribution.
            We also consider an extreme case where the mean of the normal distribution is greater than the total number of update steps in the algorithm, and the variance is $0$.
    \end{itemize}
    For each task, we select the best performing centralized algorithm and its corresponding decentralized version for comparison.
    The experimental results are shown in Table~\ref{tab:ablation-topo} and Table~\ref{tab:ablation-freq}.
    It can be seen from the table that with the increase of the communication budget, the performance of the F2A2 algorithms present \ul{a step-wise growth rather than a smooth and gradual improvement}.
    Moreover, it can be seen from the last few columns of the table that the F2A2 algorithm can still maintain good performance under the weak communication budget constraint.

\section{Conclusion and Limitations}\label{sec:conclusion}
A flexible fully decentralized approximate actor-critic MARL framework is devised to achieve applicability and scalability in interactive multi-agent environments. 
A primal-dual optimization and joint actor-critic learning are carefully designated to guarantee full decentralization and scalability, with agents modeling to increase the robustness.
The proposed approach can even exceed state-of-the-art centralized algorithms in various categories and various scales simulated cooperative environments.
In the future, we plan to introduce communication in the training process to promote more efficient cooperation to be more adaptable to complex scenarios.

Below we briefly analyze the limitations of F2A2.
We ensured the Markov property of the policy through the enrichment of the observation space, avoiding the use of non-Markovian and history-dependent policy classes. 
Although this approach facilitates theoretical analysis and engineering implementation, the resulting \textit{vast belief space} and \textit{infinite hierarchy of belief} render the Markov policy class intractable in more complex problems, further constraining its generality.
We shed light on the limitations that affect this policy class concerning crucial factors such as the error of the approximated information state and the infinite hierarchy belief representation capabilities in Appendix~\ref{sec:limit-appendix}. 
Careful consideration of these factors is essential to enhance the tractability and generality of the Markovian policy within the enriched observation space.


\acks{This work was supported in part by the National Key Research and Development Program of China (No. 2020AAA0107400), STCSM (22QB1402100), NSFC (No. 12071145), Shenzhen Science and Technology Program (JCYJ20210324120011032), Postdoctoral Science Foundation of China (2022M723039), a grant from China Academy of LVT, and a grant from Shenzhen Institute of Artificial Intelligence and Robotics for Society.}





\newpage
\appendix

\section{Environments}

\paragraph{Cooperative Treasure Collection.} 
The cooperative environment in Figure~\ref{fig:ccv} involves $8$ total agents, $6$ of which are ”treasure hunters” and 2 of which are “treasure banks”, which each correspond to a different color of treasure. 
The role of the hunters is to collect the treasure of any color, which re-spawn randomly upon being collected (with a total
of 6), and then “deposit” the treasure into the correctly colored “bank”. 
The role of each bank is to gather as much treasure as possible from the hunters simply. 
All agents can see each others’ positions concerning their own.
Hunters receive a global cost for the successful collection of treasure, and all agents receive a global cost for the
depositing of treasure. 
Hunters are additionally penalized for colliding with each other. 
As such, the task contains a mixture of shared and individual costs.

\paragraph{Rover Tower.}
The environment in Figure~\ref{fig:rt} involves $8$ total agents, $4$ of which are “rovers” and another $4$ which are “towers”. 
In each episode, rovers and towers are randomly paired. 
The pair is punished by the distance of the rover to its goal. 
The task can be thought of as a navigation task on an alien planet with limited infrastructure and low visibility. 
The rovers are unable to see in their surroundings. 
They must rely on communication from the towers, which can locate the rovers and their destinations and send one of five discrete communication messages to their paired rover. 
Note that communication is highly restricted and different from centralized policy approaches~\citep{jiang2018learning}, which allow for the free transfer of continuous information among policies.
In our setup, the communication is integrated into the environment (in the tower’s action space and the rover’s observation space), rather than being explicitly part of the model. 
It is limited to a few discrete signals.

\begin{figure*}[!htp]
\centering
\subfigure[Cooperative Treasure Collection]{
\label{fig:ccv}
\includegraphics[width=0.4\textwidth]{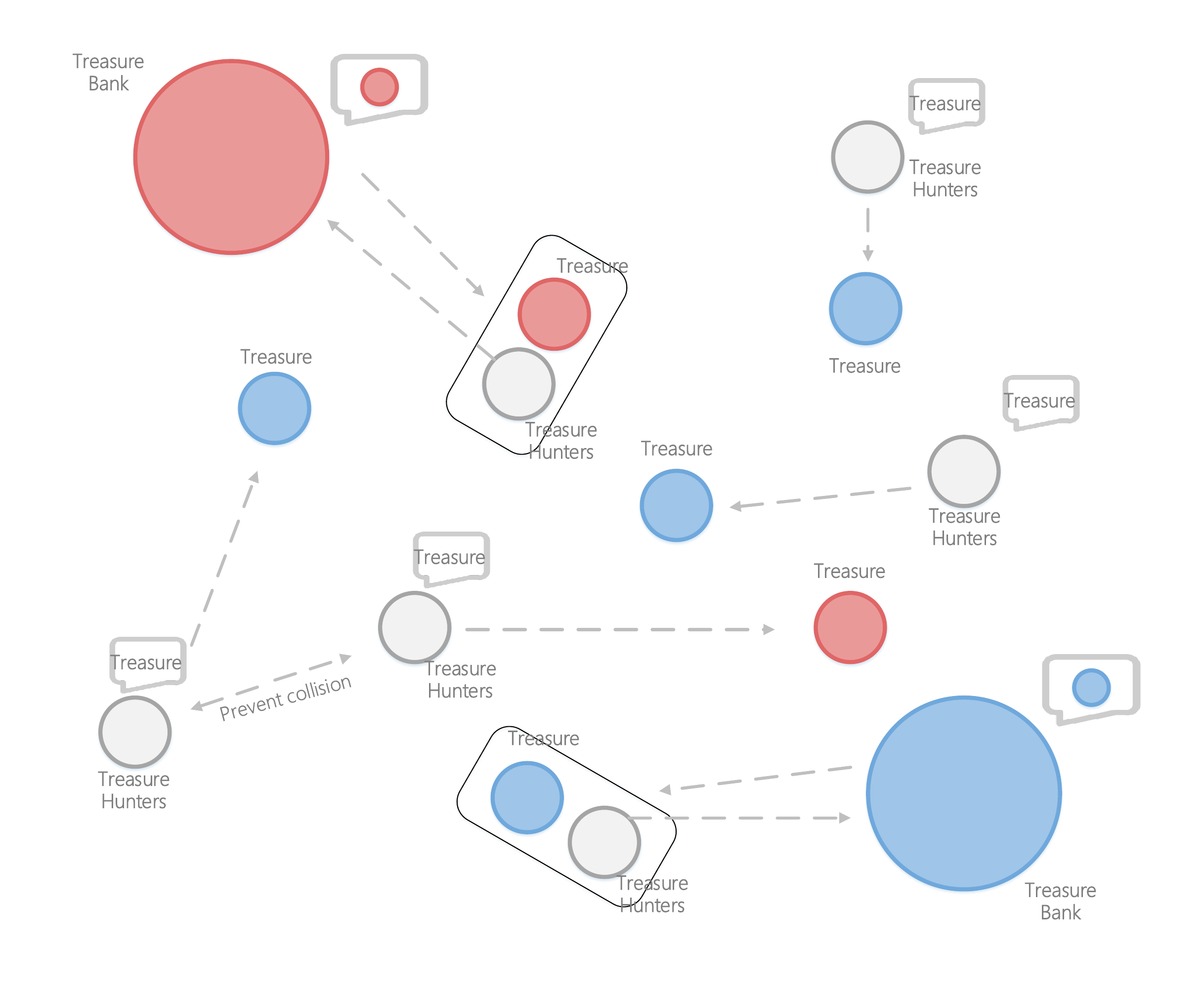}}
\subfigure[Rover Tower]{
\label{fig:rt}
\hspace{0.3in}
\includegraphics[width=0.4\textwidth]{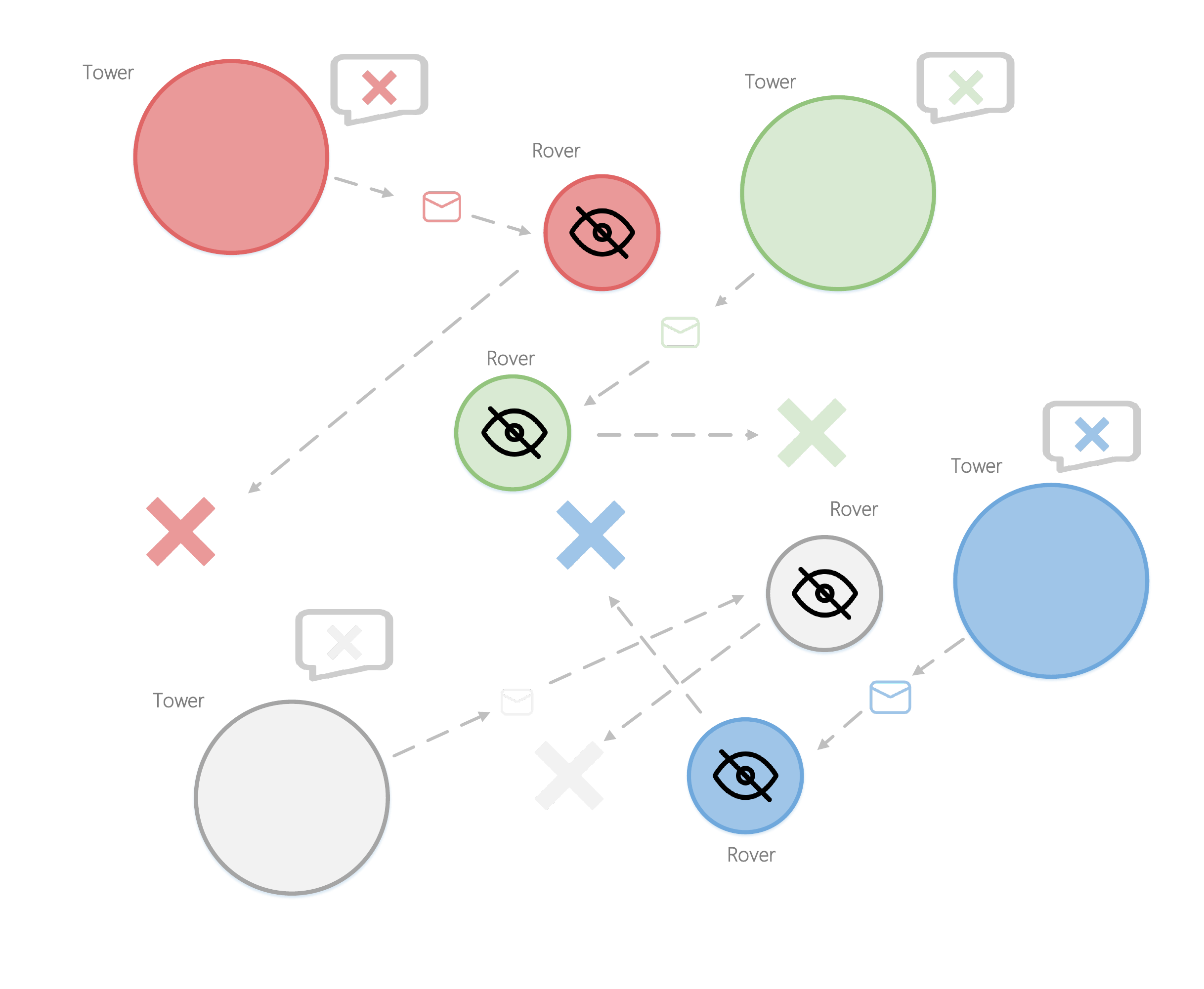}}
\caption{The Cooperative Multi-agent Particle Environments.}
\end{figure*}

\paragraph{StarCraft II Map 2s3z.} This map contains mixed unit types, where both the learnable agent and the built-in AI each control two Stalkers and three Zealots. 
Stalkers are ranged-attack units that take heavy damage from melee-type Zealots. 
Consequently, a winning strategy needs to dynamically coordinate between letting one’s Zealots attack enemy Stalkers and backtrack to defend one’s Stalkers against enemy Zealots. 

\paragraph{StarCraft II Map 3m and 8m.} The first, map 3m, presents both sides with three Marines, which are medium-ranged infantry
units. 
The coordination challenge on this map is to reduce enemy firepower as quickly as possible by focusing unit fire to defeat each enemy unit in turn. 
Secondly, map 8m scales this task up to eight Marines on both sides. 
The relatively large number of agents involved poses additional scalability challenges.

\section{Relationship with Trust-region Methods}
The proposed F2A2 algorithm contains two learning processes: the learning of multi-agent policies and the learning of MOA modules. 
Moreover, the performance of the latter will affect the former. 
Once the MOA modules are too inaccurate, the entire F2A2 algorithm training process will fall into a vicious circle. 
In order to make the training of the entire system more robust, the intuitive idea is to make the multi-agent policies update more conservative, which can effectively improve the accuracy of MOA modules.
Trust-region methods, such as Proximal Policy Optimization (PPO)~\citep{schulman2017proximal} have been demonstrated as efficient and robust RL algorithms via maximally searching the new policy in a trust region.
Coincidentally, in comparing the PPO with the F2A2, we found that the proposed F2A2 framework is closely related to trust-region methods.
This means that F2A2 naturally has the conservativeness of policy updating by jointly optimizing actors and critics.
More specifically, the simplified objective of PPO is as follows,
\begin{equation}
L\left(s, a, \theta_{k}, \theta\right)=
\min \left(\frac{\pi_{\theta}(a | s)}{\pi_{\theta_{k}}(a | s)} A^{\pi_{\theta_{k}}}(s, a), g\left(\epsilon, A^{\pi_{\theta_{k}}}(s, a)\right)\right), \nonumber
\end{equation}
where 
$$
    g(\epsilon, A) = \left\{\begin{array}{ll}{(1+\epsilon) A}, & {A \geq 0}; \\ {(1-\epsilon) A}, & {A<0}.\end{array}\right. 
$$

PPO deals with the advantage function $A^{\pi}(s,a)$, while our F2A2 deals with the gradient of value $Q^{\pi}(o,a)$.
Note that PPO and F2A2 have a clipping regularizer to prevent the new policy beyond the trust region.
The hyperparameter $\epsilon$ corresponds to the distance that the new policy can go away and still profits the objective~\citep{schulman2017proximal}.
Besides, the coefficient $\alpha_1-2\alpha_2\delta$ in the gradient of F2A2 makes the actor update more conservative when the current critic is not accurate enough(and the accuracy of the current critic is greatly affected by the accuracy of the MOA modules).
To sum up, our F2A2 framework has a similar protective trust-region mechanism, which enhances the robustness of our algorithm.

{\color{black}{

\section{Policy Gradient Theorem in POMDPs}

After enriching the observation space, we can naturally extend the policy gradient theorem under MDP to POMDP based on \citet{mao2020information} as the basis for the subsequent theoretical derivation of this paper. 
In the previous version of manuscript, we omitted this derivation process and make an addition here.
Notice that here $o$ refer to the \underline{enriched} observation unless specified.
Let $p(\tau;\theta)$ denote the probability distribution of trajectories $\tau$ under policy $\pi_{\theta}$, i.e., 
$$
    f(\tau ; \theta) d \tau:=\mathcal{P}\left(s_{0}\right) \prod_{t=0}^{\infty} \mathcal{E}\left(o_{t} \mid s_{t}\right) \pi_{\theta}\left(a_{t} \mid o_{t}\right) \mathcal{C}\left(c_{t} \mid o_{t}, a_{t}\right) \mathcal{P}\left(s_{t+1} \mid s_{t}, a_{t}\right) d \tau,
$$
For a trajectory $\tau$, a random variable $R(\tau)$ represent the cumulative $\gamma$-discounted costs in $\tau$, i.e., $R(\tau)=\sum_{t=0}^{\infty} \gamma^{t} c_{t}$. 
Therefore, solving a POMDP is to find a policy $\pi_{\theta}$ that minimizes the following optimization objectives
$$
    \eta(\theta):=\mathbb{E}_{\tau\sim f}[R(\tau)]=\int_{\tau} f(\tau ; \theta) R(\tau) d \tau \longrightarrow \theta^{*} \in \arg \min _{\theta \in \Theta} \eta(\theta).
$$
And let
$
{V}_{\theta}\left(o_{t}\right):=\mathbb{E}_{\tau\sim f}\left[\sum_{t^{\prime}=t}^{\infty} \gamma^{t^{\prime}-t} c_{t^{\prime}} \mid o_{t}\right], \quad {Q}_{\theta}\left(o_{t}, a_{t}\right):=\mathbb{E}_{\tau\sim f}\left[\sum_{t^{\prime}=t}^{\infty} \gamma^{t^{\prime}-t} c_{t^{\prime}} \mid o_{t}, a_{t}\right].
$
Now we extend the main Policy Gradient Theorem~\citep{sutton2000policy} to episodic POMDPs, derive by simple modifications in their predecessors.
\begin{tcolorbox}
    \begin{theo}[Policy Gradient in POMDPs]\label{the:popg}
        For a given policy $\pi_{\theta}$ on a POMDP,
        $
        \nabla_{\theta} \eta(\theta)=\mathbb{E}_{\tau\sim f}\left[ \sum_{t=0}^{\infty} \nabla_{\theta} \log \pi_{\theta}\left(a_{t} \mid o_{t}\right) {Q}_{\theta}\left(o_{t}, a_{t}\right) \right].
        $
    \end{theo}
\end{tcolorbox}
\begin{tcolorbox}[breakable, enhanced]
    \begin{proof}
        For a given policy $\pi_{\theta}$, lets restate the value function at a given enriched observation $o_{0}$ :
        $
        {V}_{\theta}\left(o_{0}\right)=\int_{a_{0}} \pi_{\theta}\left(a_{0} \mid o_{0}\right) {Q}_{\theta}\left(o_{0}, a_{0}\right) d a_{0}.
        $
        For this value function, we compute the gradient with respect to parameters $\theta$, i.e.,
        $$
        \begin{aligned}
            \nabla_{\theta} {V}_{\theta}\left(o_{0}\right)=&\int_{a_{0}} \nabla_{\theta} \pi_{\theta}\left(a_{0} \mid o_{0}\right) {Q}_{\theta}\left(o_{0}, a_{0}\right) d a_{0}+\int_{a_{0}} \pi_{\theta}\left(a_{0} \mid o_{0}\right) \nabla_{\theta} {Q}_{\theta}\left(o_{0}, a_{0}\right) d a_{0}.
        \end{aligned}
        $$
        To further expand this gradient, consider the definition of ${Q}_{\theta}\left(o_{0}, a_{0}\right)$ using Bellman equation (the enriched observation space satisfies Markov property~\citep{mao2020information}),
        $$
        \begin{aligned}
        {Q}_{\theta}\left(o_{0}, a_{0}\right)=&\int_{s_1}\mathcal{P}(s_1 \mid s_0, a_0) \mathcal{C}(c_0 \mid s_0, a_0, s_1) d s_1 \\
        &+ \gamma \int_{s_{1},o_1} \mathcal{P}(s_1 \mid s_0, a_0) \mathcal{E}(o_1 \mid s_1) {V}_{\theta}\left(o_{1}\right) d s_{1} d o_1,
        \end{aligned}
        $$
        resulting in,
        $
        \nabla_{\theta} {Q}_{\theta}\left(o_{0}, a_{0}\right)=\gamma \int_{s_{1},o_1} \mathcal{P}(s_1 \mid s_0, a_0) \mathcal{E}(o_1 \mid s_1) \nabla_{\theta} {V}_{\theta}\left(o_{1}\right) d s_{1} d o_1,
        $
        since the first term in the definition of ${Q}_{\theta}\left(o_{0}, a_{0}\right)$ does not depend on the parameters $\theta$. 
        Following these steps, we derive $\nabla_{\theta} {V}_{\theta}\left(o_{1}\right)$,
        $$
        \begin{aligned}
            \nabla_{\theta} {V}_{\theta}\left(o_{1}\right)=&\int_{a_{1}} \nabla_{\theta} \pi_{\theta}\left(a_{1} \mid o_{1}\right) {Q}_{\theta}\left(o_{1}, a_{1}\right) d a_{1} +\int_{a_{1}} \pi_{\theta}\left(a_{1} \mid o_{1}\right) \nabla_{\theta} {Q}_{\theta}\left(o_{1}, a_{1}\right) d a_{1}.
        \end{aligned}
        $$
        We recursively compute the gradient of the value functions for later time steps and conclude that,
        $$
        \begin{aligned}
            &\nabla_{\theta} {V}_{\theta}\left(o_{t}\right)=\int_{a_{t}} \nabla_{\theta} \pi_{\theta}\left(a_{t} \mid o_{t}\right) {Q}_{\theta}\left(o_{t}, a_{t}\right) d a_{t} \\
            &+\gamma \int_{a_{t}} \pi_{\theta}\left(a_{t} \mid o_{t}\right) \left( \int_{s_{t+1},o_{t+1}} \mathcal{P}(s_{t+1} \mid s_{t}, a_{t}) \mathcal{E}(o_{t+1} \mid s_{t+1}) \nabla_{\theta} {V}_{\theta}\left(o_{t+1}\right) d s_{t+1} d o_{t+1}, \right) d a_{t}.
        \end{aligned}
        $$
        and repeating this decomposition results in
        $$
        \begin{aligned}
            \nabla_{\theta}\eta(\theta) = &\int_{s_0, o_0} \mathcal{P}(s_0) \mathcal{E}(o_0 \mid s_0) \nabla_{\theta} {V}_{\theta}\left(o_{0}\right) d s_0 d o_0 \\
            = & \int_{s_0, o_0} \mathcal{P}(s_0) \mathcal{E}(o_0 \mid s_0) \nabla_{\theta} \int_{a_{0}} \pi_{\theta}\left(a_{0} \mid o_{0}\right) {Q}_{\theta}\left(o_{0}, a_{0}\right) d a_{0} d s_0 d o_0 \\
            = & \int_{s_0, o_0, a_0} \mathcal{P}(s_0) \mathcal{E}(o_0 \mid s_0)  \nabla_{\theta} \pi_{\theta}\left(a_{0} \mid o_{0}\right) {Q}_{\theta}\left(o_{0}, a_{0}\right) d a_{0} d s_0 d o_0 \\
            & \quad + \int_{s_0, o_0, a_0} \mathcal{P}(s_0) \mathcal{E}(o_0 \mid s_0) \pi_{\theta}\left(a_{0} \mid o_{0}\right) \nabla_{\theta} {Q}_{\theta}\left(o_{0}, a_{0}\right) d a_{0} d s_0 d o_0 \\
            = & \int_{\tau} \sum_{t=0}^{\infty} \gamma^{t} f\left(\tau_{0 . . t-1}, s_{t}, o_{t}; \theta\right) \nabla_{\theta} \pi_{\theta}\left(a_{t} \mid o_{t}\right) {Q}_{\theta}\left(o_{t}, a_{t}\right) d \tau \\
            = & \int_{\tau} \sum_{t=0}^{\infty} \gamma^{t} f\left(\tau_{0 . . t-1}, s_{t}, o_{t}; \theta\right) \pi_{\theta}\left(a_{t} \mid o_{t}\right) \nabla_{\theta} \log \pi_{\theta}\left(a_{t} \mid o_{t}\right) {Q}_{\theta}\left(o_{t}, a_{t}\right) d \tau \\
            = & \int_{\tau} \sum_{t=0}^{\infty} \gamma^{t} f\left(\tau_{0 . . t}; \theta\right) \nabla_{\theta} \log \pi_{\theta}\left(a_{t} \mid o_{t}\right) {Q}_{\theta}\left(o_{t}, a_{t}\right) d \tau \\
            = & \mathbb{E}_{\tau\sim f}\left[ \sum_{t=0}^{\infty} \gamma^{t} \nabla_{\theta} \log \pi_{\theta}\left(a_{t} \mid o_{t}\right) {Q}_{\theta}\left(o_{t}, a_{t}\right) \right].
        \end{aligned}
        $$
    \end{proof}
\end{tcolorbox}

It is worth noting that in most open-source implementations~\citep{baselines,stable-baselines3} of reinforcement learning algorithms based on policy gradient theorem, the term $\gamma^t$ in the policy gradient is ignored, and a biased policy gradient estimator $\mathbb{E}_{\tau\sim f}\left[ \sum_{t=0}^{\infty} \nabla_{\theta} \log \pi_{\theta}\left(a_{t} \mid o_{t}\right) {Q}_{\theta}\left(o_{t}, a_{t}\right) \right]$ is obtained.
However, it has remained the most popular estimator of the policy gradient due to its effectiveness when applied to practical problems. 
The precise reason for this effectiveness, especially in the episodic setting, remains an open question~\citep{nota2020policy}.
An in-depth exploration of this issue is beyond the scope of this paper, and we still use a biased policy gradient estimator in all algorithm implementations.

It is worth noting that the on-policy policy gradient have two different but equivalent forms: \textit{trajectory-oriented} and \textit{state-oriented}.
The detailed derivation process of the former in the POMDP has been developed above.
We now derive the second form in the POMDP, which is also the standard form of Sutton's policy gradient theorem. 
Since this form is more compact, this paper mainly derives the relevant policy gradient based on it. 
Specifically, we can write the optimization objective of reinforcement learning in another equivalent form
$$
    J(\theta)=\int_{s,o} d_{\pi}(s) \mathcal{E}(o \mid s) V_{\theta}(o) d s d o = \int_{s,o,a} d_{\pi}(s) \mathcal{E}(o \mid s)\pi_{\theta}(a \mid o) Q_{\theta}(o, a) d s d o d a,
$$
where $d_{\pi}$ represents the distribution of the state-occupancy measure of policy $\pi$.
Then we have following theorem:
\begin{tcolorbox}
    \begin{theo}[Policy Gradient in POMDPs]\label{the:popg2}
        For a given policy $\pi_{\theta}$ on a POMDP,
        $
            \nabla_{\theta}J(\theta) \propto \mathbb{E}_{\pi}\left[ \nabla_{\theta} \log \pi_{\theta}\left(a \mid o\right) {Q}_{\theta}\left(o, a\right) \right],
        $
        where $\mathbb{E}_{\pi}$ refers to $\mathbb{E}_{s \sim d_{\pi}, o \sim \mathcal{E}, a \sim \pi_{\theta}}$ when both state, enriched observation and action distributions follow the policy $\pi_{\theta}$ (on policy).
    \end{theo}
\end{tcolorbox}

\begin{tcolorbox}[breakable, enhanced]
    \begin{proof}
        We first start with the derivative of the state value function:
        $$
            \begin{array}{rlr} 
                & \nabla_{\theta} V_{\theta}(o) & \\
                = & \nabla_{\theta}\left(\int_{a} \pi_{\theta}(a \mid o) Q_{\theta}(o, a) d a\right)\\
                = & \int_a \left(\nabla_{\theta} \pi_{\theta}(a \mid o) Q_{\theta}(o, a) + \pi_{\theta}(a \mid o) \nabla_{\theta} Q_{\theta}(o, a)\right) d a \\
                = & \int_a \left(\nabla_{\theta} \pi_{\theta}(a \mid o) Q_{\theta}(o, a) + \pi_{\theta}(a \mid o) \right.\\
                &\left.\quad\quad\nabla_{\theta} \left( \int_{s',o',c} \mathcal{P}\left(s^{\prime} \mid s, a\right)\mathcal{E}(o' \mid s')\left(\mathcal{C}(c \mid s,a,s')+V_{\theta}\left(o^{\prime}\right)\right)\right)d s' d o' d c\right) d a\\
                = & \int_a \left(\nabla_{\theta} \pi_{\theta}(a \mid o) Q_{\theta}(o, a) + \pi_{\theta}(a \mid o) \left( \int_{s',o'} \mathcal{P}\left(s^{\prime} \mid s, a\right)\mathcal{E}(o' \mid s')\nabla_{\theta}V_{\theta}\left(o^{\prime}\right)\right)d s' d o'\right) d a.
            \end{array}
        $$
        This equation has a nice recursive form and the future value function $V_{\theta}\left(o^{\prime}\right)$ can be repeated unrolled by following the same equation.
        
        Let's consider the following visitation sequence and label the probability of transitioning from obervation $o$ to observation $\mathrm{x}$ with policy $\pi_{\theta}$ after $\mathrm{k}$ step as $\rho^{\pi}(o \rightarrow x, k)$.
        $$
            o \xrightarrow[]{o \sim \pi_\theta(.\vert o)} o' \xrightarrow[]{a \sim \pi_\theta(.\vert o')} o'' \xrightarrow[]{a \sim \pi_\theta(.\vert o'')} \dots
        $$
        \begin{itemize}
            \item When $\mathrm{k}=0: \rho^{\pi}(o \rightarrow o, k=0)=1$.
            \item When $\mathrm{k}=1$, we scan through all possible actions and sum up the transition probabilities to the target observation: $\rho^{\pi}\left(o \rightarrow o^{\prime}, k=1\right)=\int_{a,s'} \pi_{\theta}(a \mid o) \mathcal{P}\left(s^{\prime} \mid s, a\right)\mathcal{E}(o' \mid s')d a d s'$.
            \item Imagine that the goal is to go from observation o to $\mathrm{x}$ after $\mathrm{k}+1$ steps while following policy $\pi_{\theta}$. We can first travel from o to a middle point o' (any observation can be a middle point, $o^{\prime} \in \mathcal{O}$ ) after $\mathrm{k}$ steps and then go to the final observation $x$ during the last step. In this way, we are able to update the visitation probability recursively: $\rho^{\pi}(o \rightarrow x, k+1)=\int_{o'} \rho^{\pi}\left(o \rightarrow o^{\prime}, k\right) \rho^{\pi}\left(o^{\prime} \rightarrow x, 1\right) d o'$.
        \end{itemize}
        Then we go back to unroll the recursive representation of $\nabla_{\theta} V_{\theta}(o)$.
        Let $\phi(o)=\int_{a} \nabla_{\theta} \pi_{\theta}(a \mid o) Q_{\theta}(o, a) d a$ to simplify the maths.
        If we keep on extending $\nabla_{\theta} V_{\theta}(\cdot)$ infinitely, it is easy to find out that we can transition from the starting observation o to any observation after any number of steps in this unrolling process and by summing up all the visitation probabilities, we get $\nabla_{\theta} V_{\theta}(o)$.
            \begin{align*}
                & \nabla_{\theta} V_{\theta}(o) \\
                = & \phi(o) + \int_{a} \left(\pi_{\theta}(a \mid o) \left( \int_{s',o'} \mathcal{P}\left(s^{\prime} \mid s, a\right)\mathcal{E}(o' \mid s')\nabla_{\theta}V_{\theta}\left(o^{\prime}\right)\right)d s' d o'\right) d a \\
                = & \phi(o) + \int_{s',o',a} \pi_{\theta}(a \mid o) \mathcal{P}\left(s^{\prime} \mid s, a\right)\mathcal{E}(o' \mid s')\nabla_{\theta}V_{\theta}\left(o^{\prime}\right) d s' d o' d a \\
                = & \phi(o) + \int_{o'} \rho^{\pi}\left(o \rightarrow o^{\prime}, 1\right) \nabla_{\theta}V_{\theta}\left(o^{\prime}\right) d o' \\
                = & \phi(o) + \int_{o'} \rho^{\pi}\left(o \rightarrow o^{\prime}, 1\right) \left[ \phi(o') + \int_{o''} \rho^{\pi}\left(o' \rightarrow o'', 1\right) \nabla_{\theta}V_{\theta}\left(o''\right) d o'' \right] d o' \\
                = & \phi(o) + \int_{o'} \rho^{\pi}\left(o \rightarrow o^{\prime}, 1\right) \phi(o') d o' +  \int_{o''} \rho^{\pi}\left(o \rightarrow o'', 2\right) \nabla_{\theta}V_{\theta}\left(o''\right) d o'' \\
                = & \phi(o) + \int_{o'} \rho^{\pi}\left(o \rightarrow o^{\prime}, 1\right) \phi(o') d o' +  \int_{o''} \rho^{\pi}\left(o \rightarrow o'', 2\right) \phi(o'')  d o'' + \\
                & \quad \int_{o'''} \rho^{\pi}\left(o \rightarrow o''', 3\right) \nabla_{\theta}V_{\theta}\left(o'''\right) d o''' \\
                = & \int_{x}\sum_{k=0}^{\infty}\rho^{\pi}(o \rightarrow x, k) \phi(x) d x
            \end{align*}
        The nice rewriting above allows us to exclude the derivative of $Q$-value function, $\nabla_{\theta} Q_{\theta}(o, a)$. 
        By plugging it into the objective function $J(\theta)$, we are getting the following:
        $$
            \begin{aligned}
                \nabla_\theta J(\theta)
                &= \int_{s_0, o_0} \mathcal{P}(s_0) \mathcal{E}(o_0 \mid s_0) \nabla_\theta V_{\theta}(o_0) d s_0 d o_0 \\
                & = \int_{s_0, o_0} \mathcal{P}(s_0) \mathcal{E}(o_0 \mid s_0) \int_{o'}\sum_{k=0}^{\infty}\rho^{\pi}(o_0 \rightarrow o', k) \phi(o') d o' d s_0 d o_0 \\
                & = \int_{s_0, o_0} \mathcal{P}(s_0) \mathcal{E}(o_0 \mid s_0) \int_{o'}\eta(o') \phi(o') d o' d s_0 d o_0 \\
                & = \int_{s_0, o_0} \mathcal{P}(s_0) \mathcal{E}(o_0 \mid s_0) \left(\int_{o'}\eta(o') d o'\right)\int_{o'} \frac{\eta(o')}{\int_{o'}\eta(o') d o' } \phi(o') d o' d s_0 d o_0 \\
                & \propto \int_{s_0, o_0} \mathcal{P}(s_0) \mathcal{E}(o_0 \mid s_0) \int_{o'} \frac{\eta(o')}{\int_{o'}\eta(o') d o' } \phi(o') d o' d s_0 d o_0 \\
                & = \int_{s_0, o_0} \mathcal{P}(s_0) \mathcal{E}(o_0 \mid s_0) \int_{o'} d_{\pi}(o') \phi(o') d o' d s_0 d o_0 \\
                & = \int_{o'} d_{\pi}(o') \phi(o') d o' = \int_{s,o,a} d_{\pi}(s) \mathcal{E}(o \mid s) \nabla_{\theta} \pi_{\theta}\left(a \mid o\right) {Q}_{\theta}\left(o, a\right) d s d o d a \\
                & = \int_{s,o,a} d_{\pi}(s) \mathcal{E}(o \mid s) \pi_{\theta}\left(a \mid o\right) \frac{\nabla_{\theta} \pi_{\theta}\left(a \mid o\right)}{\pi_{\theta}\left(a \mid o\right)} {Q}_{\theta}\left(o, a\right) d s d o d a\\
                & = \mathbb{E}_{\pi}\left[ \nabla_{\theta} \log \pi_{\theta}\left(a \mid o\right) {Q}_{\theta}\left(o, a\right) \right].
            \end{aligned}
        $$
    \end{proof}
\end{tcolorbox}

It can be seen that the terms within the two different policy gradient expectations are the same (when $\gamma^t$ is ignored), but the distributions on which the expectations are based are different.
The former is based on trajectory distribution and the latter is based on stationary state distribution.

When there are partial observations, policy gradient theorem can be a bit subtle and the derivations in this work might require more careful scrutiny.
Fortunately, after proper enrichment of the observation space of the original POMDP, we can naturally extend the policy gradient theory under MDP to POMDP with minor modification, and use it as the basis for the derivation of the policy gradient of the proposed MARL algorithm under POSG in this paper.

}}

\section{Missing Proofs}

\subsection{Proof of the Proposition~\ref{the:coma}}

    \begin{proof}
        \textbf{We proof the first equation first}.
        Extend the COMA algorithm, we have (for convenience here we suppose state space, observation space and action space are discrete)
        \begin{equation}
        	\begin{aligned}
        		J^i_{\text{actor}} \left( \mathbf{w}_{in}, \tilde{\mathbf{w}}_{sh}^i \right) &= \mathbf{E}_{s \sim d_{\boldsymbol{\pi}}, \boldsymbol{o} \sim \mathcal{E},\boldsymbol{a}\sim\boldsymbol{\pi} }\left[Q^{\boldsymbol{\pi},i}_{\tilde{\phi}^i_{sh}}(\boldsymbol{o},\boldsymbol{a}) - \mathcal{B}(\boldsymbol{o}, \boldsymbol{a}^{\setminus i})\right] \\
        		&= \textstyle\sum_{s} d_{\boldsymbol{\pi}}(s) \textstyle\sum_{\boldsymbol{o}} \mathcal{E}(\boldsymbol{o}|s) \textstyle\sum_{\boldsymbol{a}} \boldsymbol{\pi}_{\Psi_{in}}(\boldsymbol{a}|\boldsymbol{o}) \left[Q^{\boldsymbol{\pi},i}_{\tilde{\phi}^i_{sh}}(\boldsymbol{o},\boldsymbol{a}) - \mathcal{B}(\boldsymbol{o}, \boldsymbol{a}^{\setminus i})\right]
        		\\
        		J^i_{\text{critic}} \left( \mathbf{w}_{in}, \tilde{\mathbf{w}}_{sh}^i \right) &= \mathbf{E}_{\substack{s\sim d_{\boldsymbol{\pi}}, \boldsymbol{o}\sim\mathcal{E}, \boldsymbol{a}\sim\boldsymbol{\pi}}}
        		\left[ \left( Q^{\boldsymbol{\pi}, i}_{\tilde{\phi}^i_{sh}}(\boldsymbol{o}, \boldsymbol{a}) - Q^{\boldsymbol{\pi}}_{tg} \right)^2\right]\\
        		&=\textstyle\sum_{s} d_{\boldsymbol{\pi}}(s) \textstyle\sum_{\boldsymbol{o} } \mathcal{E}_{s}^{\boldsymbol{o}} \textstyle\sum_{\boldsymbol{a}}\boldsymbol{\pi}_{\Psi_{in}}(\boldsymbol{a}|\boldsymbol{o})\left[\left( Q^{\boldsymbol{\pi},i}_{\tilde{\phi}^i_{sh}}(\boldsymbol{o}, \boldsymbol{a}) - Q^{\boldsymbol{\pi}}_{tg} \right)^2 \right], \nonumber
        	\end{aligned}
        \end{equation}
        where the specific form of $Q^{\boldsymbol{\pi}}_{tg}$ is described in Section 4.2.
        We hypothesis joint policy $\boldsymbol{\pi}_{\Psi_{in}}$ is the product of local policy functions $\prod_{i=1}^n \pi^i_{\psi_{in}^i}$. Hence \textbf{the actor part} gradient w.r.t. each parameter $\psi_{in}^i$ becomes:
	    \begin{equation}
	        \begin{aligned}
    	        &\nabla_{\psi_{in}^i} J^i_{\text{actor}} \left( \mathbf{w}_{in}, \tilde{\mathbf{w}}_{sh}^i \right) 
    	        = \nabla_{\psi_{in}^i} \textstyle\sum_{s} d_{\boldsymbol{\pi}}(s) \textstyle\sum_{\boldsymbol{o}} \mathcal{E}(\boldsymbol{o}|s) \textstyle\sum_{\boldsymbol{a}} \boldsymbol{\pi}_{\Psi_{in}}(\boldsymbol{a}|\boldsymbol{o}) \left[Q^{\boldsymbol{\pi},i}_{\tilde{\phi}^i_{sh}}(\boldsymbol{o},\boldsymbol{a})- \mathcal{B}(\boldsymbol{o}, \boldsymbol{a}^{\setminus i})\right] \\
    	        &= \textstyle\sum_{s} d_{\boldsymbol{\pi}}(s) \textstyle\sum_{\boldsymbol{o}} \mathcal{E}(\boldsymbol{o}|s) \textstyle\sum_{\boldsymbol{a}} \nabla_{\psi_{in}^i} \boldsymbol{\pi}_{\Psi_{in}}(\boldsymbol{a}|\boldsymbol{o}) \left[Q^{\boldsymbol{\pi},i}_{\tilde{\phi}^i_{sh}}(\boldsymbol{o},\boldsymbol{a}) - \mathcal{B}(\boldsymbol{o}, \boldsymbol{a}^{\setminus i})\right] \\
    	        &= \textstyle\sum_{s} d_{\boldsymbol{\pi}}(s) \textstyle\sum_{\boldsymbol{o}} \mathcal{E}(\boldsymbol{o}|s) \textstyle\sum_{\boldsymbol{a}} \boldsymbol{\pi}_{\Psi_{in}}(\boldsymbol{a}|\boldsymbol{o})
    	        \nabla_{\psi_{in}^i}\log\pi_{\psi_{in}^i}^i(a^i|o^i)\left[Q^{\boldsymbol{\pi},i}_{\tilde{\phi}^i_{1,sh}}(\boldsymbol{o},\boldsymbol{a}) - \mathcal{B}(\boldsymbol{o}, \boldsymbol{a}^{\setminus i})\right] \\
    	        &=\mathbf{E}_{s \sim d_{\boldsymbol{\pi}}, \boldsymbol{o} \sim \mathcal{E},\boldsymbol{a}\sim\boldsymbol{\pi}}\Bigg[\nabla_{\psi_{in}^i}\log\pi_{\psi_{in}^i}^i(a^i|o^i)\left(Q^{\boldsymbol{\pi},i}_{\tilde{\phi}^i_{1,sh}}(\boldsymbol{o},\boldsymbol{a}) - \mathcal{B}(\boldsymbol{o}, \boldsymbol{a}^{\setminus i})\right)\Bigg]. \nonumber
	        \end{aligned}
	    \end{equation}
	    \textbf{For the critic part, the gradient sampled by current policy} is calculated by:
	    \begin{equation}
	        \begin{aligned}
    			&\nabla_{\psi_{in}^i} J^i_{\text{critic}} \left( \mathbf{w}_{in}, \tilde{\mathbf{w}}_{sh}^i \right) 
    			= \mathbf{E}_{\substack{s\sim d_{\boldsymbol{\pi}}, \boldsymbol{o}\sim\mathcal{E},\boldsymbol{a}\sim\boldsymbol{\pi}}}\Bigg[\nabla_{\psi_{in}^i}\log\pi_{\psi_{in}^i}^i(a^i|o^i)\delta^2\Bigg]. \nonumber
	        \end{aligned}
	    \end{equation}
	    where $\delta=Q^{\boldsymbol{\pi}, i}_{\tilde{\phi}^i_{sh}}(\boldsymbol{o}, \boldsymbol{a}) - Q^{\boldsymbol{\pi}}_{tg}$.
	    
	    Finally, we get the following on-policy joint gradient w.r.t. actor parameters:
	    \begin{align*}
	            &\nabla_{\psi_{in}^i} J_{ac}^i \left( \mathbf{w}_{in}, \tilde{\mathbf{w}}_{sh}^i \right) 
	            = \alpha_1 \nabla_{\psi_{in}^i} J_{actor}^i \left( \mathbf{w}_{in}, \tilde{\mathbf{w}}_{sh}^i \right) + \alpha_2 \nabla_{\psi_{in}^i} J_{critic}^i \left( \mathbf{w}_{in}, \tilde{\mathbf{w}}_{sh}^i \right) \\ 
	            =& \mathbf{E}_{s \sim d_{\boldsymbol{\pi}}, \boldsymbol{o} \sim \mathcal{E},\boldsymbol{a}\sim\boldsymbol{\pi}}\Bigg[\alpha_1\nabla_{\psi_{in}^i}\log\pi_{\psi_{in}^i}^i(a^i|o^i)\left(Q^{\boldsymbol{\pi},i}_{\tilde{\phi}^i_{sh}}(\boldsymbol{o},\boldsymbol{a}) - \mathcal{B}(\boldsymbol{o}, \boldsymbol{a}^{\setminus i})+\frac{\alpha_2}{\alpha_1}\delta^2\right)\Bigg].
    	        \nonumber
    	\end{align*}
	    The above is the proof of the first equation, \textbf{below we prove the second}.
	    \textbf{For actor part}, we have
	    \begin{equation}
	        \begin{aligned}
    	        \nabla_{\tilde{\phi}^i_{sh}} J^i_{\text{actor}} \left( \mathbf{w}_{in}, \tilde{\mathbf{w}}_{sh}^i \right) 
    	        &= \nabla_{\tilde{\phi}^i_{sh}} \textstyle\sum_{s} d_{\boldsymbol{\pi}}(s) \textstyle\sum_{\boldsymbol{o}} \mathcal{E}(\boldsymbol{o}|s) \textstyle\sum_{\boldsymbol{a}} \boldsymbol{\pi}_{\Psi_{in}}(\boldsymbol{a}|\boldsymbol{o}) \left[Q^{\boldsymbol{\pi},i}_{\tilde{\phi}^i_{sh}}(\boldsymbol{o},\boldsymbol{a}) - \mathcal{B}(\boldsymbol{o}, \boldsymbol{a}^{\setminus i})\right] \\
    	        &= \textstyle\sum_{s} d_{\boldsymbol{\pi}}(s) \textstyle\sum_{\boldsymbol{o}} \mathcal{E}(\boldsymbol{o}|s) \textstyle\sum_{\boldsymbol{a}} \boldsymbol{\pi}_{\Psi_{in}}(\boldsymbol{a}|\boldsymbol{o}) \nabla_{\tilde{\phi}^i_{sh}}Q^{\boldsymbol{\pi},i}_{\tilde{\phi}^i_{sh}}(\boldsymbol{o},\boldsymbol{a})\\
    	        &=\mathbf{E}_{\substack{s\sim d_{\boldsymbol{\pi}}, \boldsymbol{o}\sim\mathcal{E},\boldsymbol{a}\sim\boldsymbol{\pi}}}\left[ \nabla_{\tilde{\phi}^i_{sh}}Q^{\boldsymbol{\pi},i}_{\tilde{\phi}^i_{sh}}(\boldsymbol{o},\boldsymbol{a}) \right]. \nonumber
	        \end{aligned}
	    \end{equation}
	    \textbf{For critic part}, we have
	    \begin{equation}
	        \begin{aligned}
	            \nabla_{\tilde{\phi}^i_{sh}} J^i_{\text{critic}} \left( \mathbf{w}_{in}, \tilde{\mathbf{w}}_{sh}^i \right)  
	            &= \nabla_{\tilde{\phi}^i_{1,sh}} \textstyle\sum_{s} d_{\boldsymbol{\pi}}(s) \textstyle\sum_{\boldsymbol{o}} \mathcal{E}_{s}^{\boldsymbol{o}} \textstyle\sum_{\boldsymbol{a}}\boldsymbol{\pi_{\Psi_{in}}}(\boldsymbol{a}|\boldsymbol{o})\left( Q^{\boldsymbol{\pi},i}_{\tilde{\phi}^i_{sh}}(\boldsymbol{o}, \boldsymbol{a}) - Q^{\boldsymbol{\pi}}_{tg} \right)^2 \\
	            &= \textstyle\sum_{s} d_{\boldsymbol{\pi}}(s) \textstyle\sum_{\boldsymbol{o}} \mathcal{E}_{s}^{\boldsymbol{o}} \textstyle\sum_{\boldsymbol{a}}\boldsymbol{\pi_{\Psi_{in}}}(\boldsymbol{a}|\boldsymbol{o})2\delta\nabla_{\tilde{\phi}^i_{sh}}Q^{\boldsymbol{\pi},i}_{\tilde{\phi}^i_{sh}}(\boldsymbol{o}, \boldsymbol{a})\\
	            &=\mathbf{E}_{\substack{s\sim d_{\boldsymbol{\pi}}, \boldsymbol{o}\sim\mathcal{E},\boldsymbol{a}\sim\boldsymbol{\pi}}}\left[ 2\delta\nabla_{\tilde{\phi}^i_{sh}}Q^{\boldsymbol{\pi},i}_{\tilde{\phi}^i_{sh}}(\boldsymbol{o}, \boldsymbol{a}) \right]. \nonumber
	        \end{aligned}
	    \end{equation}
	    Finally, we get following on-policy joint gradient w.r.t. the first critic parameters
	    \begin{equation}
	    \begin{aligned}
	            \nabla_{\tilde{\phi}^i_{sh}} J_{ac}^i \left( \mathbf{w}_{in}, \tilde{\mathbf{w}}_{sh}^i \right)
	            &= \alpha_1 \nabla_{\tilde{\phi}^i_{sh}} J_{actor}^i \left( \mathbf{w}_{in}, \tilde{\mathbf{w}}_{sh}^i \right) + \alpha_2 \nabla_{\tilde{\phi}^i_{sh}} J_{critic}^i \left( \mathbf{w}_{in}, \tilde{\mathbf{w}}_{sh}^i \right)\\
	            &= \mathbf{E}_{\substack{s\sim d_{\boldsymbol{\pi}}, \boldsymbol{o}\sim\mathcal{E},\boldsymbol{a}\sim\boldsymbol{\pi}}}\left[ (\alpha_1+2\alpha_2\delta)\nabla_{\tilde{\phi}^i_{sh}}Q^{\boldsymbol{\pi},i}_{\tilde{\phi}^i_{sh}}(\boldsymbol{o}, \boldsymbol{a}) \right].
    	        \nonumber
    	\end{aligned}
	    \end{equation}
    \end{proof}

{\color{black}{

\subsection{Proof of Proposition \ref{the:ddpg}}
\begin{proof}
        We proof the first equation first.
        Extend the DDPG algorithm, we have\footnote{\textcolor{black}{Actually, there is no expectation of action because the policy is deterministic, i.e., $\boldsymbol{a}=\boldsymbol{\pi}_{\Psi_{in}}(\boldsymbol{o})$. However, in order to keep the form of the gradient consistent with other algorithms, we still retain the expectation of the action here, which does not affect the calculation of the gradient.}}
        	\begin{align*}
        		J^i_{\text{actor}} \left( \mathbf{w}_{in}, \tilde{\mathbf{w}}_{sh}^i \right) &= \mathbf{E}_{s \sim d_0, \boldsymbol{o} \sim \mathcal{E},\boldsymbol{a}\sim\boldsymbol{\pi} }\left[Q^{\boldsymbol{\pi},i}_{\tilde{\phi}^i_{sh}}(\boldsymbol{o},\boldsymbol{a})\right] \\
        		&= \textstyle\sum_{s} d_{0}(s) \textstyle\sum_{\boldsymbol{o}} \mathcal{E}(\boldsymbol{o}|s) \textstyle\sum_{\boldsymbol{a}} \boldsymbol{\pi}_{\Psi_{in}}(\boldsymbol{a}|\boldsymbol{o}) \left[Q^{\boldsymbol{\pi},i}_{\tilde{\phi}^i_{sh}}(\boldsymbol{o},\boldsymbol{a})\right]
        		\\
        		J^i_{\text{critic}} \left( \mathbf{w}_{in}, \tilde{\mathbf{w}}_{sh}^i \right) &= \mathbf{E}_{\substack{s\sim d_{0}, \boldsymbol{o}\sim\mathcal{E}, \boldsymbol{a}\sim\boldsymbol{\pi_0}}}
        		\left[ \left( Q^{\boldsymbol{\pi}, i}_{\tilde{\phi}^i_{sh}}(\boldsymbol{o}, \boldsymbol{a}) - Q^{\boldsymbol{\pi}}_{tg} \right)^2\right]\\
        		&=\textstyle\sum_{s} d_{0}(s) \textstyle\sum_{\boldsymbol{o} } \mathcal{E}_{0,s}^{\boldsymbol{o}} \textstyle\sum_{\boldsymbol{a}}\boldsymbol{\pi_0}(\boldsymbol{a}|\boldsymbol{o})\left[\left( Q^{\boldsymbol{\pi},i}_{\tilde{\phi}^i_{sh}}(\boldsymbol{o}, \boldsymbol{a}) - Q^{\boldsymbol{\pi}}_{tg} \right)^2\right], \nonumber
        	\end{align*}
        where $Q^{\boldsymbol{\pi}}_{tg} = \sum_{s^{\prime}} \mathcal{P}_{s, \boldsymbol{a}}^{s^{\prime}} \left( \mathcal{C}_{s, \boldsymbol{a}}^{i, s^{\prime}} + \gamma \sum_{\boldsymbol{o}^{\prime}} \mathcal{E}^{\boldsymbol{o}^{\prime}}_{0,s^{\prime}} \sum_{\boldsymbol{a}^{\prime}} \boldsymbol{\pi}_{\Psi_{in}}(\boldsymbol{a}^{\prime}|\boldsymbol{o}^{\prime}) Q^{\boldsymbol{\pi},i}_{\tilde{\phi}^i_{sh}}\left(\boldsymbol{o}^{\prime},\boldsymbol{a}^{\prime}\right) \right)$.
        Note that the joint policy $\boldsymbol{\pi}_{\Psi_{in}}$ is a deterministic policy.
        
        We hypothesis joint policy $\boldsymbol{\pi}_{\Psi_{in}}$ is the product of local policy functions $\prod_{i=1}^n \pi_{\psi_{in}^i}$.
	    Hence the gradient concerning each parameter $\psi_{in}^i$ becomes
	        \begin{align*}
    	        \nabla_{\psi_{in}^i} J^i_{\text{actor}} \left( \mathbf{w}_{in}, \tilde{\mathbf{w}}_{sh}^i \right) 
    	        &= \nabla_{\psi_{in}^i} \textstyle\sum_{s} d_{0}(s) \textstyle\sum_{\boldsymbol{o}} \mathcal{E}(\boldsymbol{o}|s) \textstyle\sum_{\boldsymbol{a}} \boldsymbol{\pi}_{\Psi_{in}}(\boldsymbol{a}|\boldsymbol{o}) \left[Q^{\boldsymbol{\pi},i}_{\tilde{\phi}^i_{sh}}(\boldsymbol{o},\boldsymbol{a}) \right] \\
    	        &= \textstyle\sum_{s} d_{0}(s) \textstyle\sum_{\boldsymbol{o}} \mathcal{E}(\boldsymbol{o}|s) \textstyle\sum_{\boldsymbol{a}} \boldsymbol{\pi}_{\Psi_{in}}(\boldsymbol{a}|\boldsymbol{o}) \left[\nabla_{\psi_{in}^i} Q^{\boldsymbol{\pi},i}_{\tilde{\phi}^i_{sh}}(\boldsymbol{o},\boldsymbol{a}) \right] \\
    	        &= \textstyle\sum_{s} d_{0}(s) \textstyle\sum_{\boldsymbol{o}} \mathcal{E}(\boldsymbol{o}|s) \textstyle\sum_{\boldsymbol{a}} \boldsymbol{\pi}_{\Psi_{in}}(\boldsymbol{a}|\boldsymbol{o}) \left[\nabla_{\psi_{in}^i}\pi^i_{\psi_{in}^i}(a^i|o^i)\nabla_{a^i} Q^{\boldsymbol{\pi},i}_{\tilde{\phi}^i_{sh}}(\boldsymbol{o},\boldsymbol{a}) \right] \\
    	        &=\mathbf{E}_{s \sim d_0, \boldsymbol{o} \sim \mathcal{E},\boldsymbol{a}\sim\boldsymbol{\pi}}\left[\nabla_{\psi_{in}^i}\pi^i_{\psi_{in}^i}(a^i|o^i)\nabla_{a^i} Q^{\boldsymbol{\pi},i}_{\tilde{\phi}^i_{sh}}(\boldsymbol{o},\boldsymbol{a})\right]. \nonumber
	        \end{align*}
	   Note that here we need to resample to calculate the unbias policy gradient so that we can't use the off-policy data directly.
       We solve the above problem by importance sampling.
        \textbf{For off-policy data saved in experience replay buffer} we have
        \begin{equation}
            \begin{aligned}
    	        \nabla_{\psi_{in}^i} J^i_{\text{actor}} \left( \mathbf{w}_{in}, \tilde{\mathbf{w}}_{sh}^i \right) 
    	        =\mathbf{E}_{s \sim d_0, \boldsymbol{o} \sim \mathcal{E},\boldsymbol{a}\sim\boldsymbol{\pi_0}}\left[\left(\frac{\pi^i_{\psi_{in}^i}(a^i|o^i)}{\pi^i_{0}(a^i|o^i)}\right)\nabla_{\psi_{in}^i}\pi^i_{\psi_{in}^i}(a^i|o^i)\nabla_{a^i} Q^{\boldsymbol{\pi},i}_{\tilde{\phi}^i_{sh}}(\boldsymbol{o},\boldsymbol{a})\right]=0. \nonumber
            \end{aligned}
        \end{equation}
        combined \textbf{the gradient calculated by the data which is resampled by current policy}, we get the joint off-policy policy gradient associated with the actor: 
	   \begin{equation}
            \begin{aligned}
    	        \nabla_{\psi_{in}^i} J^i_{\text{actor}} \left( \mathbf{w}_{in}, \tilde{\mathbf{w}}_{sh}^i \right)
    	       = & \mathbf{E}_{s \sim d_0, \boldsymbol{o} \sim \mathcal{E},\boldsymbol{a}\sim\boldsymbol{\pi}}\left[\nabla_{\psi_{in}^i}\pi^i_{\psi_{in}^i}(a^i|o^i)\nabla_{a^i} Q^{\boldsymbol{\pi},i}_{\tilde{\phi}^i_{sh}}(\boldsymbol{o},\boldsymbol{a})\right]. \nonumber
            \end{aligned}
        \end{equation}
        On the contrary, we directly use the off-policy data when calculate the value function gradient, so that we can't use the above resampled data.
	    \textbf{For the critic part}, we use above trick in reverse. 
	    \textbf{For off-policy data saved in experience replay buffer} we have
	    \begin{equation}
	        \begin{aligned}
	            \nabla_{\psi_{in}^i} J^i_{\text{critic}} \left( \mathbf{w}_{in}, \tilde{\mathbf{w}}_{sh}^i \right)
        		= \nabla_{\psi_{in}^i} \textstyle\sum_{s} d_{0}(s) \textstyle\sum_{\boldsymbol{o} } \mathcal{E}_{0,s}^{\boldsymbol{o}} \textstyle\sum_{\boldsymbol{a}}\boldsymbol{\pi_0}(\boldsymbol{a}|\boldsymbol{o})\left[\left( Q^{\boldsymbol{\pi},i}_{\tilde{\phi}^i_{sh}}(\boldsymbol{o}, \boldsymbol{a}) - Q^{\boldsymbol{\pi}}_{tg} \right)^2\right]=0. \nonumber
	        \end{aligned}
	    \end{equation}
	    Combining \textbf{the gradient calculated by data which is resampled by current policy}, we get the joint off-policy policy gradient associates with the critic part\footnote{\textcolor{black}{Since the policy is deterministic, we cannot directly get the specific values of the denominator in the importance ratio during implementation. Fortunately, we can draw on a probabilistic reinforcement learning framework~\citep{levine2018reinforcement} to estimate the denominator}.}:
	    \begin{equation}
	        \begin{aligned}
    			\nabla_{\psi_{in}^i} J^i_{\text{critic}} \left( \mathbf{w}_{in}, \tilde{\mathbf{w}}_{sh}^i \right)
    			= & \mathbf{E}_{\substack{s\sim d_{0}, \boldsymbol{o}\sim\mathcal{E},\boldsymbol{a}\sim\boldsymbol{\pi}}}\Bigg[2\delta\left(\frac{\pi^i_{0}(a^i|o^i)}{\pi^i_{\psi_{in}^i}(a^i|o^i)}\right)\nabla_{\psi_{in}^i}\pi^i_{\psi_{in}^i}(a^i|o^{i})\left(\nabla_{a^i}Q^{\boldsymbol{\pi}, i}_{\tilde{\phi}^i_{sh}}(\boldsymbol{o}, \boldsymbol{a})\right)\Bigg]. \nonumber 
	        \end{aligned}
	    \end{equation}
	    where $\delta=Q^{\boldsymbol{\pi}, i}_{\tilde{\phi}^i_{sh}}(\boldsymbol{o}, \boldsymbol{a}) - Q^{\boldsymbol{\pi}}_{tg}$.
	    We denote the clipped importance sampling term
	    $\min\left(\epsilon,\frac{\pi^i_{\psi_{in}^i}(a^i|o^i)}{\pi^i_{0}(a^i|o^i)}\right)$ as $CIM_{\epsilon}(\pi^i_{\psi_{in}^i};\pi^i_0)$ and 
	    $\min\left(\epsilon,\frac{\pi^i_{0}(a^i|o^i)}{\pi^i_{\psi_{in}^i}(a^i|o^i)}\right)$ as $CIM_{\epsilon}(\pi^i_0;\pi^i_{\psi_{in}^i})$.
	    Finally, we get following off-policy joint gradient w.r.t. actor parameters
	    \begin{equation}
	    \begin{aligned}
	            &\nabla_{\psi_{in}^i} J_{ac}^i \left( \mathbf{w}_{in}, \tilde{\mathbf{w}}_{sh}^i \right)
	            = \alpha_1 \nabla_{\psi_{in}^i} J_{actor}^i \left( \mathbf{w}_{in}, \tilde{\mathbf{w}}_{sh}^i \right) + \alpha_2 \nabla_{\psi_{in}^i} J_{critic}^i \left( \mathbf{w}_{in}, \tilde{\mathbf{w}}_{sh}^i \right) \\
	            &= 
    	        \mathbf{E}_{s \sim d_0, \boldsymbol{o} \sim \mathcal{E},\boldsymbol{a}\sim\boldsymbol{\pi}}\Bigg[(\alpha_1+2\alpha_2\delta \left(\frac{\pi^i_{0}(a^i|o^i)}{\pi^i_{\psi_{in}^i}(a^i|o^i)}\right)\nabla_{\psi_{in}^i}\pi^i_{\psi_{in}^i}(a^i|o^i)\nabla_{a^i} Q^{\boldsymbol{\pi},i}_{\tilde{\phi}^i_{sh}}(\boldsymbol{o},\boldsymbol{a})\Bigg]\\
	            &\simeq 
    	        \mathbf{E}_{s \sim d_0, \boldsymbol{o} \sim \mathcal{E},\boldsymbol{a}\sim\boldsymbol{\pi}}\Bigg[(\alpha_1+2\alpha_2\delta CIM_{\epsilon}(\pi^i_0;\pi^i_{\psi_{in}^i}))\nabla_{\psi_{in}^i}\pi^i_{\psi_{in}^i}(a^i|o^i)\nabla_{a^i} Q^{\boldsymbol{\pi},i}_{\tilde{\phi}^i_{sh}}(\boldsymbol{o},\boldsymbol{a})\Bigg].
    	        \nonumber
    	\end{aligned}
	    \end{equation}
	    The above is the proof of the first equation, \textbf{below we prove the second}.
	    \textbf{For actor part, we first calculate the gradient use the data resampled by the current policy}
	    \begin{equation}
	        \begin{aligned}
    	        \nabla_{\tilde{\phi}^i_{sh}} J^i_{\text{actor}} \left( \mathbf{w}_{in}, \tilde{\mathbf{w}}_{sh}^i \right) 
    	        &= \nabla_{\tilde{\phi}^i_{sh}} \textstyle\sum_{s} d_{0}(s) \textstyle\sum_{\boldsymbol{o}} \mathcal{E}(\boldsymbol{o}|s) \textstyle\sum_{\boldsymbol{a}} \boldsymbol{\pi}_{\Psi_{in}}(\boldsymbol{a}|\boldsymbol{o}) \left[Q^{\boldsymbol{\pi},i}_{\tilde{\phi}^i_{sh}}(\boldsymbol{o},\boldsymbol{a})\right] \\
    	        &= \textstyle\sum_{s} d_{0}(s) \textstyle\sum_{\boldsymbol{o}} \mathcal{E}(\boldsymbol{o}|s) \textstyle\sum_{\boldsymbol{a}} \boldsymbol{\pi}_{\Psi_{in}}(\boldsymbol{a}|\boldsymbol{o}) \nabla_{\tilde{\phi}^i_{sh}}Q^{\boldsymbol{\pi},i}_{\tilde{\phi}^i_{sh}}(\boldsymbol{o},\boldsymbol{a})\\
    	        &=\mathbf{E}_{\substack{s\sim d_{0}, \boldsymbol{o}\sim\mathcal{E},\boldsymbol{a}\sim\boldsymbol{\pi}}}\left[ \nabla_{\tilde{\phi}^i_{sh}}Q^{\boldsymbol{\pi},i}_{\tilde{\phi}^i_{sh}}(\boldsymbol{o},\boldsymbol{a}) \right]. \nonumber
	        \end{aligned}
	    \end{equation}
	    \textbf{For calculate the gradient use off-policy data} we also introduce importance sampling, then we have
	    \begin{equation}
	        \begin{aligned}
    	        \nabla_{\tilde{\phi}^i_{sh}} J^i_{\text{actor}} \left( \mathbf{w}_{in}, \tilde{\mathbf{w}}_{sh}^i \right) 
    	        &=\mathbf{E}_{\substack{s\sim d_{0}, \boldsymbol{o}\sim\mathcal{E},\boldsymbol{a}\sim\boldsymbol{\pi_0}}}\left[ \left(\frac{\pi^i_{\psi_{in}^i}(a^i|o^i)}{\pi^i_{0}(a^i|o^i)}\right)\nabla_{\tilde{\phi}^i_{sh}}Q^{\boldsymbol{\pi},i}_{\tilde{\phi}^i_{sh}}(\boldsymbol{o},\boldsymbol{a}) \right]. \nonumber
	        \end{aligned}
	    \end{equation}
	    We then combine the above two part gradient:
	    \begin{equation}
	        \begin{aligned}
    	        \nabla_{\tilde{\phi}^i_{sh}} J^i_{\text{actor}} \left( \mathbf{w}_{in}, \tilde{\mathbf{w}}_{sh}^i \right) 
    	        &=\mathbf{E}_{\substack{s\sim d_{0}, \boldsymbol{o}\sim\mathcal{E},\boldsymbol{a}\sim\boldsymbol{\pi}}}\left[ \nabla_{\tilde{\phi}^i_{sh}}Q^{\boldsymbol{\pi},i}_{\tilde{\phi}^i_{sh}}(\boldsymbol{o},\boldsymbol{a}) \right]+\\
    	        &\qquad\qquad\mathbf{E}_{\substack{s\sim d_{0}, \boldsymbol{o}\sim\mathcal{E},\boldsymbol{a}\sim\boldsymbol{\pi_0}}}\left[ \left(\frac{\pi^i_{\psi_{in}^i}(a^i|o^i)}{\pi^i_{0}(a^i|o^i)}\right)\nabla_{\tilde{\phi}^i_{sh}}Q^{\boldsymbol{\pi},i}_{\tilde{\phi}^i_{sh}}(\boldsymbol{o},\boldsymbol{a}) \right]. \nonumber
	        \end{aligned}
	    \end{equation}
	    \textbf{For critic part}, we first \textbf{calculate the gradient use the off-policy data}
	    \begin{equation}
	        \begin{aligned}
	            \nabla_{\tilde{\phi}^i_{sh}} J^i_{\text{critic}} \left( \mathbf{w}_{in}, \tilde{\mathbf{w}}_{sh}^i \right)  
	            &= \nabla_{\tilde{\phi}^i_{sh}} \textstyle\sum_{s} d_{0}(s) \textstyle\sum_{\boldsymbol{o}} \mathcal{E}_{0,s}^{\boldsymbol{o}} \textstyle\sum_{\boldsymbol{a}}\boldsymbol{\pi_0}(\boldsymbol{a}|\boldsymbol{o})\left[\left( Q^{\boldsymbol{\pi},i}_{\tilde{\phi}^i_{sh}}(\boldsymbol{o}, \boldsymbol{a}) - Q^{\boldsymbol{\pi}}_{tg} \right)^2\right] \\
	            &= \textstyle\sum_{s} d_{0}(s) \textstyle\sum_{\boldsymbol{o}} \mathcal{E}_{0,s}^{\boldsymbol{o}} \textstyle\sum_{\boldsymbol{a}}\boldsymbol{\pi_0}(\boldsymbol{a}|\boldsymbol{o})2\delta\nabla_{\tilde{\phi}^i_{sh}}Q^{\boldsymbol{\pi},i}_{\tilde{\phi}^i_{sh}}(\boldsymbol{o}, \boldsymbol{a})\\
	            &=\mathbf{E}_{\substack{s\sim d_{0}, \boldsymbol{o}\sim\mathcal{E},\boldsymbol{a}\sim\boldsymbol{\pi_0}}}\left[ 2\delta\nabla_{\tilde{\phi}^i_{sh}}Q^{\boldsymbol{\pi},i}_{\tilde{\phi}^i_{sh}}(\boldsymbol{o}, \boldsymbol{a}) \right]. \nonumber
	        \end{aligned}
	    \end{equation}
	    Then, \textbf{for resampled data}, we have:
	    \begin{equation}
	        \begin{aligned}
	            \nabla_{\tilde{\phi}^i_{sh}} J^i_{\text{critic}} \left( \mathbf{w}_{in}, \tilde{\mathbf{w}}_{sh}^i \right)  
	            &\simeq \mathbf{E}_{\substack{s\sim d_{0}, \boldsymbol{o}\sim\mathcal{E},\boldsymbol{a}\sim\boldsymbol{\pi}}}\left[ 2\delta \left(\frac{\pi^i_{0}(a^i|o^i)}{\pi^i_{\psi_{in}^i}(a^i|o^i)}\right)\nabla_{\tilde{\phi}^i_{sh}}Q^{\boldsymbol{\pi},i}_{\tilde{\phi}^i_{sh}}(\boldsymbol{o}, \boldsymbol{a}) \right]. \nonumber
	        \end{aligned}
	    \end{equation}
	    Combining the above two parts' gradient, we have:
	    \begin{equation}
	       \begin{aligned}
	            \nabla_{\tilde{\phi}^i_{sh}} J^i_{\text{critic}} \left( \mathbf{w}_{in}, \tilde{\mathbf{w}}_{sh}^i \right)  
	            = & \mathbf{E}_{\substack{s\sim d_{0}, \boldsymbol{o}\sim\mathcal{E},\boldsymbol{a}\sim\boldsymbol{\pi_0}}}\left[ 2\delta\nabla_{\tilde{\phi}^i_{sh}}Q^{\boldsymbol{\pi},i}_{\tilde{\phi}^i_{sh}}(\boldsymbol{o}, \boldsymbol{a}) \right]+\\
	            &\qquad\qquad\mathbf{E}_{\substack{s\sim d_{0}, \boldsymbol{o}\sim\mathcal{E},\boldsymbol{a}\sim\boldsymbol{\pi}}}\left[ 2\delta \left(\frac{\pi^i_{0}(a^i|o^i)}{\pi^i_{\psi_{in}^i}(a^i|o^i)}\right)\nabla_{\tilde{\phi}^i_{sh}}Q^{\boldsymbol{\pi},i}_{\tilde{\phi}^i_{sh}}(\boldsymbol{o}, \boldsymbol{a}) \right]. \nonumber
	        \end{aligned}
	    \end{equation}
	    Finally, we get following off-policy joint gradient w.r.t. the critic parameters
	    \begin{equation}
	    \begin{aligned}
	            \nabla_{\tilde{\phi}^i_{sh}} J_{ac}^i \left( \mathbf{w}_{in}, \tilde{\mathbf{w}}_{sh}^i \right)
	            &= \alpha_1 \nabla_{\tilde{\phi}^i_{sh}} J_{actor}^i \left( \mathbf{w}_{in}, \tilde{\mathbf{w}}_{sh}^i \right) + \alpha_2 \nabla_{\tilde{\phi}^i_{sh}} J_{critic}^i \left( \mathbf{w}_{in}, \tilde{\mathbf{w}}_{sh}^i \right)\\
	            &= \mathbf{E}_{\substack{s\sim d_{0}, \boldsymbol{o}\sim\mathcal{E},\boldsymbol{a}\sim\boldsymbol{\pi_0}}}\left[ \left(\alpha_1 \left(\frac{\pi^i_{\psi_{in}^i}(a^i|o^i)}{\pi^i_{0}(a^i|o^i)}\right)+2\alpha_2\delta\right)\nabla_{\tilde{\phi}^i_{sh}}Q^{\boldsymbol{\pi},i}_{\tilde{\phi}^i_{sh}}(\boldsymbol{o}, \boldsymbol{a}) \right]+\\
	            &\qquad\qquad\mathbf{E}_{\substack{s\sim d_{0}, \boldsymbol{o}\sim\mathcal{E},\boldsymbol{a}\sim\boldsymbol{\pi}}}\left[ \left(\alpha_1+2\alpha_2\delta \left(\frac{\pi^i_{0}(a^i|o^i)}{\pi^i_{\psi_{in}^i}(a^i|o^i)}\right)\right)\nabla_{\tilde{\phi}^i_{sh}}Q^{\boldsymbol{\pi},i}_{\tilde{\phi}^i_{sh}}(\boldsymbol{o}, \boldsymbol{a}) \right]\\
	            &\simeq \mathbf{E}_{\substack{s\sim d_{0}, \boldsymbol{o}\sim\mathcal{E},\boldsymbol{a}\sim\boldsymbol{\pi_0}}}\left[ \left(\alpha_1 CIM_{\epsilon}(\pi^i_{\psi_{in}^i};\pi^i_0)+2\alpha_2\delta\right)\nabla_{\tilde{\phi}^i_{sh}}Q^{\boldsymbol{\pi},i}_{\tilde{\phi}^i_{sh}}(\boldsymbol{o}, \boldsymbol{a}) \right]+\\
	            &\qquad\qquad\mathbf{E}_{\substack{s\sim d_{0}, \boldsymbol{o}\sim\mathcal{E},\boldsymbol{a}\sim\boldsymbol{\pi}}}\left[ \left(\alpha_1+2\alpha_2\delta CIM_{\epsilon}(\pi^i_0;\pi^i_{\psi_{in}^i})\right)\nabla_{\tilde{\phi}^i_{sh}}Q^{\boldsymbol{\pi},i}_{\tilde{\phi}^i_{sh}}(\boldsymbol{o}, \boldsymbol{a}) \right].
    	        \nonumber
    	\end{aligned}
	    \end{equation}

    \end{proof}

}}

\section{Extend Proposition~\ref{the:ddpg} to Other Off-policy Algorithm}

{\color{black}{

\subsection{F2A2-TD3: Extend Proposition \ref{the:ddpg} to TD3}
    The Twin-Delayed Deep Deterministic Policy Gradient algorithm is similar to the DDPG algorithm, just adding a twin $Q$-value function to a stable training process except for some tricks for implementation.
    Formally, we can extend it to the fully decentralized multi-agent scenario use the variant of Proposition \ref{the:ddpg}.
    
    \begin{tcolorbox}[breakable, enhanced]
	\begin{prop}
        [Off-Policy TD3-Based Joint Gradient]\label{the:td3}
        We set $\boldsymbol{\pi}_0$ the data collection policy sampled from experience replay buffer, \textcolor{black}{$d_0$ represents the distribution of the state-occupancy measure of policy $\boldsymbol{\pi}_0$} and $\delta$ the TD(0)-error.
        So the gradient of $J_{ac}^i \left( \mathbf{w}_{in}, \tilde{\mathbf{w}}_{sh}^i \right) $ is:
    	\begin{equation}
    	    \begin{aligned}
    	        & \nabla_{\psi_{in}^i} J_{ac}^i \left( \mathbf{w}_{in}, \tilde{\mathbf{w}}_{sh}^i \right) \\
    	       &= \mathbf{E}_{s \sim d_0, \boldsymbol{o} \sim \mathcal{E},\boldsymbol{a}\sim\boldsymbol{\pi}}\Bigg[\nabla_{\psi_{in}^i} \pi^i_{\psi_{in}^i}(a^i|o^i)\left((\alpha_1+2\alpha_2\delta_1\left(\frac{\pi^i_{0}(a^i|o^i)}{\pi^i_{\psi_{in}^i}(a^i|o^i)}\right)) \nabla_{a^i}Q^{\boldsymbol{\pi},i}_{\tilde{\phi}^i_{1,sh}}(\boldsymbol{o},\boldsymbol{a})+\right.\\
    	       &\quad\quad\quad\quad\quad\quad\quad\quad\left. 2\alpha_2\delta_2\left(\frac{\pi^i_{0}(a^i|o^i)}{\pi^i_{\psi_{in}^i}(a^i|o^i)}\right) \nabla_{a^i}Q^{\boldsymbol{\pi},i}_{\tilde{\phi}^i_{2,sh}}(\boldsymbol{o},\boldsymbol{a})\right)\Bigg]. \\
    	        &\nabla_{\tilde{\phi}^i_{1,sh}} J_{ac}^i \left( \mathbf{w}_{in}, \tilde{\mathbf{w}}_{sh}^i \right) \\
	           =& \mathbf{E}_{\substack{s\sim d_{0}, \boldsymbol{o}\sim\mathcal{E},\boldsymbol{a}\sim\boldsymbol{\pi_0}}}\left[ (\alpha_1\left(\frac{\pi^i_{\psi_{in}^i}(a^i|o^i)}{\pi^i_{0}(a^i|o^i)}\right)+2\alpha_2\delta_1)\nabla_{\tilde{\phi}^i_{1,sh}}Q^{\boldsymbol{\pi},i}_{\tilde{\phi}^i_{1,sh}}(\boldsymbol{o}, \boldsymbol{a}) \right]+\\
	            &\qquad\mathbf{E}_{\substack{s\sim d_{0}, \boldsymbol{o}\sim\mathcal{E},\boldsymbol{a}\sim\boldsymbol{\pi}}}\left[ (\alpha_1+2\alpha_2\delta_1\left(\frac{\pi^i_{0}(a^i|o^i)}{\pi^i_{\psi_{in}^i}(a^i|o^i)}\right)) \nabla_{\tilde{\phi}^i_{1,sh}}Q^{\boldsymbol{\pi},i}_{\tilde{\phi}^i_{1,sh}}(\boldsymbol{o}, \boldsymbol{a}) \right] \\
	            & \nabla_{\tilde{\phi}^i_{2,sh}} J_{ac}^i \left( \mathbf{w}_{in}, \tilde{\mathbf{w}}_{sh}^i \right) \\
	           =& \mathbf{E}_{\substack{s\sim d_{0}, \boldsymbol{o}\sim\mathcal{E},\boldsymbol{a}\sim\boldsymbol{\pi_0}}}\left[ 2\alpha_2\delta_2\nabla_{\tilde{\phi}^i_{2,sh}}Q^{\boldsymbol{\pi},i}_{\tilde{\phi}^i_{2,sh}}(\boldsymbol{o}, \boldsymbol{a}) \right]+\\
	           &\qquad\mathbf{E}_{\substack{s\sim d_{0}, \boldsymbol{o}\sim\mathcal{E},\boldsymbol{a}\sim\boldsymbol{\pi}}}\left[ 2\alpha_2\delta_2 \left(\frac{\pi^i_{0}(a^i|o^i)}{\pi^i_{\psi_{in}^i}(a^i|o^i)}\right)\nabla_{\tilde{\phi}^i_{2,sh}}Q^{\boldsymbol{\pi},i}_{\tilde{\phi}^i_{2,sh}}(\boldsymbol{o}, \boldsymbol{a}) \right].
    	        \nonumber
    	    \end{aligned}
    	\end{equation}
    \end{prop}
    \end{tcolorbox}
	
	\begin{proof}
        We proof the first equation first.
        Extend the TD3 algorithm, we have
        \begin{equation}
        	\begin{aligned}
                &J^i_{\text{actor}} \left( \mathbf{w}_{in}, \tilde{\mathbf{w}}_{sh}^i \right) = \mathbf{E}_{s \sim d_0, \boldsymbol{o} \sim \mathcal{E},\boldsymbol{a}\sim\boldsymbol{\pi} }\left[Q^{\boldsymbol{\pi},i}_{\tilde{\phi}^i_{1,sh}}(\boldsymbol{o},\boldsymbol{a})\right] \\
        		&= \textstyle\sum_{s} d_{0}(s) \textstyle\sum_{\boldsymbol{o}} \mathcal{E}(\boldsymbol{o}|s) \textstyle\sum_{\boldsymbol{a}} \boldsymbol{\pi}_{\Psi_{in}}(\boldsymbol{a}|\boldsymbol{o}) \left[Q^{\boldsymbol{\pi},i}_{\tilde{\phi}^i_{1,sh}}(\boldsymbol{o},\boldsymbol{a})\right]
        		\\
        		&J^i_{\text{critic}} \left( \mathbf{w}_{in}, \tilde{\mathbf{w}}_{sh}^i \right) = \mathbf{E}_{\substack{s\sim d_{0}, \boldsymbol{o}\sim\mathcal{E}, \boldsymbol{a}\sim\boldsymbol{\pi_0}}}
        		\left[ \left( Q^{\boldsymbol{\pi}, i}_{\tilde{\phi}^i_{1,sh}}(\boldsymbol{o}, \boldsymbol{a}) - Q^{\boldsymbol{\pi}}_{tg} \right)^2 + \left( Q^{\boldsymbol{\pi}, i}_{\tilde{\phi}^i_{2,sh}}(\boldsymbol{o}, \boldsymbol{a}) - Q^{\boldsymbol{\pi}}_{tg} \right)^2 \right]\\
        		&=\textstyle\sum_{s} d_{0}(s) \textstyle\sum_{\boldsymbol{o} } \mathcal{E}_{0,s}^{\boldsymbol{o}} \textstyle\sum_{\boldsymbol{a}}\boldsymbol{\pi_0}(\boldsymbol{a}|\boldsymbol{o})\left[\left( Q^{\boldsymbol{\pi},i}_{\tilde{\phi}^i_{1,sh}}(\boldsymbol{o}, \boldsymbol{a}) - Q^{\boldsymbol{\pi}}_{tg} \right)^2 + \left( Q^{\boldsymbol{\pi},i}_{\tilde{\phi}^i_{2,sh}}(\boldsymbol{o}, \boldsymbol{a}) - Q^{\boldsymbol{\pi}}_{tg} \right)^2\right], \nonumber
        	\end{aligned}
        \end{equation}
        where $Q^{\boldsymbol{\pi}}_{tg} = \sum_{s^{\prime}} \mathcal{P}_{s, \boldsymbol{a}}^{s^{\prime}} \left( \mathcal{C}_{s, \boldsymbol{a}}^{i, s^{\prime}} + \gamma \sum_{\boldsymbol{o}^{\prime}} \mathcal{E}^{\boldsymbol{o}^{\prime}}_{0,s^{\prime}} \sum_{\boldsymbol{a}^{\prime}} \boldsymbol{\pi}_{\Psi_{in}}(\boldsymbol{a}^{\prime}|\boldsymbol{o}^{\prime})\left(\min_{j=1,2} Q^{\boldsymbol{\pi},i}_{\tilde{\phi}^i_{j,sh}}\left(\boldsymbol{o}^{\prime},\boldsymbol{a}^{\prime}\right)\right) \right)$.
        Note that the joint policy $\boldsymbol{\pi}_{\Psi_{in}}$ is a deterministic policy.
        
        We hypothesis joint policy $\boldsymbol{\pi}_{\Psi_{in}}$ is the product of local policy functions $\prod_{i=1}^n \pi_{\psi_{in}^i}$.
	    Hence the gradient concerning  each parameter $\psi_{in}^i$ becomes
	        \begin{align*}
    	        \nabla_{\psi_{in}^i} J^i_{\text{actor}} \left( \mathbf{w}_{in}, \tilde{\mathbf{w}}_{sh}^i \right) 
    	        &= \nabla_{\psi_{in}^i} \textstyle\sum_{s} d_{0}(s) \textstyle\sum_{\boldsymbol{o}} \mathcal{E}(\boldsymbol{o}|s) \textstyle\sum_{\boldsymbol{a}} \boldsymbol{\pi}_{\Psi_{in}}(\boldsymbol{a}|\boldsymbol{o}) \left[Q^{\boldsymbol{\pi},i}_{\tilde{\phi}^i_{1,sh}}(\boldsymbol{o},\boldsymbol{a}) \right] \\
    	        &= \textstyle\sum_{s} d_{0}(s) \textstyle\sum_{\boldsymbol{o}} \mathcal{E}(\boldsymbol{o}|s) \textstyle\sum_{\boldsymbol{a}} \boldsymbol{\pi}_{\Psi_{in}}(\boldsymbol{a}|\boldsymbol{o}) \left[\nabla_{\psi_{in}^i} Q^{\boldsymbol{\pi},i}_{\tilde{\phi}^i_{1,sh}}(\boldsymbol{o},\boldsymbol{a}) \right] \\
    	        &= \textstyle\sum_{s} d_{0}(s) \textstyle\sum_{\boldsymbol{o}} \mathcal{E}(\boldsymbol{o}|s) \textstyle\sum_{\boldsymbol{a}} \boldsymbol{\pi}_{\Psi_{in}}(\boldsymbol{a}|\boldsymbol{o}) \left[\nabla_{\psi_{in}^i}\pi^i_{\psi_{in}^i}(a^i|o^i)\nabla_{a^i} Q^{\boldsymbol{\pi},i}_{\tilde{\phi}^i_{1,sh}}(\boldsymbol{o},\boldsymbol{a}) \right] \\
    	        &=\mathbf{E}_{s \sim d_0, \boldsymbol{o} \sim \mathcal{E},\boldsymbol{a}\sim\boldsymbol{\pi}}\left[\nabla_{\psi_{in}^i}\pi^i_{\psi_{in}^i}(a^i|o^i)\nabla_{a^i} Q^{\boldsymbol{\pi},i}_{\tilde{\phi}^i_{1,sh}}(\boldsymbol{o},\boldsymbol{a})\right]. \nonumber
	        \end{align*}
	    Note that we need to resample to calculate the unbiased policy gradient so that we cannot use the off-policy data directly. We solve the above problem by importance sampling.
        \textbf{For off-policy data saved in experience replay buffer} we have:
        \begin{equation}
    	        \nabla_{\psi_{in}^i} J^i_{\text{actor}} \left( \mathbf{w}_{in}, \tilde{\mathbf{w}}_{sh}^i \right)
    	        =\mathbf{E}_{s \sim d_0, \boldsymbol{o} \sim \mathcal{E},\boldsymbol{a}\sim\boldsymbol{\pi_0}}\left[\left(\frac{\pi^i_{\psi_{in}^i}(a^i|o^i)}{\pi^i_{0}(a^i|o^i)}\right)\nabla_{\psi_{in}^i}\pi^i_{\psi_{in}^i}(a^i|o^i)\nabla_{a^i} Q^{\boldsymbol{\pi},i}_{\tilde{\phi}^i_{1,sh}}(\boldsymbol{o},\boldsymbol{a})\right]=0. \nonumber
        \end{equation}
        Combining \textbf{the gradient calculated by data which is resampled by current policy}, we get the joint
        off-policy policy gradient associates with the actor 
	   \begin{equation}
            \begin{aligned}
    	        \nabla_{\psi_{in}^i} J^i_{\text{actor}} \left( \mathbf{w}_{in}, \tilde{\mathbf{w}}_{sh}^i \right) 
    	        = & 
    	        \mathbf{E}_{s \sim d_0, \boldsymbol{o} \sim \mathcal{E},\boldsymbol{a}\sim\boldsymbol{\pi}}\left[\nabla_{\psi_{in}^i}\pi^i_{\psi_{in}^i}(a^i|o^i)\nabla_{a^i} Q^{\boldsymbol{\pi},i}_{\tilde{\phi}^i_{1,sh}}(\boldsymbol{o},\boldsymbol{a})\right]. \nonumber
            \end{aligned}
        \end{equation}
        On the contrary, we directly use the off-policy data when calculating the value function gradient so that we cannot use the above-resampled data.
	    \textbf{For the critic part}, we use the above trick in reverse. 
	    \textbf{For off-policy data saved in experience replay buffer} we have
	    \begin{equation}
	    \begin{aligned}
	            &\nabla_{\psi_{in}^i} J^i_{\text{critic}} \left( \mathbf{w}_{in}, \tilde{\mathbf{w}}_{sh}^i \right)  \\
	            &= \nabla_{\psi_{in}^i}\textstyle\sum_{s} d_{0}(s) \textstyle\sum_{\boldsymbol{o}} \mathcal{E}_{0,s}^{\boldsymbol{o}} \textstyle\sum_{\boldsymbol{a}}\boldsymbol{\pi_0}(\boldsymbol{a}|\boldsymbol{o})\left[\left( Q^{\boldsymbol{\pi},i}_{\tilde{\phi}^i_{1,sh}}(\boldsymbol{o}, \boldsymbol{a}) - Q^{\boldsymbol{\pi}}_{tg} \right)^2+\left( Q^{\boldsymbol{\pi},i}_{\tilde{\phi}^i_{2,sh}}(\boldsymbol{o}, \boldsymbol{a}) - Q^{\boldsymbol{\pi}}_{tg} \right)^2\right] \\
	            &= 0, \nonumber
	    \end{aligned}
	    \end{equation}
	    combined \textbf{the gradient calculated by the data which is resampled by current policy}, we get the joint off-policy policy gradient associates with the critic part 
	    \begin{equation}
	        \begin{aligned}
    			&\nabla_{\psi_{in}^i} J^i_{\text{critic}} \left( \mathbf{w}_{in}, \tilde{\mathbf{w}}_{sh}^i \right) \\
    			&= \mathbf{E}_{\substack{s\sim d_{0}, \boldsymbol{o}\sim\mathcal{E},\boldsymbol{a}\sim\boldsymbol{\pi}}}\Bigg[2\left(\frac{\pi^i_{0}(a^i|o^i)}{\pi^i_{\psi_{in}^i}(a^i|o^i)}\right)\nabla_{\psi_{in}^i}\pi^i_{\psi_{in}^i}(a^i|o^{i})\left(\delta_1\nabla_{a^i}Q^{\boldsymbol{\pi}, i}_{\tilde{\phi}^i_{1,sh}}(\boldsymbol{o}, \boldsymbol{a})+\delta_2\nabla_{a^i}Q^{\boldsymbol{\pi}, i}_{\tilde{\phi}^i_{2,sh}}(\boldsymbol{o}, \boldsymbol{a})\right)\Bigg]. \nonumber
	        \end{aligned}
	    \end{equation}
	    where $\delta_j=Q^{\boldsymbol{\pi}, i}_{\tilde{\phi}^i_{j,sh}}(\boldsymbol{o}, \boldsymbol{a}) - Q^{\boldsymbol{\pi}}_{tg}$.
	    We denote the clipped importance sampling term
	    $\min\left(\epsilon,\frac{\pi^i_{\psi_{in}^i}(a^i|o^i)}{\pi^i_{0}(a^i|o^i)}\right)$ as $CIM_{\epsilon}(\pi^i_{\psi_{in}^i};\pi^i_0)$ and 
	    $\min\left(\epsilon,\frac{\pi^i_{0}(a^i|o^i)}{\pi^i_{\psi_{in}^i}(a^i|o^i)}\right)$ as $CIM_{\epsilon}(\pi^i_0;\pi^i_{\psi_{in}^i})$.
	    Finally, we get following off-policy joint gradient w.r.t. actor parameters:
	    \begin{equation}
	    \begin{aligned}
	            &\nabla_{\psi_{in}^i} J_{ac}^i \left( \mathbf{w}_{in}, \tilde{\mathbf{w}}_{sh}^i \right) = \alpha_1 \nabla_{\psi_{in}^i} J_{actor}^i \left( \mathbf{w}_{in}, \tilde{\mathbf{w}}_{sh}^i \right) + \alpha_2 \nabla_{\psi_{in}^i} J_{critic}^i \left( \mathbf{w}_{in}, \tilde{\mathbf{w}}_{sh}^i \right) \\ 
    	       &= \mathbf{E}_{s \sim d_0, \boldsymbol{o} \sim \mathcal{E},\boldsymbol{a}\sim\boldsymbol{\pi}}\Bigg[\nabla_{\psi_{in}^i} \pi^i_{\psi_{in}^i}(a^i|o^i)\left((\alpha_1+2\alpha_2\delta_1\left(\frac{\pi^i_{0}(a^i|o^i)}{\pi^i_{\psi_{in}^i}(a^i|o^i)}\right)) \nabla_{a^i}Q^{\boldsymbol{\pi},i}_{\tilde{\phi}^i_{1,sh}}(\boldsymbol{o},\boldsymbol{a})+\right.\\
    	       &\quad\quad\quad\quad\quad\quad\quad\quad\left. 2\alpha_2\delta_2\left(\frac{\pi^i_{0}(a^i|o^i)}{\pi^i_{\psi_{in}^i}(a^i|o^i)}\right) \nabla_{a^i}Q^{\boldsymbol{\pi},i}_{\tilde{\phi}^i_{2,sh}}(\boldsymbol{o},\boldsymbol{a})\right)\Bigg]. \\
    	       &\simeq \mathbf{E}_{s \sim d_0, \boldsymbol{o} \sim \mathcal{E},\boldsymbol{a}\sim\boldsymbol{\pi}}\Bigg[\nabla_{\psi_{in}^i} \pi^i_{\psi_{in}^i}(a^i|o^i)\left((\alpha_1+2\alpha_2\delta_1CIM_{\epsilon}(\pi^i_0;\pi^i_{\psi_{in}^i})) \nabla_{a^i}Q^{\boldsymbol{\pi},i}_{\tilde{\phi}^i_{1,sh}}(\boldsymbol{o},\boldsymbol{a})+\right.\\
    	       &\quad\quad\quad\quad\quad\quad\quad\quad\left.2\alpha_2\delta_2CIM_{\epsilon}(\pi^i_0;\pi^i_{\psi_{in}^i}) \nabla_{a^i}Q^{\boldsymbol{\pi},i}_{\tilde{\phi}^i_{2,sh}}(\boldsymbol{o},\boldsymbol{a})\right)\Bigg].
    	        \nonumber
    	\end{aligned}
	    \end{equation}
	    The above is the proof of the first equation, \textbf{below we prove the second}.
	    \textbf{For actor part, we first calculate the gradient use the data resampled by the current policy}
	    \begin{equation}
	        \begin{aligned}
    	        \nabla_{\tilde{\phi}^i_{1,sh}} J^i_{\text{actor}} \left( \mathbf{w}_{in}, \tilde{\mathbf{w}}_{sh}^i \right) 
    	        &= \nabla_{\tilde{\phi}^i_{1,sh}} \textstyle\sum_{s} d_{0}(s) \textstyle\sum_{\boldsymbol{o}} \mathcal{E}(\boldsymbol{o}|s) \textstyle\sum_{\boldsymbol{a}} \boldsymbol{\pi}_{\Psi_{in}}(\boldsymbol{a}|\boldsymbol{o}) \left[Q^{\boldsymbol{\pi},i}_{\tilde{\phi}^i_{1,sh}}(\boldsymbol{o},\boldsymbol{a})\right] \\
    	        &= \textstyle\sum_{s} d_{0}(s) \textstyle\sum_{\boldsymbol{o}} \mathcal{E}(\boldsymbol{o}|s) \textstyle\sum_{\boldsymbol{a}} \boldsymbol{\pi}_{\Psi_{in}}(\boldsymbol{a}|\boldsymbol{o}) \nabla_{\tilde{\phi}^i_{1,sh}}Q^{\boldsymbol{\pi},i}_{\tilde{\phi}^i_{1,sh}}(\boldsymbol{o},\boldsymbol{a})\\
    	        &=\mathbf{E}_{\substack{s\sim d_{0}, \boldsymbol{o}\sim\mathcal{E},\boldsymbol{a}\sim\boldsymbol{\pi}}}\left[ \nabla_{\tilde{\phi}^i_{1,sh}}Q^{\boldsymbol{\pi},i}_{\tilde{\phi}^i_{1,sh}}(\boldsymbol{o},\boldsymbol{a}) \right]. \nonumber
	        \end{aligned}
	    \end{equation}
	    \textbf{For calculating the gradient use off-policy data} we also introduce importance sampling, then we have
	    \begin{equation}
	        \begin{aligned}
    	        \nabla_{\tilde{\phi}^i_{1,sh}} J^i_{\text{actor}} \left( \mathbf{w}_{in}, \tilde{\mathbf{w}}_{sh}^i \right) 
    	        &=\mathbf{E}_{\substack{s\sim d_{0}, \boldsymbol{o}\sim\mathcal{E},\boldsymbol{a}\sim\boldsymbol{\pi_0}}}\left[ \left(\frac{\pi^i_{\psi_{in}^i}(a^i|o^i)}{\pi^i_{0}(a^i|o^i)}\right)\nabla_{\tilde{\phi}^i_{1,sh}}Q^{\boldsymbol{\pi},i}_{\tilde{\phi}^i_{1,sh}}(\boldsymbol{o},\boldsymbol{a}) \right]. \nonumber
	        \end{aligned}
	    \end{equation}
	    We then combine the above two part gradient:
	    \begin{equation}
	        \begin{aligned}
    	        \nabla_{\tilde{\phi}^i_{1,sh}} J^i_{\text{actor}} \left( \mathbf{w}_{in}, \tilde{\mathbf{w}}_{sh}^i \right) 
    	        &=\mathbf{E}_{\substack{s\sim d_{0}, \boldsymbol{o}\sim\mathcal{E},\boldsymbol{a}\sim\boldsymbol{\pi}}}\left[ \nabla_{\tilde{\phi}^i_{1,sh}}Q^{\boldsymbol{\pi},i}_{\tilde{\phi}^i_{1,sh}}(\boldsymbol{o},\boldsymbol{a}) \right]+\\
    	        &\qquad\qquad\mathbf{E}_{\substack{s\sim d_{0}, \boldsymbol{o}\sim\mathcal{E},\boldsymbol{a}\sim\boldsymbol{\pi_0}}}\left[ \left(\frac{\pi^i_{\psi_{in}^i}(a^i|o^i)}{\pi^i_{0}(a^i|o^i)}\right)\nabla_{\tilde{\phi}^i_{1,sh}}Q^{\boldsymbol{\pi},i}_{\tilde{\phi}^i_{1,sh}}(\boldsymbol{o},\boldsymbol{a}) \right]. \nonumber
	        \end{aligned}
	    \end{equation}
	    \textbf{For critic part}, we first \textbf{calculate the gradient use the off-policy data}
	    \begin{equation}
	        \begin{aligned}
	            &\nabla_{\tilde{\phi}^i_{1,sh}} J^i_{\text{critic}} \left( \mathbf{w}_{in}, \tilde{\mathbf{w}}_{sh}^i \right) \\ 
	            =& \nabla_{\tilde{\phi}^i_{1,sh}} \textstyle\sum_{s} d_{0}(s) \textstyle\sum_{\boldsymbol{o}} \mathcal{E}_{0,s}^{\boldsymbol{o}} \textstyle\sum_{\boldsymbol{a}}\boldsymbol{\pi_0}(\boldsymbol{a}|\boldsymbol{o})\left(\left( Q^{\boldsymbol{\pi},i}_{\tilde{\phi}^i_{1,sh}}(\boldsymbol{o}, \boldsymbol{a}) - Q^{\boldsymbol{\pi}}_{tg} \right)^2+\left( Q^{\boldsymbol{\pi},i}_{\tilde{\phi}^i_{2,sh}}(\boldsymbol{o}, \boldsymbol{a}) - Q^{\boldsymbol{\pi}}_{tg} \right)^2\right) \\
	            =& \textstyle\sum_{s} d_{0}(s) \textstyle\sum_{\boldsymbol{o}} \mathcal{E}_{0,s}^{\boldsymbol{o}} \textstyle\sum_{\boldsymbol{a}}\boldsymbol{\pi_0}(\boldsymbol{a}|\boldsymbol{o})2\delta_1\nabla_{\tilde{\phi}^i_{1,sh}}Q^{\boldsymbol{\pi},i}_{\tilde{\phi}^i_{1,sh}}(\boldsymbol{o}, \boldsymbol{a})\\
	            =&\mathbf{E}_{\substack{s\sim d_{0}, \boldsymbol{o}\sim\mathcal{E},\boldsymbol{a}\sim\boldsymbol{\pi_0}}}\left[ 2\delta_1\nabla_{\tilde{\phi}^i_{1,sh}}Q^{\boldsymbol{\pi},i}_{\tilde{\phi}^i_{1,sh}}(\boldsymbol{o}, \boldsymbol{a}) \right]. \nonumber
	        \end{aligned}
	    \end{equation}
	    Then, \textbf{for resampled data}, we have
	    \begin{equation}
	        \begin{aligned}
	            \nabla_{\tilde{\phi}^i_{1,sh}} J^i_{\text{critic}} \left( \mathbf{w}_{in}, \tilde{\mathbf{w}}_{sh}^i \right)  
	            &= \mathbf{E}_{\substack{s\sim d_{0}, \boldsymbol{o}\sim\mathcal{E},\boldsymbol{a}\sim\boldsymbol{\pi}}}\left[ 2\delta_1 \left(\frac{\pi^i_{0}(a^i|o^i)}{\pi^i_{\psi_{in}^i}(a^i|o^i)}\right)\nabla_{\tilde{\phi}^i_{1,sh}}Q^{\boldsymbol{\pi},i}_{\tilde{\phi}^i_{1,sh}}(\boldsymbol{o}, \boldsymbol{a}) \right]. \nonumber
	        \end{aligned}
	    \end{equation}
	    Combining above two parts' gradient, we have:
	    \begin{equation}
	        \begin{aligned}
	            \nabla_{\tilde{\phi}^i_{1,sh}} J^i_{\text{critic}} \left( \mathbf{w}_{in}, \tilde{\mathbf{w}}_{sh}^i \right)  
	            =& \mathbf{E}_{\substack{s\sim d_{0}, \boldsymbol{o}\sim\mathcal{E},\boldsymbol{a}\sim\boldsymbol{\pi_0}}}\left[ 2\delta_1\nabla_{\tilde{\phi}^i_{1,sh}}Q^{\boldsymbol{\pi},i}_{\tilde{\phi}^i_{1,sh}}(\boldsymbol{o}, \boldsymbol{a}) \right]+\\
	            &\qquad\qquad \mathbf{E}_{\substack{s\sim d_{0}, \boldsymbol{o}\sim\mathcal{E},\boldsymbol{a}\sim\boldsymbol{\pi}}}\left[ 2\delta_1 \left(\frac{\pi^i_{0}(a^i|o^i)}{\pi^i_{\psi_{in}^i}(a^i|o^i)}\right)\nabla_{\tilde{\phi}^i_{1,sh}}Q^{\boldsymbol{\pi},i}_{\tilde{\phi}^i_{1,sh}}(\boldsymbol{o}, \boldsymbol{a}) \right]. \nonumber
	        \end{aligned}
	    \end{equation}
	    Finally, we get the following off-policy joint gradient w.r.t. the first critic parameters:
	    \begin{equation}
	    \begin{aligned}
	           \nabla_{\tilde{\phi}^i_{1,sh}} J_{ac}^i \left( \mathbf{w}_{in}, \tilde{\mathbf{w}}_{sh}^i \right)
	            =  &\alpha_1 \nabla_{\tilde{\phi}^i_{1,sh}} J_{actor}^i \left( \mathbf{w}_{in}, \tilde{\mathbf{w}}_{sh}^i \right) + \alpha_2 \nabla_{\tilde{\phi}^i_{1,sh}} J_{critic}^i \left( \mathbf{w}_{in}, \tilde{\mathbf{w}}_{sh}^i \right)\\
	            =& \mathbf{E}_{\substack{s\sim d_{0}, \boldsymbol{o}\sim\mathcal{E},\boldsymbol{a}\sim\boldsymbol{\pi_0}}}\left[ (\alpha_1\left(\frac{\pi^i_{\psi_{in}^i}(a^i|o^i)}{\pi^i_{0}(a^i|o^i)}\right)+2\alpha_2\delta_1)\nabla_{\tilde{\phi}^i_{1,sh}}Q^{\boldsymbol{\pi},i}_{\tilde{\phi}^i_{1,sh}}(\boldsymbol{o}, \boldsymbol{a}) \right]+\\
	            &\qquad\mathbf{E}_{\substack{s\sim d_{0}, \boldsymbol{o}\sim\mathcal{E},\boldsymbol{a}\sim\boldsymbol{\pi}}}\left[ (\alpha_1+2\alpha_2\delta_1\left(\frac{\pi^i_{0}(a^i|o^i)}{\pi^i_{\psi_{in}^i}(a^i|o^i)}\right)) \nabla_{\tilde{\phi}^i_{1,sh}}Q^{\boldsymbol{\pi},i}_{\tilde{\phi}^i_{1,sh}}(\boldsymbol{o}, \boldsymbol{a}) \right] \\
	            \simeq& \mathbf{E}_{\substack{s\sim d_{0}, \boldsymbol{o}\sim\mathcal{E},\boldsymbol{a}\sim\boldsymbol{\pi_0}}}\left[ (\alpha_1 CIM_{\epsilon}(\pi^i_{\psi_{in}^i};\pi^i_0) +2\alpha_2\delta_1)\nabla_{\tilde{\phi}^i_{1,sh}}Q^{\boldsymbol{\pi},i}_{\tilde{\phi}^i_{1,sh}}(\boldsymbol{o}, \boldsymbol{a}) \right]+\\
	            &\qquad\mathbf{E}_{\substack{s\sim d_{0}, \boldsymbol{o}\sim\mathcal{E},\boldsymbol{a}\sim\boldsymbol{\pi}}}\left[ (\alpha_1+2\alpha_2\delta_1 CIM_{\epsilon}(\pi^i_0;\pi^i_{\psi_{in}^i})) \nabla_{\tilde{\phi}^i_{1,sh}}Q^{\boldsymbol{\pi},i}_{\tilde{\phi}^i_{1,sh}}(\boldsymbol{o}, \boldsymbol{a}) \right].
    	        \nonumber
    	\end{aligned}
	    \end{equation}
	    \textbf{For the third equation and use resampled data to calculate the actor part gradient}, we have
	    \begin{equation}
	        \begin{aligned}
    	        \nabla_{\tilde{\phi}^i_{2,sh}} J^i_{\text{actor}} \left( \mathbf{w}_{in}, \tilde{\mathbf{w}}_{sh}^i \right) 
    	        &= \nabla_{\tilde{\phi}^i_{2,sh}} \textstyle\sum_{s} d_{0}(s) \textstyle\sum_{\boldsymbol{o}} \mathcal{E}(\boldsymbol{o}|s) \textstyle\sum_{\boldsymbol{a}} \boldsymbol{\pi}_{\Psi_{in}}(\boldsymbol{a}|\boldsymbol{o}) \left[Q^{\boldsymbol{\pi},i}_{\tilde{\phi}^i_{1,sh}}(\boldsymbol{o},\boldsymbol{a})\right] = 0. \nonumber
	        \end{aligned}
	    \end{equation}
	    Similarly, \textbf{for off-policy data} the gradient also is 0.
	    For \textbf{critic part} the gradient is similar as the second equation
	    \begin{equation}
	        \begin{aligned}
	            \nabla_{\tilde{\phi}^i_{2,sh}} J^i_{\text{critic}} \left( \mathbf{w}_{in}, \tilde{\mathbf{w}}_{sh}^i \right)  
	            =& \mathbf{E}_{\substack{s\sim d_{0}, \boldsymbol{o}\sim\mathcal{E},\boldsymbol{a}\sim\boldsymbol{\pi_0}}}\left[ 2\delta_2\nabla_{\tilde{\phi}^i_{2,sh}}Q^{\boldsymbol{\pi},i}_{\tilde{\phi}^i_{2,sh}}(\boldsymbol{o}, \boldsymbol{a}) \right]+\\
	            &\qquad\qquad\mathbf{E}_{\substack{s\sim d_{0}, \boldsymbol{o}\sim\mathcal{E},\boldsymbol{a}\sim\boldsymbol{\pi}}}\left[ 2\delta_2 \left(\frac{\pi^i_{0}(a^i|o^i)}{\pi^i_{\psi_{in}^i}(a^i|o^i)}\right)\nabla_{\tilde{\phi}^i_{2,sh}}Q^{\boldsymbol{\pi},i}_{\tilde{\phi}^i_{2,sh}}(\boldsymbol{o}, \boldsymbol{a}) \right]. \nonumber
	        \end{aligned}
	    \end{equation}
	    Finally, we get following off-policy joint gradient w.r.t. the second critic parameters:
	    \begin{equation}
	    \begin{aligned}
	            &\nabla_{\tilde{\phi}^i_{2,sh}} J_{ac}^i \left( \mathbf{w}_{in}, \tilde{\mathbf{w}}_{sh}^i \right) = \alpha_1 \nabla_{\tilde{\phi}^i_{2,sh}} J_{actor}^i \left( \mathbf{w}_{in}, \tilde{\mathbf{w}}_{sh}^i \right) + \alpha_2 \nabla_{\tilde{\phi}^i_{2,sh}} J_{critic}^i \left( \mathbf{w}_{in}, \tilde{\mathbf{w}}_{sh}^i \right)\\
	            =& \mathbf{E}_{\substack{s\sim d_{0}, \boldsymbol{o}\sim\mathcal{E},\boldsymbol{a}\sim\boldsymbol{\pi_0}}}\left[ 2\alpha_2\delta_2\nabla_{\tilde{\phi}^i_{2,sh}}Q^{\boldsymbol{\pi},i}_{\tilde{\phi}^i_{2,sh}}(\boldsymbol{o}, \boldsymbol{a}) \right]+\\
                &\qquad\mathbf{E}_{\substack{s\sim d_{0}, \boldsymbol{o}\sim\mathcal{E},\boldsymbol{a}\sim\boldsymbol{\pi}}}\left[ 2\alpha_2\delta_2 \left(\frac{\pi^i_{0}(a^i|o^i)}{\pi^i_{\psi_{in}^i}(a^i|o^i)}\right)\nabla_{\tilde{\phi}^i_{2,sh}}Q^{\boldsymbol{\pi},i}_{\tilde{\phi}^i_{2,sh}}(\boldsymbol{o}, \boldsymbol{a}) \right].
    	        \nonumber
    	\end{aligned}
	    \end{equation}

    \end{proof}
    
}}

{\color{black}{
\subsection{F2A2-SAC: Extend Proposition \ref{the:ddpg} to SAC}
    Soft Actor Critic (SAC) is an algorithm that optimizes a stochastic policy in an off-policy way.
    A central feature of SAC is entropy regularization. 
    The policy is trained to maximize a trade-off between expected return and entropy, a measure of randomness in the policy.
    We only used one $Q$-value function and omitted the estimation of the state-value($V$) function.
    Formally, we can extend it to the fully decentralized multi-agent scenario use the variant of Proposition \ref{the:ddpg}:
    \begin{tcolorbox}[breakable, enhanced]
	\begin{prop}
        [Off-Policy SAC-Based Joint Gradient]\label{the:sac}
        We set $\boldsymbol{\pi}_0$ the data collection policy sampled from experience replay buffer, \textcolor{black}{$d_0$ represents the distribution of the state-occupancy measure of policy $\boldsymbol{\pi}_0$} and $\delta$ the TD(0)-error. 
        So the gradient of $J_{ac}^i \left( \mathbf{w}_{in}, \tilde{\mathbf{w}}_{sh}^i \right)$ is:
    	    \begin{align*}
    	        &\nabla_{\psi_{in}^i} J_{ac}^i \left( \mathbf{w}_{in}, \tilde{\mathbf{w}}_{sh}^i \right) \\
    	       = &\mathbf{E}_{s \sim d_0, \boldsymbol{o} \sim \mathcal{E},\boldsymbol{a}\sim\boldsymbol{\pi_0}}\Bigg[\alpha_1 \left(\frac{\pi^i_{\psi_{in}^i}(a^i|o^i)}{\pi^i_{0}(a^i|o^i)}\right)\nabla_{\psi_{in}^i}\log\pi_{\psi_{in}^i}^i(a^i|o^i)\\
    	       & \qquad\qquad\qquad\quad\left(Q^{\boldsymbol{\pi},i}_{\tilde{\phi}^i_{1,sh}}(\boldsymbol{o},\boldsymbol{a})-\alpha\log\pi^i_{\psi_{in}^i}(a^i|o^i) - \mathcal{B}(\boldsymbol{o}, \boldsymbol{a}^{\setminus i})\right)\Bigg] + \\
	            &\mathbf{E}_{s \sim d_{0}, \boldsymbol{o} \sim \mathcal{E},\boldsymbol{a}\sim\boldsymbol{\pi}}\Bigg[\alpha_1\nabla_{\psi_{in}^i}\log\pi_{\psi_{in}^i}^i(a^i|o^i)\\
	            &\qquad\qquad\qquad\quad\left(Q^{\boldsymbol{\pi},i}_{\tilde{\phi}^i_{1,sh}}(\boldsymbol{o},\boldsymbol{a})-\alpha\log\pi^i_{\psi_{in}^i}(a^i|o^i) - \mathcal{B}(\boldsymbol{o}, \boldsymbol{a}^{\setminus i})\right)\Bigg] + \\
    	        &\mathbf{E}_{s \sim d_{0}, \boldsymbol{o} \sim \mathcal{E},\boldsymbol{a}\sim\boldsymbol{\pi}}\Bigg[\alpha_2 \left(\frac{\pi^i_{0}(a^i|o^i)}{\pi^i_{\psi_{in}^i}(a^i|o^i)}\right)\nabla_{\psi_{in}^i}\log\pi_{\psi_{in}^i}^i(a^i|o^i)\left(\delta_1^2+\delta_2^2\right)\Bigg].
    	        \\
    	        &\nabla_{\tilde{\phi}^i_{1,sh}} J_{ac}^i \left( \mathbf{w}_{in}, \tilde{\mathbf{w}}_{sh}^i \right) \\
	            &= \mathbf{E}_{\substack{s\sim d_{0}, \boldsymbol{o}\sim\mathcal{E},\boldsymbol{a}\sim\boldsymbol{\pi_0}}}\left[ (\alpha_1 \left(\frac{\pi^i_{\psi_{in}^i}(a^i|o^i)}{\pi^i_{0}(a^i|o^i)}\right)+2\alpha_2\delta_1)\nabla_{\tilde{\phi}^i_{1,sh}}Q^{\boldsymbol{\pi},i}_{\tilde{\phi}^i_{1,sh}}(\boldsymbol{o}, \boldsymbol{a}) \right]+\\
	            &\qquad\qquad\mathbf{E}_{\substack{s\sim d_{0}, \boldsymbol{o}\sim\mathcal{E},\boldsymbol{a}\sim\boldsymbol{\pi}}}\left[ \left(\alpha_1+2\alpha_2\delta_1 \left(\frac{\pi^i_{0}(a^i|o^i)}{\pi^i_{\psi_{in}^i}(a^i|o^i)}\right)\right)\nabla_{\tilde{\phi}^i_{1,sh}}Q^{\boldsymbol{\pi},i}_{\tilde{\phi}^i_{1,sh}}(\boldsymbol{o}, \boldsymbol{a}) \right].
	            \\
	            &\nabla_{\tilde{\phi}^i_{2,sh}} J_{ac}^i \left( \mathbf{w}_{in}, \tilde{\mathbf{w}}_{sh}^i \right) =
	           \mathbf{E}_{\substack{s\sim d_{0}, \boldsymbol{o}\sim\mathcal{E},\boldsymbol{a}\sim\boldsymbol{\pi_0}}}\left[ 2\alpha_2\delta_2\nabla_{\tilde{\phi}^i_{2,sh}}Q^{\boldsymbol{\pi},i}_{\tilde{\phi}^i_{2,sh}}(\boldsymbol{o}, \boldsymbol{a}) \right]+\\
	            &\qquad\qquad\mathbf{E}_{\substack{s\sim d_{0}, \boldsymbol{o}\sim\mathcal{E},\boldsymbol{a}\sim\boldsymbol{\pi}}}\left[ 2\alpha_2\delta_2 \left(\frac{\pi^i_{0}(a^i|o^i)}{\pi^i_{\psi_{in}^i}(a^i|o^i)}\right)\nabla_{\tilde{\phi}^i_{2,sh}}Q^{\boldsymbol{\pi},i}_{\tilde{\phi}^i_{2,sh}}(\boldsymbol{o}, \boldsymbol{a}) \right].
	            \nonumber
    	    \end{align*}
    \end{prop}
    \end{tcolorbox}
    
    \begin{proof}
        \textbf{We proof the first equation first}.
        Extend the SAC algorithm, we have (for convenience here we suppose state space, observation space and action space are discrete)
        \begin{equation}
        	\begin{aligned}
        		&J^i_{\text{actor}} \left( \mathbf{w}_{in}, \tilde{\mathbf{w}}_{sh}^i \right) = \mathbf{E}_{s \sim d_{0}, \boldsymbol{o} \sim \mathcal{E},\boldsymbol{a}\sim\boldsymbol{\pi} }\left[Q^{\boldsymbol{\pi},i}_{\tilde{\phi}^i_{1,sh}}(\boldsymbol{o},\boldsymbol{a})-\alpha\log\pi^i_{\psi_{in}^i}(a^i|o^i) - \mathcal{B}(\boldsymbol{o}, \boldsymbol{a}^{\setminus i})\right] \\
        		&= \textstyle\sum_{s} d_{0}(s) \textstyle\sum_{\boldsymbol{o}} \mathcal{E}(\boldsymbol{o}|s) \textstyle\sum_{\boldsymbol{a}} \boldsymbol{\pi}_{\Psi_{in}}(\boldsymbol{a}|\boldsymbol{o}) \left[Q^{\boldsymbol{\pi},i}_{\tilde{\phi}^i_{1,sh}}(\boldsymbol{o},\boldsymbol{a})-\alpha\log\pi^i_{\psi_{in}^i}(a^i|o^i) - \mathcal{B}(\boldsymbol{o}, \boldsymbol{a}^{\setminus i})\right]
        		\\
        		&J^i_{\text{critic}} \left( \mathbf{w}_{in}, \tilde{\mathbf{w}}_{sh}^i \right) = \mathbf{E}_{\substack{s\sim d_{0}, \boldsymbol{o}\sim\mathcal{E}, \boldsymbol{a}\sim\boldsymbol{\pi_0}}}
        		\left[ \left( Q^{\boldsymbol{\pi}, i}_{\tilde{\phi}^i_{1,sh}}(\boldsymbol{o}, \boldsymbol{a}) - Q^{\boldsymbol{\pi}}_{tg} \right)^2 + \left( Q^{\boldsymbol{\pi}, i}_{\tilde{\phi}^i_{2,sh}}(\boldsymbol{o}, \boldsymbol{a}) - Q^{\boldsymbol{\pi}}_{tg} \right)^2 \right]\\
        		&=\textstyle\sum_{s} d_{0}(s) \textstyle\sum_{\boldsymbol{o} } \mathcal{E}_{0,s}^{\boldsymbol{o}} \textstyle\sum_{\boldsymbol{a}}\boldsymbol{\pi_0}(\boldsymbol{a}|\boldsymbol{o})\left[\left( Q^{\boldsymbol{\pi},i}_{\tilde{\phi}^i_{1,sh}}(\boldsymbol{o}, \boldsymbol{a}) - Q^{\boldsymbol{\pi}}_{tg} \right)^2 + \left( Q^{\boldsymbol{\pi},i}_{\tilde{\phi}^i_{2,sh}}(\boldsymbol{o}, \boldsymbol{a}) - Q^{\boldsymbol{\pi}}_{tg} \right)^2\right], \nonumber
        	\end{aligned}
        \end{equation}
        where $$Q^{\boldsymbol{\pi}}_{tg} = \sum_{s^{\prime}} \mathcal{P}_{s, \boldsymbol{a}}^{s^{\prime}} \left( \mathcal{C}_{s, \boldsymbol{a}}^{i, s^{\prime}} + \gamma \sum_{\boldsymbol{o}^{\prime}} \mathcal{E}^{\boldsymbol{o}^{\prime}}_{0,s^{\prime}} \sum_{\boldsymbol{a}^{\prime}} \boldsymbol{\pi}_{\Psi_{in}}(\boldsymbol{a}^{\prime}|\boldsymbol{o}^{\prime})\left(\min_{j=1,2} Q^{\boldsymbol{\pi},i}_{\tilde{\phi}^i_{j,sh}}\left(\boldsymbol{o}^{\prime},\boldsymbol{a}^{\prime}\right)-\alpha\log\pi^i_{\psi_{in}^i}(a^{\prime,i}|o^{\prime,i})\right) \right).$$
        
        We assume joint policy $\boldsymbol{\pi}_{\Psi_{in}}$ is the product of local policy functions $\prod_{i=1}^n \pi^i_{\psi_{in}^i}$. Hence \textbf{the actor part} gradient concerning  each parameter $\psi_{in}^i$ becomes:
	    \begin{equation}
	        \begin{aligned}
    	        &\nabla_{\psi_{in}^i} J^i_{\text{actor}} \left( \mathbf{w}_{in}, \tilde{\mathbf{w}}_{sh}^i \right) \\
    	        =& \nabla_{\psi_{in}^i} \textstyle\sum_{s} d_{0}(s) \textstyle\sum_{\boldsymbol{o}} \mathcal{E}(\boldsymbol{o}|s) \textstyle\sum_{\boldsymbol{a}} \boldsymbol{\pi}_{\Psi_{in}}(\boldsymbol{a}|\boldsymbol{o}) \left[Q^{\boldsymbol{\pi},i}_{\tilde{\phi}^i_{1,sh}}(\boldsymbol{o},\boldsymbol{a})-\alpha\log\pi^i_{\psi_{in}^i}(a^i|o^i) - \mathcal{B}(\boldsymbol{o}, \boldsymbol{a}^{\setminus i})\right] \\
    	        =& \textstyle\sum_{s} d_{0}(s) \textstyle\sum_{\boldsymbol{o}} \mathcal{E}(\boldsymbol{o}|s) \textstyle\sum_{\boldsymbol{a}} \nabla_{\psi_{in}^i} \boldsymbol{\pi}_{\Psi_{in}}(\boldsymbol{a}|\boldsymbol{o}) \left[Q^{\boldsymbol{\pi},i}_{\tilde{\phi}^i_{1,sh}}(\boldsymbol{o},\boldsymbol{a})-\alpha\log\pi^i_{\psi_{in}^i}(a^i|o^i) - \mathcal{B}(\boldsymbol{o}, \boldsymbol{a}^{\setminus i})\right] \\
    	        =&\mathbf{E}_{s \sim d_{0}, \boldsymbol{o} \sim \mathcal{E},\boldsymbol{a}\sim\boldsymbol{\pi}}\Bigg[\nabla_{\psi_{in}^i}\log\pi_{\psi_{in}^i}^i(a^i|o^i)\left(Q^{\boldsymbol{\pi},i}_{\tilde{\phi}^i_{1,sh}}(\boldsymbol{o},\boldsymbol{a})-\alpha\log\pi^i_{\psi_{in}^i}(a^i|o^i) - \mathcal{B}(\boldsymbol{o}, \boldsymbol{a}^{\setminus i})\right)\Bigg]. \nonumber
	        \end{aligned}
	    \end{equation}
	    Note that we need to resample to calculate the unbias policy gradient so that we cannot use the off-policy data directly.
	    We solve the above problem by importance sampling.
	    \textbf{For off-policy data saved in experience replay buffer} we have
	    \begin{equation}
	    \begin{aligned}
    	        &\nabla_{\psi_{in}^i} J^i_{\text{actor}} \left( \mathbf{w}_{in}, \tilde{\mathbf{w}}_{sh}^i \right) = \mathbf{E}_{s \sim d_0, \boldsymbol{o} \sim \mathcal{E},\boldsymbol{a}\sim\boldsymbol{\pi_0}}\Bigg[\left(\frac{\pi^i_{\psi_{in}^i}(a^i|o^i)}{\pi^i_{0}(a^i|o^i)}\right)\nabla_{\psi_{in}^i}\log\pi_{\psi_{in}^i}^i(a^i|o^i)\\
                &\qquad\left(Q^{\boldsymbol{\pi},i}_{\tilde{\phi}^i_{1,sh}}(\boldsymbol{o},\boldsymbol{a})-\alpha\log\pi^i_{\psi_{in}^i}(a^i|o^i) - \mathcal{B}(\boldsymbol{o}, \boldsymbol{a}^{\setminus i})\right)\Bigg], \nonumber
    	\end{aligned}
	    \end{equation}
	    combined \textbf{the gradient calculated by the data which is resampled by current policy}, we get the joint off-policy policy gradient associates with the actor part: 
	    \begin{equation}
	    \begin{aligned}
    	        &\nabla_{\psi_{in}^i} J^i_{\text{actor}} \left( \mathbf{w}_{in}, \tilde{\mathbf{w}}_{sh}^i \right) = \mathbf{E}_{s \sim d_0, \boldsymbol{o} 
    	       \sim\mathcal{E},\boldsymbol{a}\sim\boldsymbol{\pi_0}}\Bigg[\left(\frac{\pi^i_{\psi_{in}^i}(a^i|o^i)}{\pi^i_{0}(a^i|o^i)}\right)\nabla_{\psi_{in}^i}\log\pi_{\psi_{in}^i}^i(a^i|o^i)\\
                &\qquad\left(Q^{\boldsymbol{\pi},i}_{\tilde{\phi}^i_{1,sh}}(\boldsymbol{o},\boldsymbol{a})-\alpha\log\pi^i_{\psi_{in}^i}(a^i|o^i) - \mathcal{B}(\boldsymbol{o}, \boldsymbol{a}^{\setminus i})\right)\Bigg] + \\
    	        & \qquad\qquad \mathbf{E}_{s \sim d_{0}, \boldsymbol{o} \sim \mathcal{E},\boldsymbol{a}\sim\boldsymbol{\pi}}\Bigg[\nabla_{\psi_{in}^i}\log\pi_{\psi_{in}^i}^i(a^i|o^i)\left(Q^{\boldsymbol{\pi},i}_{\tilde{\phi}^i_{1,sh}}(\boldsymbol{o},\boldsymbol{a})-\alpha\log\pi^i_{\psi_{in}^i}(a^i|o^i) - \mathcal{B}(\boldsymbol{o}, \boldsymbol{a}^{\setminus i})\right)\Bigg].
    	        \nonumber
    	\end{aligned}
	    \end{equation}
	    On the contrary, we directly use the off-policy data when calculating the value function gradient so that we cannot use the above-resampled data.
	    \textbf{For the critic part}, we use the above trick in reverse. 
	    \textbf{For off-policy data saved in experience replay buffer} we have
	    \begin{equation}
	    \begin{aligned}
	            &\nabla_{\psi_{in}^i} J^i_{\text{critic}} \left( \mathbf{w}_{in}, \tilde{\mathbf{w}}_{sh}^i \right)  \\
	            &= \nabla_{\psi_{in}^i}\textstyle\sum_{s} d_{0}(s) \textstyle\sum_{\boldsymbol{o}} \mathcal{E}_{0,s}^{\boldsymbol{o}} \textstyle\sum_{\boldsymbol{a}}\boldsymbol{\pi_0}(\boldsymbol{a}|\boldsymbol{o})\left(\left( Q^{\boldsymbol{\pi},i}_{\tilde{\phi}^i_{1,sh}}(\boldsymbol{o}, \boldsymbol{a}) - Q^{\boldsymbol{\pi}}_{tg} \right)^2+\left( Q^{\boldsymbol{\pi},i}_{\tilde{\phi}^i_{2,sh}}(\boldsymbol{o}, \boldsymbol{a}) - Q^{\boldsymbol{\pi}}_{tg} \right)^2\right) \\
	            &= 0, \nonumber
	    \end{aligned}
	    \end{equation}
	    combined \textbf{the gradient calculated by the data which is resampled by current policy}, we get the joint off-policy policy gradient associates with the critic part 
	    
	    \begin{equation}
	        \begin{aligned}
    			&\nabla_{\psi_{in}^i} J^i_{\text{critic}} \left( \mathbf{w}_{in}, \tilde{\mathbf{w}}_{sh}^i \right) 
    			= \mathbf{E}_{\substack{s\sim d_{0}, \boldsymbol{o}\sim\mathcal{E},\boldsymbol{a}\sim\boldsymbol{\pi}}}\Bigg[\left(\frac{\pi^i_{0}(a^i|o^i)}{\pi^i_{\psi_{in}^i}(a^i|o^i)}\right)\nabla_{\psi_{in}^i}\log\pi_{\psi_{in}^i}^i(a^i|o^i)\left(\delta_1^2+\delta_2^2\right)\Bigg]. \nonumber
	        \end{aligned}
	    \end{equation}
	    where $\delta_j=Q^{\boldsymbol{\pi}, i}_{\tilde{\phi}^i_{j,sh}}(\boldsymbol{o}, \boldsymbol{a}) - Q^{\boldsymbol{\pi}}_{tg}$.
	    We denote the clipped importance sampling term
	    $\min\left(\epsilon,\frac{\pi^i_{\psi_{in}^i}(a^i|o^i)}{\pi^i_{0}(a^i|o^i)}\right)$ as $CIM_{\epsilon}(\pi^i_{\psi_{in}^i};\pi^i_0)$ and 
	    $\min\left(\epsilon,\frac{\pi^i_{0}(a^i|o^i)}{\pi^i_{\psi_{in}^i}(a^i|o^i)}\right)$ as $CIM_{\epsilon}(\pi^i_0;\pi^i_{\psi_{in}^i})$.
	    Finally, we get the following off-policy joint gradient w.r.t. actor parameters:
	    \begin{align*}
	            &\nabla_{\psi_{in}^i} J_{ac}^i \left( \mathbf{w}_{in}, \tilde{\mathbf{w}}_{sh}^i \right) \\
	            = &\alpha_1 \nabla_{\psi_{in}^i} J_{actor}^i \left( \mathbf{w}_{in}, \tilde{\mathbf{w}}_{sh}^i \right) + \alpha_2 \nabla_{\psi_{in}^i} J_{critic}^i \left( \mathbf{w}_{in}, \tilde{\mathbf{w}}_{sh}^i \right) \\
	            = & \mathbf{E}_{s \sim d_0, \boldsymbol{o} \sim \mathcal{E},\boldsymbol{a}\sim\boldsymbol{\pi_0}}\Bigg[\alpha_1 \left(\frac{\pi^i_{\psi_{in}^i}(a^i|o^i)}{\pi^i_{0}(a^i|o^i)}\right)\nabla_{\psi_{in}^i}\log\pi_{\psi_{in}^i}^i(a^i|o^i)\\
                &\qquad\left(Q^{\boldsymbol{\pi},i}_{\tilde{\phi}^i_{1,sh}}(\boldsymbol{o},\boldsymbol{a})-\alpha\log\pi^i_{\psi_{in}^i}(a^i|o^i) - \mathcal{B}(\boldsymbol{o}, \boldsymbol{a}^{\setminus i})\right)\Bigg] + \\
	            &\qquad\mathbf{E}_{s \sim d_{0}, \boldsymbol{o} \sim \mathcal{E},\boldsymbol{a}\sim\boldsymbol{\pi}}\Bigg[\alpha_1\nabla_{\psi_{in}^i}\log\pi_{\psi_{in}^i}^i(a^i|o^i)\left(Q^{\boldsymbol{\pi},i}_{\tilde{\phi}^i_{1,sh}}(\boldsymbol{o},\boldsymbol{a})-\alpha\log\pi^i_{\psi_{in}^i}(a^i|o^i) - \mathcal{B}(\boldsymbol{o}, \boldsymbol{a}^{\setminus i})\right)\Bigg] + \\
    	        &\qquad\mathbf{E}_{s \sim d_{0}, \boldsymbol{o} \sim \mathcal{E},\boldsymbol{a}\sim\boldsymbol{\pi}}\Bigg[\alpha_2 \left(\frac{\pi^i_{0}(a^i|o^i)}{\pi^i_{\psi_{in}^i}(a^i|o^i)}\right)\nabla_{\psi_{in}^i}\log\pi_{\psi_{in}^i}^i(a^i|o^i)\left(\delta_1^2+\delta_2^2\right)\Bigg]\\
	            \simeq & \mathbf{E}_{s \sim d_0, \boldsymbol{o} \sim \mathcal{E},\boldsymbol{a}\sim\boldsymbol{\pi_0}}\Bigg[\alpha_1 CIM_{\epsilon}(\pi^i_{\psi_{in}^i};\pi^i_0)\nabla_{\psi_{in}^i}\log\pi_{\psi_{in}^i}^i(a^i|o^i)\\
                &\qquad\left(Q^{\boldsymbol{\pi},i}_{\tilde{\phi}^i_{1,sh}}(\boldsymbol{o},\boldsymbol{a})-\alpha\log\pi^i_{\psi_{in}^i}(a^i|o^i) - \mathcal{B}(\boldsymbol{o}, \boldsymbol{a}^{\setminus i})\right)\Bigg] + \\
	            &\qquad\mathbf{E}_{s \sim d_{0}, \boldsymbol{o} \sim \mathcal{E},\boldsymbol{a}\sim\boldsymbol{\pi}}\Bigg[\alpha_1\nabla_{\psi_{in}^i}\log\pi_{\psi_{in}^i}^i(a^i|o^i)\left(Q^{\boldsymbol{\pi},i}_{\tilde{\phi}^i_{1,sh}}(\boldsymbol{o},\boldsymbol{a})-\alpha\log\pi^i_{\psi_{in}^i}(a^i|o^i) - \mathcal{B}(\boldsymbol{o}, \boldsymbol{a}^{\setminus i})\right)\Bigg] + \\
    	        &\qquad\mathbf{E}_{s \sim d_{0}, \boldsymbol{o} \sim \mathcal{E},\boldsymbol{a}\sim\boldsymbol{\pi}}\Bigg[\alpha_2CIM_{\epsilon}(\pi^i_0;\pi^i_{\psi_{in}^i})\nabla_{\psi_{in}^i}\log\pi_{\psi_{in}^i}^i(a^i|o^i)\left(\delta_1^2+\delta_2^2\right)\Bigg].
    	\end{align*}
	    The above is the proof of the first equation, \textbf{below we prove the second}.
	    \textbf{For actor part, we first calculate the gradient use the data resampled by the current policy}
	    \begin{equation}
	        \begin{aligned}
    	        \nabla_{\tilde{\phi}^i_{1,sh}} J^i_{\text{actor}} \left( \mathbf{w}_{in}, \tilde{\mathbf{w}}_{sh}^i \right) 
    	        &= \nabla_{\tilde{\phi}^i_{1,sh}} \textstyle\sum_{s} d_{0}(s) \textstyle\sum_{\boldsymbol{o}} \mathcal{E}(\boldsymbol{o}|s) \textstyle\sum_{\boldsymbol{a}} \boldsymbol{\pi}_{\Psi_{in}}(\boldsymbol{a}|\boldsymbol{o}) \\
    	        &\qquad\qquad\qquad\quad\left[Q^{\boldsymbol{\pi},i}_{\tilde{\phi}^i_{1,sh}}(\boldsymbol{o},\boldsymbol{a})-\alpha\log\pi^i_{\psi_{in}^i}(a^i|o^i) - \mathcal{B}(\boldsymbol{o}, \boldsymbol{a}^{\setminus i})\right] \\
    	        &= \textstyle\sum_{s} d_{0}(s) \textstyle\sum_{\boldsymbol{o}} \mathcal{E}(\boldsymbol{o}|s) \textstyle\sum_{\boldsymbol{a}} \boldsymbol{\pi}_{\Psi_{in}}(\boldsymbol{a}|\boldsymbol{o}) \nabla_{\tilde{\phi}^i_{1,sh}}Q^{\boldsymbol{\pi},i}_{\tilde{\phi}^i_{1,sh}}(\boldsymbol{o},\boldsymbol{a})\\
    	        &=\mathbf{E}_{\substack{s\sim d_{0}, \boldsymbol{o}\sim\mathcal{E},\boldsymbol{a}\sim\boldsymbol{\pi}}}\left[ \nabla_{\tilde{\phi}^i_{1,sh}}Q^{\boldsymbol{\pi},i}_{\tilde{\phi}^i_{1,sh}}(\boldsymbol{o},\boldsymbol{a}) \right]. \nonumber
	        \end{aligned}
	    \end{equation}
	    \textbf{For calculate the gradient use off-policy data} we also introduce importance sampling, then we have
	    \begin{equation}
	        \begin{aligned}
    	        \nabla_{\tilde{\phi}^i_{1,sh}} J^i_{\text{actor}} \left( \mathbf{w}_{in}, \tilde{\mathbf{w}}_{sh}^i \right) 
    	        &=\mathbf{E}_{\substack{s\sim d_{0}, \boldsymbol{o}\sim\mathcal{E},\boldsymbol{a}\sim\boldsymbol{\pi_0}}}\left[ \left(\frac{\pi^i_{\psi_{in}^i}(a^i|o^i)}{\pi^i_{0}(a^i|o^i)}\right)\nabla_{\tilde{\phi}^i_{1,sh}}Q^{\boldsymbol{\pi},i}_{\tilde{\phi}^i_{1,sh}}(\boldsymbol{o},\boldsymbol{a}) \right]. \nonumber
	        \end{aligned}
	    \end{equation}
	    We then combine the above two part gradient:
	    \begin{equation}
	        \begin{aligned}
    	        \nabla_{\tilde{\phi}^i_{1,sh}} J^i_{\text{actor}} \left( \mathbf{w}_{in}, \tilde{\mathbf{w}}_{sh}^i \right) 
    	        &=\mathbf{E}_{\substack{s\sim d_{0}, \boldsymbol{o}\sim\mathcal{E},\boldsymbol{a}\sim\boldsymbol{\pi}}}\left[ \nabla_{\tilde{\phi}^i_{1,sh}}Q^{\boldsymbol{\pi},i}_{\tilde{\phi}^i_{1,sh}}(\boldsymbol{o},\boldsymbol{a}) \right]+\\
    	        &\qquad\qquad\mathbf{E}_{\substack{s\sim d_{0}, \boldsymbol{o}\sim\mathcal{E},\boldsymbol{a}\sim\boldsymbol{\pi_0}}}\left[ \left(\frac{\pi^i_{\psi_{in}^i}(a^i|o^i)}{\pi^i_{0}(a^i|o^i)}\right)\nabla_{\tilde{\phi}^i_{1,sh}}Q^{\boldsymbol{\pi},i}_{\tilde{\phi}^i_{1,sh}}(\boldsymbol{o},\boldsymbol{a}) \right]. \nonumber
	        \end{aligned}
	    \end{equation}
	    \textbf{For critic part}, we first \textbf{calculate the gradient use the off-policy data}
	    \begin{equation}
	        \begin{aligned}
	            \nabla_{\tilde{\phi}^i_{1,sh}} J^i_{\text{critic}} \left( \mathbf{w}_{in}, \tilde{\mathbf{w}}_{sh}^i \right)  
	            &= \nabla_{\tilde{\phi}^i_{1,sh}} \textstyle\sum_{s} d_{0}(s) \textstyle\sum_{\boldsymbol{o}} \mathcal{E}_{0,s}^{\boldsymbol{o}} \textstyle\sum_{\boldsymbol{a}}\boldsymbol{\pi_0}(\boldsymbol{a}|\boldsymbol{o})\\
	            &\qquad\qquad\qquad\quad\left(\left( Q^{\boldsymbol{\pi},i}_{\tilde{\phi}^i_{1,sh}}(\boldsymbol{o}, \boldsymbol{a}) - Q^{\boldsymbol{\pi}}_{tg} \right)^2+\left( Q^{\boldsymbol{\pi},i}_{\tilde{\phi}^i_{2,sh}}(\boldsymbol{o}, \boldsymbol{a}) - Q^{\boldsymbol{\pi}}_{tg} \right)^2\right) \\
	            &= \textstyle\sum_{s} d_{0}(s) \textstyle\sum_{\boldsymbol{o}} \mathcal{E}_{0,s}^{\boldsymbol{o}} \textstyle\sum_{\boldsymbol{a}}\boldsymbol{\pi_0}(\boldsymbol{a}|\boldsymbol{o})2\delta_1\nabla_{\tilde{\phi}^i_{1,sh}}Q^{\boldsymbol{\pi},i}_{\tilde{\phi}^i_{1,sh}}(\boldsymbol{o}, \boldsymbol{a})\\
	            &=\mathbf{E}_{\substack{s\sim d_{0}, \boldsymbol{o}\sim\mathcal{E},\boldsymbol{a}\sim\boldsymbol{\pi_0}}}\left[ 2\delta_1\nabla_{\tilde{\phi}^i_{1,sh}}Q^{\boldsymbol{\pi},i}_{\tilde{\phi}^i_{1,sh}}(\boldsymbol{o}, \boldsymbol{a}) \right]. \nonumber
	        \end{aligned}
	    \end{equation}
	    Then, \textbf{for resampled data}, we have
	    \begin{equation}
	        \begin{aligned}
	            \nabla_{\tilde{\phi}^i_{1,sh}} J^i_{\text{critic}} \left( \mathbf{w}_{in}, \tilde{\mathbf{w}}_{sh}^i \right)  
	            &= \mathbf{E}_{\substack{s\sim d_{0}, \boldsymbol{o}\sim\mathcal{E},\boldsymbol{a}\sim\boldsymbol{\pi}}}\left[ 2\delta_1 \left(\frac{\pi^i_{0}(a^i|o^i)}{\pi^i_{\psi_{in}^i}(a^i|o^i)}\right)\nabla_{\tilde{\phi}^i_{1,sh}}Q^{\boldsymbol{\pi},i}_{\tilde{\phi}^i_{1,sh}}(\boldsymbol{o}, \boldsymbol{a}) \right]. \nonumber
	        \end{aligned}
	    \end{equation}
	    Combining above two part gradient, we obtain:
	    \begin{equation}
	        \begin{aligned}
	            \nabla_{\tilde{\phi}^i_{1,sh}} J^i_{\text{critic}} \left( \mathbf{w}_{in}, \tilde{\mathbf{w}}_{sh}^i \right)  
	            =& \mathbf{E}_{\substack{s\sim d_{0}, \boldsymbol{o}\sim\mathcal{E},\boldsymbol{a}\sim\boldsymbol{\pi_0}}}\left[ 2\delta_1\nabla_{\tilde{\phi}^i_{1,sh}}Q^{\boldsymbol{\pi},i}_{\tilde{\phi}^i_{1,sh}}(\boldsymbol{o}, \boldsymbol{a}) \right]+\\
	            &\qquad\qquad\mathbf{E}_{\substack{s\sim d_{0}, \boldsymbol{o}\sim\mathcal{E},\boldsymbol{a}\sim\boldsymbol{\pi}}}\left[ 2\delta_1 \left(\frac{\pi^i_{0}(a^i|o^i)}{\pi^i_{\psi_{in}^i}(a^i|o^i)}\right)\nabla_{\tilde{\phi}^i_{1,sh}}Q^{\boldsymbol{\pi},i}_{\tilde{\phi}^i_{1,sh}}(\boldsymbol{o}, \boldsymbol{a}) \right]. \nonumber
	        \end{aligned}
	    \end{equation}
	    Finally, we get following off-policy joint gradient w.r.t. the first critic parameters
	    \begin{equation}
	    \begin{aligned}
    	       &\nabla_{\tilde{\phi}^i_{1,sh}} J_{ac}^i \left( \mathbf{w}_{in}, \tilde{\mathbf{w}}_{sh}^i \right)
	            = \alpha_1 \nabla_{\tilde{\phi}^i_{1,sh}} J_{actor}^i \left( \mathbf{w}_{in}, \tilde{\mathbf{w}}_{sh}^i \right) + \alpha_2 \nabla_{\tilde{\phi}^i_{1,sh}} J_{critic}^i \left( \mathbf{w}_{in}, \tilde{\mathbf{w}}_{sh}^i \right)\\
	            &= \mathbf{E}_{\substack{s\sim d_{0}, \boldsymbol{o}\sim\mathcal{E},\boldsymbol{a}\sim\boldsymbol{\pi_0}}}\left[ (\alpha_1 \left(\frac{\pi^i_{\psi_{in}^i}(a^i|o^i)}{\pi^i_{0}(a^i|o^i)}\right)+2\alpha_2\delta_1)\nabla_{\tilde{\phi}^i_{1,sh}}Q^{\boldsymbol{\pi},i}_{\tilde{\phi}^i_{1,sh}}(\boldsymbol{o}, \boldsymbol{a}) \right]+\\
	            &\qquad\qquad\mathbf{E}_{\substack{s\sim d_{0}, \boldsymbol{o}\sim\mathcal{E},\boldsymbol{a}\sim\boldsymbol{\pi}}}\left[ \left(\alpha_1+2\alpha_2\delta_1 \left(\frac{\pi^i_{0}(a^i|o^i)}{\pi^i_{\psi_{in}^i}(a^i|o^i)}\right)\right)\nabla_{\tilde{\phi}^i_{1,sh}}Q^{\boldsymbol{\pi},i}_{\tilde{\phi}^i_{1,sh}}(\boldsymbol{o}, \boldsymbol{a}) \right]\\
	            &\simeq \mathbf{E}_{\substack{s\sim d_{0}, \boldsymbol{o}\sim\mathcal{E},\boldsymbol{a}\sim\boldsymbol{\pi_0}}}\left[ (\alpha_1 CIM_{\epsilon}(\pi^i_{\psi_{in}^i};\pi^i_0)+2\alpha_2\delta_1)\nabla_{\tilde{\phi}^i_{1,sh}}Q^{\boldsymbol{\pi},i}_{\tilde{\phi}^i_{1,sh}}(\boldsymbol{o}, \boldsymbol{a}) \right]+\\
	            &\qquad\qquad\mathbf{E}_{\substack{s\sim d_{0}, \boldsymbol{o}\sim\mathcal{E},\boldsymbol{a}\sim\boldsymbol{\pi}}}\left[ \left(\alpha_1+2\alpha_2\delta_1 CIM_{\epsilon}(\pi^i_0;\pi^i_{\psi_{in}^i})\right)\nabla_{\tilde{\phi}^i_{1,sh}}Q^{\boldsymbol{\pi},i}_{\tilde{\phi}^i_{1,sh}}(\boldsymbol{o}, \boldsymbol{a}) \right].
    	        \nonumber
    	\end{aligned}
	    \end{equation}
	    \textbf{For the third equation and use resampled data to calculate the actor part gradient}, we have
	    \begin{equation}
	        \begin{aligned}
    	        &\nabla_{\tilde{\phi}^i_{2,sh}} J^i_{\text{actor}} \left( \mathbf{w}_{in}, \tilde{\mathbf{w}}_{sh}^i \right) \\
    	        & = \nabla_{\tilde{\phi}^i_{2,sh}} \textstyle\sum_{s} d_{0}(s) \textstyle\sum_{\boldsymbol{o}} \mathcal{E}(\boldsymbol{o}|s) \textstyle\sum_{\boldsymbol{a}} \boldsymbol{\pi}_{\Psi_{in}}(\boldsymbol{a}|\boldsymbol{o}) \left[Q^{\boldsymbol{\pi},i}_{\tilde{\phi}^i_{1,sh}}(\boldsymbol{o},\boldsymbol{a})-\alpha\log\pi^i_{\psi_{in}^i}(a^i|o^i) - \mathcal{B}(\boldsymbol{o}, \boldsymbol{a}^{\setminus i})\right] \\
    	        &= 0\nonumber
	        \end{aligned}
	    \end{equation}
	    Similarly, \textbf{for off-policy data} the gradient also is 0.
	    For \textbf{critic part} the gradient is similar as the second equation
	    \begin{equation}
	        \begin{aligned}
	            \nabla_{\tilde{\phi}^i_{2,sh}} J^i_{\text{critic}} \left( \mathbf{w}_{in}, \tilde{\mathbf{w}}_{sh}^i \right)  
	            =& \mathbf{E}_{\substack{s\sim d_{0}, \boldsymbol{o}\sim\mathcal{E},\boldsymbol{a}\sim\boldsymbol{\pi_0}}}\left[ 2\delta_2\nabla_{\tilde{\phi}^i_{2,sh}}Q^{\boldsymbol{\pi},i}_{\tilde{\phi}^i_{2,sh}}(\boldsymbol{o}, \boldsymbol{a}) \right]+\\
	            &\qquad\qquad\mathbf{E}_{\substack{s\sim d_{0}, \boldsymbol{o}\sim\mathcal{E},\boldsymbol{a}\sim\boldsymbol{\pi}}}\left[ 2\delta_2 \left(\frac{\pi^i_{0}(a^i|o^i)}{\pi^i_{\psi_{in}^i}(a^i|o^i)}\right)\nabla_{\tilde{\phi}^i_{2,sh}}Q^{\boldsymbol{\pi},i}_{\tilde{\phi}^i_{2,sh}}(\boldsymbol{o}, \boldsymbol{a}) \right]. \nonumber
	        \end{aligned}
	    \end{equation}
	    Finally, we get following off-policy joint gradient w.r.t. the second critic parameters
	    \begin{align*}
	            \nabla_{\tilde{\phi}^i_{2,sh}} J_{ac}^i \left( \mathbf{w}_{in}, \tilde{\mathbf{w}}_{sh}^i \right)
	            &= \alpha_1 \nabla_{\tilde{\phi}^i_{2,sh}} J_{actor}^i \left( \mathbf{w}_{in}, \tilde{\mathbf{w}}_{sh}^i \right) + \alpha_2 \nabla_{\tilde{\phi}^i_{2,sh}} J_{critic}^i \left( \mathbf{w}_{in}, \tilde{\mathbf{w}}_{sh}^i \right)\\
	            &= \mathbf{E}_{\substack{s\sim d_{0}, \boldsymbol{o}\sim\mathcal{E},\boldsymbol{a}\sim\boldsymbol{\pi_0}}}\left[ 2\alpha_2\delta_2\nabla_{\tilde{\phi}^i_{2,sh}}Q^{\boldsymbol{\pi},i}_{\tilde{\phi}^i_{2,sh}}(\boldsymbol{o}, \boldsymbol{a}) \right]+\\
	            &\qquad\qquad\mathbf{E}_{\substack{s\sim d_{0}, \boldsymbol{o}\sim\mathcal{E},\boldsymbol{a}\sim\boldsymbol{\pi}}}\left[ 2\alpha_2\delta_2 \left(\frac{\pi^i_{0}(a^i|o^i)}{\pi^i_{\psi_{in}^i}(a^i|o^i)}\right)\nabla_{\tilde{\phi}^i_{2,sh}}Q^{\boldsymbol{\pi},i}_{\tilde{\phi}^i_{2,sh}}(\boldsymbol{o}, \boldsymbol{a}) \right]\\
	            &\simeq \mathbf{E}_{\substack{s\sim d_{0}, \boldsymbol{o}\sim\mathcal{E},\boldsymbol{a}\sim\boldsymbol{\pi_0}}}\left[ 2\alpha_2\delta_2\nabla_{\tilde{\phi}^i_{2,sh}}Q^{\boldsymbol{\pi},i}_{\tilde{\phi}^i_{2,sh}}(\boldsymbol{o}, \boldsymbol{a}) \right]+\\
	            &\qquad\qquad\mathbf{E}_{\substack{s\sim d_{0}, \boldsymbol{o}\sim\mathcal{E},\boldsymbol{a}\sim\boldsymbol{\pi}}}\left[ 2\alpha_2\delta_2 CIM_{\epsilon}(\pi^i_0;\pi^i_{\psi_{in}^i})\nabla_{\tilde{\phi}^i_{2,sh}}Q^{\boldsymbol{\pi},i}_{\tilde{\phi}^i_{2,sh}}(\boldsymbol{o}, \boldsymbol{a}) \right].
    	        \nonumber
    	\end{align*}
    \end{proof}

}}

\section{The Specific Forms of Instantiation Algorithms}

The specific forms of F2A2-COMA, F2A2-DDPG, F2A2-TD3 and F2A2-SAC are shown in Figure~\ref{fig:f2a2-eq}.

\begin{figure*}
    \centering
    \includegraphics[width=\textwidth]{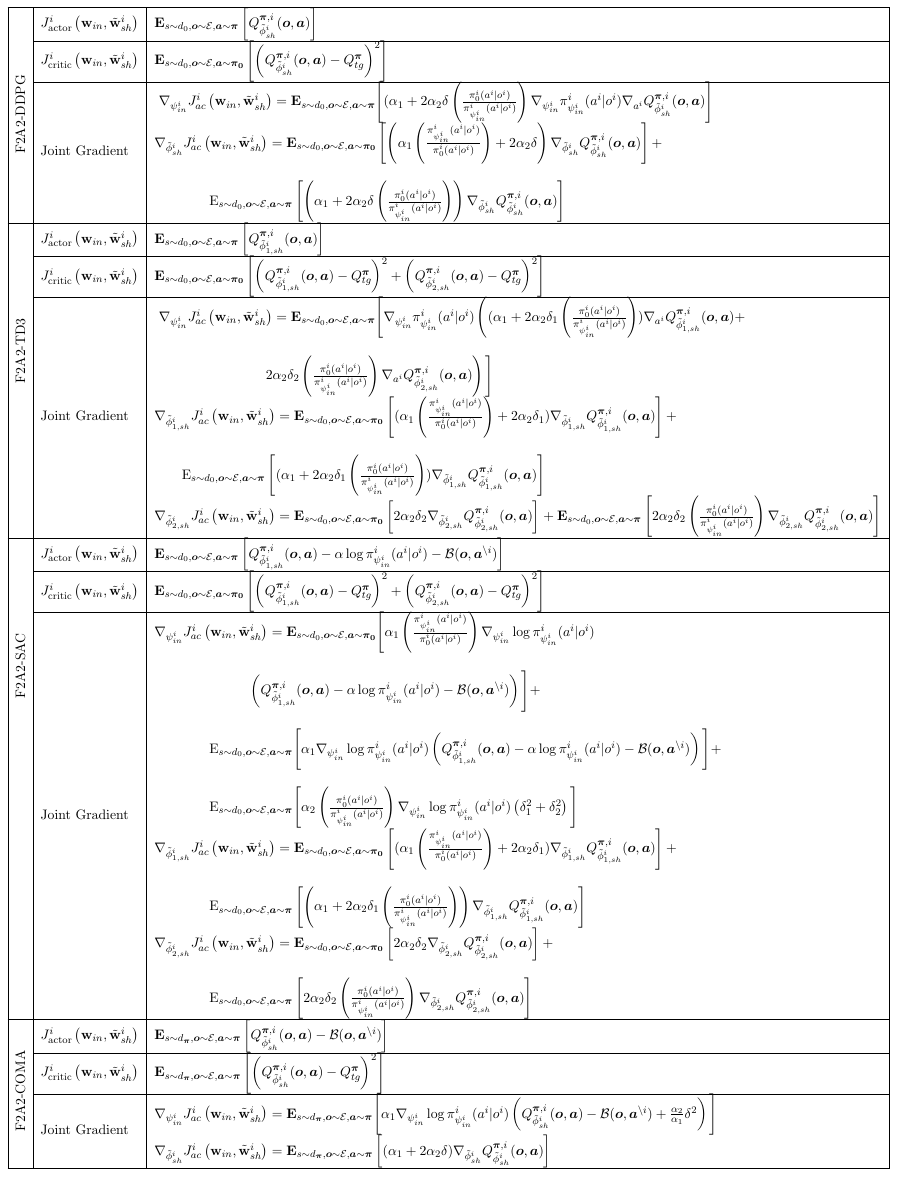}
    \caption{Extensions of off-policy and on-policy actor-critic joint gradient.}
    \label{fig:f2a2-eq}
\end{figure*}

\section{Training procedure}
\paragraph{Cooperative MPE.} For all algorithms expect for COMA and F2A2-COMA, our training procedure consists of performing $12$ parallel rollouts, and adding a tuple of $\left(o_{t}, a_{t}, c_{t}, o_{t+1}\right)_{1 \dots N}$ to a replay buffer (with maximum length $1e6$) for each timestep.
We reset each environment after every $100$ steps for \textit{Cooperative Treasure Collection} and $25$ steps for \textit{Rover Tower} (across all rollouts), we perform $4$ updates for the all actors and critics.
For each update, we sample mini-batches of $1024$ timesteps from the replay buffer (for COMA and F2A2-COMA, we used the most recent $1024$ timesteps) and then perform gradient descent on the corresponding loss objective, using Adam~\citep{kingma2014adam} as the optimizer with a learning rate of $0.001$. 
After the updates are complete, we update the target network parameters(if there are) to move toward our learned parameters, as in~\citep{haarnoja2018soft}: $\overline{\Theta}=(1-\tau) \overline{\Theta}+\tau \Theta$, where $tau$ is the soft update rate (set to $0.005$). 
We use a discount factor, $\gamma$, of $0.99$.
All networks use a hidden dimension of $128$ and Leaky Rectified Linear Units as the nonlinearity. 
We use $0.01$ as our temperature setting for MAAC and F2A2-SAC. 
We use $4$ attention heads in MAAC and F2A2-SAC. 

\paragraph{StarCraft II.} All policies are implemented as two-layer recurrent neural networks (GRUs) with $64$ hidden units,
while the critic is the same as the settings of \textit{Cooperative Multi-agent Particle Environments}. 
For exploration, we use a bounded softmax distribution in which the agent samples from a softmax over the policy logits with probability ($1 - \epsilon$) and samples randomly with probability $\epsilon$. 
We anneal $\epsilon$ from $0.5$ to $0.01$ across the first $50k$ environment steps.
Episodes are collected using eight parallel SCII environments. 
Optimization is carried out on a single GPU with Adam and a learning rate of $0.0005$ for both the agents and the critics. 
The policies are fully unrolled and updated in a large mini-batch of $T × B$ entries, where $T = 60$ and $B = 8$. 
By contrast, the critic is optimized in small mini-batches of size $8$, one for each time-step, looping backward in
time. 
We found that this stabilized and accelerated training compared to full batch updates for the critic. 
The target network for the critic is updated after every $200$ critic update.

\paragraph{MAgent Environment.} For the MAgent environment, we fix the model of the enemy agents to IQL after a fixed number ($2000$) of self-play training rounds.
The learning rate for actors and critics is $0.0001$.
The discount factor is set to 0.95, and the mini-batch size is $128$.
The size of the replay buffer is $500000$.
And the temperature for F2A2-ISAC is $0.08$.\\

While for the algorithm to model other agents, we first collect $1000$ trajectories randomly to construct the past trajectories dataset and fixed it. 
We keep the most recent $1200$ state-action pairs of all agents; thus, the mini-batches size of the impromptu prediction net is also $1200$. 
The char-net and mental-net are $2$-layer GRU networks with $64$ hidden dimensions, and two prediction net are MLP networks with $128$ hidden dimensions. 
The length of the former is fixed to the length of the episode, $25$, and the latter is fixed at $5$. 
The update frequency of the whole network is the same as the reinforcement learning part, and the mini-batches size of the natural prediction net is $1600$. 
The optimizer and learning rate are also set the same as those in the reinforcement learning part.

Except that the F2A2-ISAC algorithm uses the Tensorflow~\citep{tensorflow2015-whitepaper} framework, the other algorithms we proposed use the PyTorch~\citep{paszke2017automatic} framework. 
We run all the experiments on a machine with 44 CPU cores, 128G RAM, and 4 Nvidia 1080Ti GPUs. 
We use the original papers ' open-source code for all of the simulation environments involved in the experiments.

\section{More Case Studies}

In the following, we will discuss more state-of-the-art MARL algorithms to motivate the above optimization formulation (\ref{eq:framework-v3}) and the following proposed algorithm framework.
There contain to the communication learning algorithm \citep{jiang2018learning} and common knowledge based algorithm \citep{de2019multi}.

\subsection{Attentional Communication (ATOC)}
The communication learning algorithms advocate learning collaborative policies through communication.
However, for early communication learning algorithms\citep{foerster2016learning,peng2017multiagent,sukhbaatar2016learning}, information sharing among all agents
or in predefined communication architectures, these methods adopt can be problematic. 
When there is a large number of agents, agents cannot differentiate valuable information that helps cooperative
decision making from globally shared information, and hence communication barely helps and could
even jeopardize the learning of cooperation. 
Moreover, in real-world applications, it is costly that all agents communicate with each other since receiving a large amount of information requires high
bandwidth and incurs long delay and high computational complexity\citep{jiang2018learning}.

ATOC\citep{jiang2018learning} propose an attentional communication model to solve the above problems.
ATOC introduces a shared attention unit so that the communication architecture between agents can dynamically change according to needs; 
at the same time, in a self-organized communication group, the message is generated through a shared Bi-LSTM network, the message generation module.
In a word, ATOC sharing all actor parameters and all critic parameters for all agents, which means $\phi_{sh}=\varnothing$ and $\psi_{sh}=\varnothing$.
The ATOC algorithm uses the DDPG\citep{lillicrap2015continuous} as the backbone, and the objective function for each agent $i$ in actor and critic phases can be reformulated as

\begin{eqnarray}
J^i_{\text{actor}} \left(\psi_{sh}, \phi_{sh} \right) &=& \mathbb{E}\left[Q^{\boldsymbol{\pi},i}_{\phi_{sh}}(o^i,\pi^i_{\psi_{sh}}(o^i))\right], \nonumber\\
J^i_{\text{critic}} \left(\psi_{sh}, \phi_{sh} \right) &=& \mathbb{E} \left[ \left( Q^{\boldsymbol{\pi},i}_{\phi_{sh}}(o^i,a^i) - Q^{\boldsymbol{\pi},i}_{tg} \right)^2 \right], \nonumber
\end{eqnarray}

where the expectations are take on $o^i, a^i, c^i, {o^{\prime}}^{i} \sim \mathcal{D}$ because the DDPG algorithm are off-policy algorithm, and $\mathcal{D}$ is the shared experience replay buffer;
$\boldsymbol{\pi} := \left\{\pi^i_{\psi_{sh}}\right\}$ represents the joint policy of all agents;
$\psi_{sh}$ and $\phi_{sh}$ represent the sharing actor parameter and critic parameter respectively;
$Q^{\boldsymbol{\pi},i}_{tg}:=c^i + \gamma  Q^{\boldsymbol{\pi},i}_{\phi_{sh}}\left(o',\pi^i_{\psi_{sh}}(o')\right)$;

The detailed calculation process of $\pi^i_{\psi_{sh}}(o^i)$ is as follows:
Firstly, the observation of agent $i$ is encoded by an observation encoder which is parameterized by $\theta_{enc}$.
Then the observation embedding is fed into a recurrent attention model, which is parameterized by $\theta_{ram}$, and output a two-valued variable, indicating whether or not agent $i$ becomes the founder of the communication group.
If the output value is $1$, then agent $i$ becomes the founder of a new communication group; otherwise, agent $i$ chooses an existing communication group to join according to a pre-defined strategy.
After that, all messages in the communication group are fed into an LSTM network which is parameterized by $\theta_{lstm}$, and the output corresponding to the agent $i$ is used as a fusion representation of the message sent by the remaining agents in the communication group.
Finally, this fusion representation is fed into the actor-network, which is parameterized by $\theta_{actor}$, to obtain the final policy.
It can be seen that $\psi_{sh}:=\{\theta_{enc}, \theta_{ram}, \theta_{lstm}, \theta_{actor}\}$ and $\psi_{in}^i:=\emptyset$, $\phi_{sh}$ is the critic sharing between all agents and $\phi_{in}^i:=\emptyset$.

The overall optimization problem specified from (\ref{eq:framework-v3}) can be formulated as
\begin{equation}\label{eq:framework-ATOC}
    \min_{\psi_{sh}, \phi_{sh}} \alpha_1 \sum_{i=1}^n \mathbb{E}\left[Q^{\boldsymbol{\pi},i}_{\phi_{sh}}(o^i,\pi^i_{\psi_{sh}}(o^i))\right] + \alpha_2 \sum_{i=1}^n \mathbb{E} \left[ \left( Q^{\boldsymbol{\pi},i}_{\phi_{sh}}(o^i,a^i) - Q^{\boldsymbol{\pi},i}_{tg} \right)^2 \right].
\end{equation}

Note that here we omit the parameters regularization term $\mathcal{R}(\mathbf{w}_{in}, \mathbf{w}_{sh})$ in (\ref{eq:framework-v3}).
In the practical implementation of the ATOC algorithm, the regularization of the parameters is generally implemented by $L_2$ regularization or gradient norm clipping.
All detailed elements about ATOC are summarized in Table \ref{table:ATOC},
\begin{table}[tb!]
\centering
    \resizebox{\textwidth}{!}{\begin{tabular}{|c|c|c|c|c|c|}
	\hline
    \multicolumn{2}{|c|}{$\mathbf{w}_{in}$} & \multicolumn{2}{c|}{$\mathbf{w}_{sh}$}& \multicolumn{2}{c|}{Functions: $J^i_{\text{actor}}$ and $J^i_{\text{critic}}$} \\
	\hline
	$\psi_{in}^i$ & $\phi_{in}^i$ & $\psi_{sh}^i$ & $\phi_{sh}^i$ & $J^i_{\text{actor}} \left(\psi_{sh}, \phi_{sh} \right)$ & $J^i_{\text{critic}} \left(\psi_{in}, \phi_{sh} \right)$ \\
	\hline
	$\varnothing$ & $\varnothing$ & $\{\theta_{enc}, \theta_{ram}, \theta_{lstm}, \theta_{actor}\}$ & \makecell{$\pi^i(o^i)$(MLP)} & $\mathbb{E}\left[Q^{\boldsymbol{\pi},i}_{\phi_{sh}}(o^i,\pi^i_{\psi_{sh}}(o^i))\right]$ & $\mathbb{E} \left[ \left( Q^{\boldsymbol{\pi},i}_{\phi_{sh}}(o^i,a^i) - Q^{\boldsymbol{\pi},i}_{tg} \right)^2 \right]$ \\
	\hline
	\end{tabular}}
    \caption{Algorithm elements about ATOC.}
	\label{table:ATOC}
\end{table}
and the algorithm framework of ATOC for formulation (\ref{eq:framework-v3}) is show as the following form
\begin{equation}
\left\{\begin{array}{rl}
    \phi_{sh}^{k+1} &= \phi_{sh}^{k} - \alpha_1 \frac{1}{M} \sum_{m=1}^{M} \sum_{i=1}^n \nabla_{\phi_{sh}} \left( Q^{\boldsymbol{\pi}^k,i}_{\phi_{sh}^k}(o^{i,m}, a^{i,m}) - Q^{\boldsymbol{\pi}^k,i}_{tg} \right)^2 \\
    & \approx \arg\min\limits_{\phi_{sh}} \alpha_1 \sum\limits_{i=1}^{n} \mathbb{E} \left[ \left( Q^{\boldsymbol{\pi},i}_{\phi_{sh}}(o^i,a^i) - Q^{\boldsymbol{\pi},i}_{tg} \right)^2 \right], \\
    
    \{ \theta_{enc}, \theta_{actor} \}^{k+1} &= \{ \theta_{enc}, \theta_{actor} \}^{k} - \alpha_2 \frac{1}{M} \sum_{m=1}^{M} \sum_{i=1}^{n} \\
    &\qquad\nabla_{\{ \theta_{enc}, \theta_{actor} \}} \left( Q^{\boldsymbol{\pi},i}_{\phi_{sh}^{k+1}}(o^{i,m},\pi^i_{\psi_{sh}^{k}}(o^{i,m})) \right) \\
    & \approx \arg\min_{\{ \theta_{enc}, \theta_{actor} \}} \alpha_2 \sum\limits_{i=1}^{n} \mathbb{E}\left[Q^{\boldsymbol{\pi},i}_{\phi_{sh}}(o^i,\pi^i_{\psi_{sh}}(o^i))\right], \\
    
    \theta_{lstm}^{k+1} &= \theta_{lstm}^{k} - \alpha_2 \frac{1}{M} \sum_{m=1}^{M} \sum_{i=1}^{n} \nabla_{\theta_{lstm}} \left( Q^{\boldsymbol{\pi},i}_{\phi_{sh}^{k+1}}(o^{i,m},\pi^i_{\psi_{sh}^{k}}(o^{i,m})) \right) \\
    & \approx \arg\min_{\theta_{lstm}} \alpha_2 \sum\limits_{i=1}^{n} \mathbb{E}\left[Q^{\boldsymbol{\pi},i}_{\phi_{sh}}(o^i,\pi^i_{\psi_{sh}}(o^i))\right], \nonumber
\end{array}\right.
\end{equation}which can be considered to employ block coordinate gradient descent on formulation (\ref{eq:framework-ATOC}).


\subsection{Multi-Agent Common Knowledge (MACKRL)}
For the cooperative POSG, in the absence of common knowledge, complex decentralized coordination has to rely on implicit communication, i.e., observing each other’s actions or their effects~\citep{heider1944experimental, rasouli2017agreeing}. 
However, implicit communication protocols for complex coordination problems are challenging to learn and, as they typically require multiple timesteps to execute, can limit the agility of control during execution~\citep{tian2018learning}. 
By contrast, coordination based on common knowledge is simultaneous and does not require learning communication protocols~\citep{halpern1990knowledge}.

MACKRL~\citep{de2019multi} is a novel stochastic policy actor-critic algorithm that can learn complex coordination policies end-to-end by exploiting common knowledge between groups of agents at the appropriate level. 
MACKRL uses a hierarchical policy tree in order to select the right level of coordination dynamically.
Specifically, MACKRL is learning a centralized actor and centralized critic.
However, due to the use of hierarchical policies, the centralized actor can be calculated efficiently.
In other words, just like ATOC, MACKRL sharing all actor parameters and all critic parameters for all agents, which means $\phi_{sh}=\emptyset$ and $\psi_{sh}=\emptyset$.

MACKRL is based on \textit{Central-V}~\citep{foerster2018counterfactual} and approximately solves the Eq.~\eqref{eq:framework-v3} by iteratively optimizing the same two subproblems as above algorithms, and the specific form of $J^i_{\text{actor}}$ and $J^i_{\text{critic}}$ of agent $i$ in MACKRL algorithm are as follows:
\begin{eqnarray}
J^i_{\text{actor}} \left(\psi_{sh}, \phi_{sh} \right) &=& \mathbb{E}\left[Q^{\boldsymbol{\pi},i}_{\phi_{sh}}(\boldsymbol{o},\boldsymbol{a})\right], \nonumber\\
J^i_{\text{critic}} \left(\psi_{sh}, \phi_{sh} \right) &=& \mathbb{E} \left[ \left( Q^{\boldsymbol{\pi},i}_{\phi_{sh}}(\boldsymbol{o}, \boldsymbol{a}) - Q^{\boldsymbol{\pi},i}_{tg} \right)^2 \right], \nonumber
\end{eqnarray}
where the expectations are taken on $s \sim d_{\Psi}, \boldsymbol{o} \sim \mathcal{E}, a \sim \boldsymbol{\pi}_{\Psi}$ and $\Psi:=\{ \psi_{sh} \}$;
$\boldsymbol{\pi} := \boldsymbol{\pi}_{\psi_{sh}}$ represents the joint policy of all agents;
$\psi_{sh}$ and $\phi_{sh}$ represent the sharing actor parameter and critic parameter respectively;
MACKRL also uses TD($\lambda$) algorithm to learn the shared critic and it will not be repeated here because it was mentioned in the previous introduction of the COMA algorithm.

The way that MACKRL effectively calculates joint policy is similar to hierarchical reinforcement learning. 
It divides joint policy into common-knowledge-based group-level policies and common-knowledge-based agent-level policies, so $\psi_{sh}$ can be decomposed into $\psi_{group}$ and $\psi_{agent}$.
MACKRL allows multi-agent policies to introduce common knowledge while training end-to-end efficiently naturally (see the original paper~\citep{de2019multi} for details).

The overall optimization problem specified from (\ref{eq:framework-v3}) can be formulated as
\begin{equation}\label{eq:framework-MACKRL}
    \min_{\psi_{sh}, \phi_{sh}} \alpha_1 \sum_{i=1}^n \mathbb{E}\left[Q^{\boldsymbol{\pi},i}_{\phi_{sh}}(\boldsymbol{o},\boldsymbol{a})\right] + \alpha_2 \sum_{i=1}^n \mathbb{E} \left[ \left( Q^{\boldsymbol{\pi},i}_{\phi_{sh}}(\boldsymbol{o}, \boldsymbol{a}) - Q^{\boldsymbol{\pi},i}_{tg} \right)^2 \right].
\end{equation}

Note that here we omit the parameters regularization term $\mathcal{R}(\mathbf{w}_{in}, \mathbf{w}_{sh})$ in (\ref{eq:framework-v3}).
In the practical implementation of the MACKRL algorithm, the regularization of the parameters is generally implemented by $L_2$ regularization  or gradient norm clipping.
All detailed elements about MACKRL are summarized in Table \ref{table:MACKRL},
\begin{table}[tb!]
\centering
    \resizebox{\textwidth}{!}{\begin{tabular}{|c|c|c|c|c|c|}
	\hline
    \multicolumn{2}{|c|}{$\mathbf{w}_{in}$} & \multicolumn{2}{c|}{$\mathbf{w}_{sh}$}& \multicolumn{2}{c|}{Functions: $J^i_{\text{actor}}$ and $J^i_{\text{critic}}$} \\
	\hline
	$\psi_{in}^i$ & $\phi_{in}^i$ & $\psi_{sh}^i$ & $\phi_{sh}^i$ & $J^i_{\text{actor}} \left( \psi_{sh}, \phi_{sh} \right)$ & $J^i_{\text{critic}} \left( \psi_{sh}, \phi_{sh} \right)$ \\
	\hline
	$\varnothing$ & $\varnothing$ & $\{\psi_{group}, \psi_{agent}\}{\hbox{(GRUs)}}$ & \makecell{The parameter of \\ $Q$ and $V$(MLP)} & $\mathbb{E}\left[Q^{\boldsymbol{\pi},i}_{\phi_{sh}}(\boldsymbol{o},\boldsymbol{a})\right]$ & $\mathbb{E} \left[ \left( Q^{\boldsymbol{\pi},i}_{\phi_{sh}}(\boldsymbol{o}, \boldsymbol{a}) - Q^{\boldsymbol{\pi},i}_{tg} \right)^2 \right]$ \\
	\hline
	\end{tabular}}
    \caption{Algorithm information about MACKRL.}
	\label{table:MACKRL}
\end{table}
and the algorithm framework of MACKRL for formulation (\ref{eq:framework-v3}) is show as the following form
\begin{equation}
\left\{\begin{array}{ll}
    \phi_{sh}^{k+1} &= \phi_{sh}^{k} - \alpha_1 \frac{1}{M} \sum_{m=1}^{M} \sum_{i=1}^n \nabla_{\phi_{sh}} \left( Q^{\boldsymbol{\pi}^k,i}_{\phi_{sh}^k}(\boldsymbol{o^m}, \boldsymbol{a^m}) - Q^{\boldsymbol{\pi}^k,i}_{tg} \right)^2 \\
    & \approx \arg\min\limits_{\phi_{sh}} \alpha_1 \sum\limits_{i=1}^{n} \mathbb{E} \left[ \left( Q^{\boldsymbol{\pi},i}_{\phi_{sh}}(\boldsymbol{o}, \boldsymbol{a}) - Q^{\boldsymbol{\pi},i}_{tg} \right)^2 \right], \\
    
    \psi_{sh}^{k+1} &= \psi_{sh}^{k} - \alpha_2 \frac{1}{M} \sum_{m=1}^{M} \sum_{i=1}^{n} \nabla_{\psi_{sh}} \left(Q^{\boldsymbol{\pi}^k,i}_{\phi_{sh}^{k+1}}(\boldsymbol{o}^m,\boldsymbol{a}^m)\right)  \\
    & \approx \arg \min\limits_{\psi_{sh}} \alpha_2 \sum\limits_{i=1}^{n} \mathbb{E}\left[Q^{\boldsymbol{\pi},i}_{\phi_{sh}}(\boldsymbol{o},\boldsymbol{a})\right], \nonumber
\end{array}\right.
\end{equation}which can be considered to employ block coordinate gradient descent on formulation (\ref{eq:framework-MACKRL}).

\subsection{Multi-Agent Deep Deterministic Policy Gradient (MADDPG)}
MADDPG algorithm adopting the framework of CTDE. 
Thus, it allows the policies to use extra information to ease training so long as it is not used at test time.
However, it is unnatural to do this with Q-learning, as the Q function generally cannot contain different information at training and test time.
Thus, MADDPG proposes a simple extension of actor-critic methods based on the DDPG algorithm, where the critic is augmented with extra information about other agents' policies.

Since the MADDPG algorithm is directly extended on the DDPG, the difference from the DDPG algorithm is that additional information of the other agents is introduced when train the centralized critic. 
For each agent, it is still essentially completing a single-agent task. 
Therefore, there is no shared parameter between agents, which means $\psi_{sh}=\varnothing$ and $\phi_{sh}=\varnothing$.
Because this paper focuses on the collaboration problem, minor modifications to MADDPG are made here.
That is, all agents share a centralized Q-value function, which means $\phi_{in}^i=\varnothing$.
The objective function for each agent $i$ in actor and critic phases can be reformulated as
\begin{eqnarray}
J^i_{\text{actor}} \left( \{ \psi_{in}^i\}, \phi_{sh}\right) &=& \mathbb{E}\left[Q^{\boldsymbol{\pi},i}_{\phi_{sh}}(\boldsymbol{o},\boldsymbol{a})\right], \nonumber\\
J^i_{\text{critic}} \left( \{ \psi_{in}^i\}, \phi_{sh} \right) &=& \mathbb{E} \left[ \left( Q^{\boldsymbol{\pi},i}_{\phi_{sh}}(\boldsymbol{o}, \boldsymbol{a}) - Q^{\boldsymbol{\pi},i}_{tg} \right)^2 \right], \nonumber
\end{eqnarray}
where the expectations are take on $\boldsymbol{o}, \boldsymbol{a}, \boldsymbol{c}, \boldsymbol{o'} \sim \mathcal{D}$ because the DDPG algorithm are off-policy algorithm, and $\mathcal{D}$ is the shared experience replay buffer;
$\boldsymbol{\pi} := \left\{\pi^i_{\psi_{in}^i}\right\}$ represents the policy of each agent;
$\psi_{in}^i$ and $\phi_{sh}$ represent the independent actor parameter and shared critic parameter respectively;
$Q^{\boldsymbol{\pi},i}_{tg}:=c^i + \gamma  Q^{\boldsymbol{\pi},i}_{\phi_{sh}}\left(\boldsymbol{o'},\boldsymbol{\pi}(\boldsymbol{o'})\right)$.

The overall optimization problem specified from (\ref{eq:framework-v3}) can be formulated as
\begin{equation}\label{eq:framework-MADDPG}
    \min_{\psi_{sh}, \phi_{sh}} \alpha_1 \sum_{i=1}^n \mathbb{E}\left[Q^{\boldsymbol{\pi},i}_{\phi_{sh}}(\boldsymbol{o},\boldsymbol{a})\right] + \alpha_2 \sum_{i=1}^n \mathbb{E} \left[ \left( Q^{\boldsymbol{\pi},i}_{\phi_{sh}}(\boldsymbol{o}, \boldsymbol{a}) - Q^{\boldsymbol{\pi},i}_{tg} \right)^2 \right].
\end{equation}

All detailed information about MADDPG is summarized in Table \ref{table:MADDPG},
\begin{table}[tb!]
\centering
    \scalebox{0.9}{\begin{tabular}{|c|c|c|c|c|c|}
	\hline
    \multicolumn{2}{|c|}{$\mathbf{w}_{in}$} & \multicolumn{2}{c|}{$\mathbf{w}_{sh}$}& \multicolumn{2}{c|}{Functions: $J^i_{\text{actor}}$ and $J^i_{\text{critic}}$} \\
	\hline
	$\psi_{in}^i$ & $\phi_{in}^i$ & $\psi_{sh}^i$ & $\phi_{sh}^i$ & $J^i_{\text{actor}} \left(\{ \psi_{in}^i\}, \phi_{sh} \right)$ & $J^i_{\text{critic}} \left(\{ \psi_{in}^i\}, \phi_{sh} \right)$ \\
	\hline
	$\pi^i(o^i)$ & $\varnothing$ & $\varnothing$ & $Q(\boldsymbol{o},\boldsymbol{a})$ & $\mathbb{E}\left[Q^{\boldsymbol{\pi},i}_{\phi_{sh}}(\boldsymbol{o},\boldsymbol{a})\right]$ & $\mathbb{E} \left[ \left( Q^{\boldsymbol{\pi},i}_{\phi_{sh}}(\boldsymbol{o}, \boldsymbol{a}) - Q^{\boldsymbol{\pi},i}_{tg} \right)^2 \right]$ \\
	\hline
	\end{tabular}}
    \caption{Algorithm information about MADDPG.}
	\label{table:MADDPG}
\end{table}
and the algorithm framework of MADDPG for formulation (\ref{eq:framework-v3}) is show as the following form
\begin{equation}
\left\{\begin{array}{lll}
    \phi_{sh}^{k+1} &= \phi_{sh}^{k} - \alpha_1 \frac{1}{M} \sum_{m=1}^{M} \sum_{i=1}^{N} \nabla_{\phi_{sh}} \left( Q^{\boldsymbol{\pi}^k,i}_{\phi_{sh}^{k}}(\boldsymbol{o}^m, \boldsymbol{a}^m) - Q^{\boldsymbol{\pi}^k,i}_{tg} \right)^2 \\
    & \approx \arg\min\limits_{\phi_{sh}} \alpha_1 \sum\limits_{i=1}^{n} \mathbb{E} \left[ \left( Q^{\boldsymbol{\pi},i}_{\phi_{sh}}(\boldsymbol{o}, \boldsymbol{a}) - Q^{\boldsymbol{\pi},i}_{tg} \right)^2 \right], \\
    
    \{\psi_{in}^{i}\}^{k+1} &= \{\psi_{in}^{i}\}^{k} - \alpha_2 \frac{1}{M} \sum_{m=1}^{M} \sum_{n=1}^{N} \nabla_{\{\psi_{in}^i\}} \left( Q^{\boldsymbol{\pi}^k,i}_{\phi_{sh}^{k+1}}(\boldsymbol{o}^m, \boldsymbol{a}^m) - Q^{\boldsymbol{\pi}^k,i}_{tg} \right)^2 \\
    & \approx \arg\min\limits_{\{\psi_{in}^i\}} \alpha_2 \sum\limits_{i=1}^{n} \mathbb{E} \left[ \left( Q^{\boldsymbol{\pi},i}_{\phi_{sh}}(\boldsymbol{o}, \boldsymbol{a}) - Q^{\boldsymbol{\pi},i}_{tg} \right)^2 \right], \nonumber
\end{array}\right.
\end{equation}which can be considered to employ block coordinate gradient descent for formulation (\ref{eq:framework-MADDPG}).

\subsection{Counterfactual Multi-Agent Policy Gradients (COMA)}

In fully cooperative POSG, joint actions typically generate only the global cost (i.e., the same cost function sharing between agents), making it difficult for each agent to deduce its contribution to the team’s success.
In some cases, it is possible to design an individual cost function for each agent. 
However, these costs are not generally available in a collaborative setting and usually fail to encourage individual agents to sacrifice to obtain better global performance.
This will substantially impede multi-agent learning in challenging tasks, even with a relatively small number of agents \citep{foerster2018counterfactual}.
This crucial challenge is called \textit{multi-agent credit assignment} problem \citep{chang2004all}.
COMA \citep{foerster2018counterfactual} solves this problem by learning a centralized critic with a \textit{counterfactual baseline} which is inspired by \textit{difference rewards} \citep{wolpert2002optimal}.

In the COMA algorithm, for the reason that a centralized critic is employed, so that all the parameters in critic are shared, which means $\phi_{in}^i = \varnothing$ for all agents.
However there is no shared parameter in actors, which means $\psi_{sh} = \varnothing$.
Further, the objective function for each agent $i$ in actor and critic phases can be reformulated as
\begin{eqnarray}
J^i_{\text{actor}} \left( \{ \psi_{in}^i\}, \phi_{sh} \right) &=& \mathbb{E}\left[Q^{\boldsymbol{\pi},i}_{\phi_{sh}}(\boldsymbol{o},\boldsymbol{a})  - \mathcal{B}(\boldsymbol{o}, \boldsymbol{a^{\setminus i}})\right], \nonumber\\
J^i_{\text{critic}} \left( \{ \psi_{in}^i\}, \phi_{sh} \right) &=& \mathbb{E} \left[ \left( Q^{\boldsymbol{\pi},i}_{\phi_{sh}}(\boldsymbol{o}, \boldsymbol{a}) - Q^{\boldsymbol{\pi},i}_{tg} \right)^2 \right], \nonumber
\end{eqnarray}where
where the expectations are taken on $s \sim d_{\Psi}, \boldsymbol{o} \sim \mathcal{E}, a \sim \boldsymbol{\pi}_{\Psi}$ and $\Psi:=\{ \psi_{in}^i \}$;
$\boldsymbol{\pi} := \left\{\pi^i_{\psi^i_{in}}\right\}$ represents the joint policy of all agents;
$\phi_{sh}$ represents the sharing parameter of the centralized critic;
$\psi_{in}^i$ represents the independent actor parameter of each agent;
COMA introduces TD($\lambda$) \citep{sutton2018reinforcement} for critic learning, thus $Q^{\boldsymbol{\pi},i}_{tg}$ is also denoted as $G_{t}^{\lambda}:=(1-\lambda)\sum_{n=1}^{\infty}\lambda^{n-1}G_{t}^{(n)}$;
when $n=1$, we have
$
G_{t}^{1}:=\mathbf{E}_{s'\sim\mathcal{P}, \boldsymbol{o}^{'}\sim\cal{E}}\left(\mathcal{C}_{s, \boldsymbol{a}}^{s'}+ \gamma V^{\boldsymbol{\pi},i}_{\phi_{sh}}\left(\boldsymbol{o}'\right) \right),
$
and $V$ represents the approximated \textit{state-value function} \citep{sutton2018reinforcement};
$\mathcal{B}(\boldsymbol{o}, \boldsymbol{a^{\setminus i}})$ denotes the \textit{multi-agent counterfactual baseline} that is used to solve the credit assignment problem,
$
\mathcal{B}(\boldsymbol{o}, \boldsymbol{a^{\setminus i}})=\mathbb{E}_{a^{i} \sim \pi^{i}\left(o^{i}\right)}\left[Q^{\boldsymbol{\pi},i}_{\phi_{sh}}(\boldsymbol{o}, (a^i, \boldsymbol{a^{\setminus i}}))\right].
$
The overall optimization problem specified from (\ref{eq:framework-v3}) can be formulated as
\begin{equation}\label{eq:framework-COMA}
    \min_{\{ \psi_{in}^i \}, \phi_{sh}} \alpha_1 \sum_{i=1}^n \mathbb{E}\left[Q^{\boldsymbol{\pi},i}_{\phi_{sh}}(\boldsymbol{o},\boldsymbol{a})  - \mathcal{B}(\boldsymbol{o}, \boldsymbol{a^{\setminus i}})\right] + \alpha_2 \sum_{i=1}^n \mathbb{E} \left[ \left( Q^{\boldsymbol{\pi},i}_{\phi_{sh}}(\boldsymbol{o}, \boldsymbol{a}) - Q^{\boldsymbol{\pi},i}_{tg} \right)^2 \right].
\end{equation}
Note that here we omit the parameters regularization term $\mathcal{R}(\mathbf{w}_{in}, \mathbf{w}_{sh})$ in (\ref{eq:framework-v3}).
In the practical implementation of the COMA algorithm, the regularization of the parameters is generally implemented by $L_2$ regularition or gradient norm clipping.
All detailed elements about COMA are summarized in Table \ref{table:COMA}.
\begin{table}[tb!]
\centering
    \resizebox{0.98\textwidth}{!}{\begin{tabular}{|c|c|c|c|c|c|}
	\hline
    \multicolumn{2}{|c|}{$\mathbf{w}_{in}$} & \multicolumn{2}{c|}{$\mathbf{w}_{sh}$}& \multicolumn{2}{c|}{Functions: $J^i_{\text{actor}}$ and $J^i_{\text{critic}}$} \\
	\hline
	$\psi_{in}^i$ & $\phi_{in}^i$ & $\psi_{sh}$ & $\phi_{sh}$ & $J^i_{\text{actor}} \left( \{ \psi_{in}^i\}, \phi_{sh} \right)$ & $J^i_{\text{critic}} \left( \{ \psi_{in}^i\}, \phi_{sh} \right)$ \\
	\hline
	\makecell{$\pi^i(o^i)$(GRUs\footnote{Gated recurrent units.})} & $\varnothing$ & $\varnothing$ & \makecell{$Q(\boldsymbol{o}, \boldsymbol{a})$ and $V(\boldsymbol{o})$(MLP\footnote{Multiple layer perceptron.})} & $\mathbb{E}\left[Q^{\boldsymbol{\pi},i}_{\phi_{sh}}(\boldsymbol{o},\boldsymbol{a})  - \mathcal{B}(\boldsymbol{o}, \boldsymbol{a^{\setminus i}})\right]$ & $\mathbb{E} \left[ \left( Q^{\boldsymbol{\pi},i}_{\phi_{sh}}(\boldsymbol{o}, \boldsymbol{a}) - Q^{\boldsymbol{\pi},i}_{tg} \right)^2 \right]$ \\
	\hline
	\end{tabular}}
    \caption{Algorithm elements about COMA.}
	\label{table:COMA}
\end{table}
Further the algorithm framework of COMA algorithm is shown follows, i.e.,
\begin{equation}
\left\{\begin{array}{ll}
    \phi_{sh}^{k+1} &= \phi_{sh}^{k} - \alpha_1 \frac{1}{M} \sum_{m=1}^{M} \sum_{i=1}^n \nabla_{\phi_{sh}} \left( Q^{\boldsymbol{\pi}^k,i}_{\phi_{sh}^k}(\boldsymbol{o^m}, \boldsymbol{a^m}) - Q^{\boldsymbol{\pi}^k,i}_{tg} \right)^2 \\
    & \approx \arg\min\limits_{\phi_{sh}} \alpha_1 \sum\limits_{i=1}^{n} \mathbb{E} \left[ \left( Q^{\boldsymbol{\pi},i}_{\phi_{sh}}(\boldsymbol{o}, \boldsymbol{a}) - Q^{\boldsymbol{\pi},i}_{tg} \right)^2 \right], \\
    \{\psi_{in}^{i, k+1}\} &= \{\psi_{in}^{i, k}\} - \alpha_2 \frac{1}{M} \sum_{m=1}^{M} \sum_{i=1}^{n} \nabla_{\{\psi_{in}^i\}} \left(Q^{\boldsymbol{\pi}^k,i}_{\phi_{sh}^{k+1}}(\boldsymbol{o}^m,\boldsymbol{a}^m)  - \mathcal{B}(\boldsymbol{o}^m, \boldsymbol{a^{\setminus i,m}})\right)  \\
    & \approx \arg \min\limits_{\{\psi_{in}^i\}} \alpha_2 \sum\limits_{i=1}^{n} \mathbb{E}\left[Q^{\boldsymbol{\pi},i}_{\phi_{sh}}(\boldsymbol{o},\boldsymbol{a})  - \mathcal{B}(\boldsymbol{o}, \boldsymbol{a^{\setminus i}})\right], \nonumber
\end{array}\right.
\end{equation}which can be considered to employ block coordinate gradient descent on formulation (\ref{eq:framework-COMA}).

\subsection{Multi-Actor-Attention-Critic (MAAC)}
A large-scale cooperative multi-agent system is complicated and unnecessary for an agent to surveillance all agents' states and behavior.
Meanwhile, the decision of each agent may only be affected by these strongly related agents, not all agents.
Considering too much other agents' information will make proper signals inevitably submerged in the background noise.
Therefore, the multiple attention actor-critic (MAAC) algorithm \citep{iqbal2019actor} introduces the effective attention mechanism to avoid the instability problem of estimating other agents' policy in MADDPG \citep{lowe2017multi}.
MAAC follows the learning procedure of centralized training with decentralized execution.
Based on the popular Soft-Actor-Critic (SAC) algorithm \citep{haarnoja2018soft}, MAAC considers an additional attention layer to avoid directly using other agents' policies, and the policy is determined by maximizing a trade-off between expected return and the entropy regularization.

The MAAC algorithm introduces a shared attention module in the modeling of each agent's critic, so that $\phi_{in}^i$ and $\phi_{sh}$ both \textit{non-empty} parameter sets.
Moreover, same as COMA, there is no shared parameter in actors, which means $\psi_{sh} =\emptyset$.
The objective function for each agent $i$ in actor and critic phases can be reformulated as
\begin{eqnarray}
J^i_{\text{actor}} \left( \{ \psi_{in}^i\}, \phi_{in}^i, \phi_{sh} \right) &=& \mathbb{E}\left[Q^{\boldsymbol{\pi},i}_{\phi_{in}^i, \phi_{sh}}(\boldsymbol{o},\boldsymbol{a}) + \alpha \mathcal{H}(\cdot|\pi^i_{\psi_{in}^i}(o^i)) - \mathcal{B}(\boldsymbol{o}, \boldsymbol{a^{\setminus i}})\right], \nonumber\\
J^i_{\text{critic}} \left( \{ \psi_{in}^i\}, \phi_{in}^i, \phi_{sh} \right) &=& \mathbb{E} \left[ \left( Q^{\boldsymbol{\pi},i}_{\phi_{in}^i, \phi_{sh}}(\boldsymbol{o}, \boldsymbol{a}) - Q^{\boldsymbol{\pi},i}_{tg} \right)^2 \right], \nonumber
\end{eqnarray}
where the expectations are take on $\boldsymbol{o}, \boldsymbol{a}, \boldsymbol{c}, \boldsymbol{o'} \sim \mathcal{D}$ because the SAC algorithm are off-policy algorithm, and $\mathcal{D}$ is the shared experience replay buffer;
$\boldsymbol{\pi} := \left\{\pi^i_{\psi^i_{in}}\right\}$ represents the joint policy of all agents;
$\phi_{sh}$ represents the sharing attention parameter of the centralized critic and $\phi_{in}^i$ represents the rest independent critic parameter of each agent;
$\psi_{in}^i$ represents the independent actor parameter of each agent;
$\mathcal{H}(\cdot|\pi^i_{\psi_{in}^i}(o^i))$ denotes the entropy of the policy at state $o^i$;
$\mathcal{B}(\boldsymbol{o}, \boldsymbol{a^{\setminus i}})$ is the multi-agent counterfactual baseline that is same as COMA;
$Q^{\boldsymbol{\pi},i}_{tg}$ is defined as
$$
c_{i}+\gamma \mathbb{E}_{\boldsymbol{a'} \sim \boldsymbol{\pi}\left(\boldsymbol{o'}\right)} \left[Q^{\boldsymbol{\pi},i}_{\phi_{in}^i, \phi_{sh}}(\boldsymbol{o'}, \boldsymbol{a'})-\alpha \log \left(\pi^i_{\psi_{in}^i}\left(a_{i}^{\prime} | o_{i}^{\prime}\right)\right)\right].
$$
Finally, the detailed form of critic with attention mechanism is denoted as
$$
Q^{\boldsymbol{\pi},i}_{\phi_{in}^i, \phi_{sh}}(\boldsymbol{o}, \boldsymbol{a}) = \zeta^i \left(\digamma^i (o^i, a^i), \sum_{h=1}^{H}\sum_{j\neq i}\alpha_h^{j} \Upsilon(V_{h} \digamma^j (o^j, a^j))\right),
$$
where $\phi_{in}^i:=\{ \zeta^i, \digamma^i \}$ and $\zeta^i, \digamma^i$ are two-layer multi-layer perceptron (MLP) and one-layer MLP encoding function respectively. 
$\Upsilon$ denotes a specific nonlinear activation function, and
$$\phi_{sh}:=\left\{ V_h, W_h^{key}, W_h^{que} \right\}_{h=1}^{H},$$
where $V_h$ represents attention module parameters and $\alpha_{h}^{j\neq i} \propto\exp((\digamma^{j\neq i})^{T}(W_h^{key})^{\rm T}W_h^{que}\digamma^{i})$ represents the attention factor.
$H$ denotes the number of attention heads.

The overall optimization problem specified from (\ref{eq:framework-v3}) can be formulated as
\begin{equation}\label{eq:framework-MAAC}
    \begin{aligned}
        \min_{ \{ \psi_{in}^i\}, \phi_{in}^i, \phi_{sh}} &\alpha_1 \sum_{i=1}^n \mathbb{E}\left[Q^{\boldsymbol{\pi},i}_{\phi_{in}^i, \phi_{sh}}(\boldsymbol{o},\boldsymbol{a}) + \alpha \mathcal{H}(\cdot|\pi^i_{\psi_{in}^i}(o^i)) - \mathcal{B}(\boldsymbol{o}, \boldsymbol{a^{\setminus i}})\right] + \\
        &\qquad\alpha_2 \sum_{i=1}^n \mathbb{E} \left[ \left( Q^{\boldsymbol{\pi},i}_{\phi_{in}^i, \phi_{sh}}(\boldsymbol{o}, \boldsymbol{a}) - Q^{\boldsymbol{\pi},i}_{tg} \right)^2 \right]. 
    \end{aligned}
\end{equation}

Note that here we omit the parameters regularization term $\mathcal{R}(\mathbf{w}_{in}, \mathbf{w}_{sh})$ in (\ref{eq:framework-v3}).
In the practical implementation of the MAAC algorithm, the regularization of the parameters is generally implemented by $L_2$ regularization  or gradient norm clipping.
All detailed elements about MAAC are summarized in Table \ref{table:MAAC}.

\begin{table}[tb!]
\centering
    \resizebox{0.98\textwidth}{!}{\begin{tabular}{|c|c|c|c|c|c|}
	\hline
    \multicolumn{2}{|c|}{$\mathbf{w}_{in}$} & \multicolumn{2}{c|}{$\mathbf{w}_{sh}$}& \multicolumn{2}{c|}{Functions: $J^i_{\text{actor}}$ and $J^i_{\text{critic}}$} \\
	\hline
	$\psi_{in}^i$ & $\phi_{in}^i$ & $\psi_{sh}^i$ & $\phi_{sh}^i$ & $J^i_{\text{actor}} \left( \{ \psi_{in}^i\}, \phi_{in}^i, \phi_{sh} \right)$ & $J^i_{\text{critic}} \left( \{ \psi_{in}^i\}, \phi_{in}^i, \phi_{sh} \right)$ \\
	\hline
	\makecell{$\pi^i(o^i)$(MLP)} & $\zeta^i, \digamma^i$ & $\varnothing$ & $\left\{ V_h, W_h^{key}, W_h^{que} \right\}_{h=1}^{H}$ & $\mathbb{E}\left[Q^{\boldsymbol{\pi},i}_{\phi_{in}^i, \phi_{sh}}(\boldsymbol{o},\boldsymbol{a}) + \alpha \mathcal{H}(\cdot|\pi^i_{\psi_{in}^i}(o^i)) - \mathcal{B}(\boldsymbol{o}, \boldsymbol{a^{\setminus i}})\right]$ & $\mathbb{E} \left[ \left( Q^{\boldsymbol{\pi},i}_{\phi_{in}^i, \phi_{sh}}(\boldsymbol{o}, \boldsymbol{a}) - Q^{\boldsymbol{\pi},i}_{tg} \right)^2 \right]$ \\
	\hline
	\end{tabular}}
    \caption{Algorithm elements about MAAC.}
	\label{table:MAAC}
\end{table}
Futher the algorithm framework of MAAC for formulation (\ref{eq:framework-MAAC}) is shown as follows,
\begin{small}
\begin{equation}
\left\{\begin{array}{lll}
    \!\!\!\!&\!\!\!\!\!\!\!\!\left\{ V_h, W_h^{key}, W_h^{que} \right\}^{k+1} = \left\{ V_h, W_h^{key}, W_h^{que} \right\}^{k} - \alpha_1 \frac{1}{M} \sum_{m=1}^{M} \sum_{i=1}^n \\
    &\qquad\nabla_{\left\{ V_h, W_h^{key}, W_h^{que} \right\}} \left( Q^{\boldsymbol{\pi}^k,i}_{\phi_{in}^{i,k}, \phi_{sh}^k}(\boldsymbol{o^m}, \boldsymbol{a^m}) - Q^{\boldsymbol{\pi}^k,i}_{tg} \right)^2 \\
    &\qquad\qquad\qquad\qquad \approx \arg\min_{{\left\{ V_h, W_h^{key}, W_h^{que} \right\}}} \alpha_1 \sum\limits_{i=1}^{n} \mathbb{E} \left[ \left( Q^{\boldsymbol{\pi},i}_{\phi_{in}^{i}, \phi_{sh}}(\boldsymbol{o}, \boldsymbol{a}) - Q^{\boldsymbol{\pi},i}_{tg} \right)^2 \right], \vspace{10pt} \\
    \!\!\!\!&\!\!\!\!\!\!\!\!\left\{ \zeta^i, \digamma^i \right\}^{k+1} = \left\{ \zeta^i, \digamma^i \right\}^{k} - \alpha_1 \frac{1}{M} \sum_{m=1}^{M} \nabla_{\left\{ \zeta^i, \digamma^i \right\}} \left( Q^{\boldsymbol{\pi}^k,i}_{\phi_{in}^{i,k}, \phi_{sh}^k}(\boldsymbol{o^m}, \boldsymbol{a^m}) - Q^{\boldsymbol{\pi}^k,i}_{tg} \right)^2 \\
    &\qquad\qquad \approx \arg\min_{{\left\{ \zeta^i, \digamma^i \right\}}} \alpha_1 \sum\limits_{i=1}^{n} \mathbb{E} \left[ \left( Q^{\boldsymbol{\pi},i}_{\phi_{in}^{i}, \phi_{sh}}(\boldsymbol{o}, \boldsymbol{a}) - Q^{\boldsymbol{\pi},i}_{tg} \right)^2 \right], \vspace{10pt} \\
    \!\!\!\!&\!\!\!\!\!\!\!\! \{\psi_{in}^{i, k+1}\} = \{\psi_{in}^{i, k}\} - \alpha_2 \frac{1}{M} \sum_{m=1}^{M} \sum_{i=1}^{n} \\
    &\qquad\nabla_{\{\psi_{in}^i\}} \left(Q^{\boldsymbol{\pi}^k,i}_{\phi_{in}^{i,k+1}, \phi_{sh}^{k+1}}(\boldsymbol{o}^m,\boldsymbol{a}^m) + \alpha \mathcal{H}(\cdot|\pi^i_{\psi_{in}^{i,k}}(o^{i,m})) - \mathcal{B}(\boldsymbol{o}^m, \boldsymbol{a^{\setminus i,m}})\right)  \\
    &\qquad\ \ \approx \arg \min\limits_{\{\psi_{in}^i\}} \alpha_2 \sum\limits_{i=1}^{n} \mathbb{E}\left[Q^{\boldsymbol{\pi},i}_{\phi_{in}^i, \phi_{sh}}(\boldsymbol{o},\boldsymbol{a}) + \alpha \mathcal{H}(\cdot|\pi^i_{\psi_{in}^i}(o^i)) - \mathcal{B}(\boldsymbol{o}, \boldsymbol{a^{\setminus i}})\right], \nonumber
\end{array}\right.
\end{equation}
\end{small} which can be considered to employ block coordinate gradient descent on formulation (\ref{eq:framework-MAAC}).

{\color{black}{
\section{Limitations}\label{sec:limit-appendix}

We ensured the Markov property of the policy through the enrichment of the observation space, avoiding the use of non-Markovian and history-dependent policy classes. 
Although this approach facilitates theoretical analysis and engineering implementation, the resulting \textit{vast belief space} and \textit{infinite hierarchy of belief} render the Markov policy class intractable in more complex problems, further constraining its generality.
We shed light on the limitations that affect this policy class concerning crucial factors such as the error of the approximated information state and the infinite hierarchy belief representation capabilities. 
Careful consideration of these factors is essential to enhance the tractability and generality of the Markovian policy within the enriched observation space.

\subsection{Error of the Approximated Information State}

Lifting POMDPs to an MDP over the belief states would require estimating the belief states based on the whole history. 
This estimation is prone to the curse of dimensionality due to the ample space of belief states. 
To analyze the theoretical impact of this curse on algorithm performance, we adopt the perspective of \textit{information state} based on existing work~\citep{subramanian2022approximate}.

It is pertinent to mention that the concepts and theories related to the information state in POSG~\citep{mao2020information} are an extension of those in POMDP~\citep{subramanian2022approximate}. 
A comprehensive description of these concepts would require a significant amount of space, and it is not the primary focus of this paper. 
Instead, we will concentrate on a specific agent in a POSG and gradually deduce the limitations of Markovian policies from a single-agent perspective. 
This approach allows us to omit the discussion of \textit{common} and \textit{private information} and their relevant content~\citep{tavafoghi2021unified,mao2020information} while retaining the generalizability of our analysis.
Concretely, we have the following definition (for a single agent in POSG):

\begin{definition}[\citealt{subramanian2022approximate}]
    Let $\left\{\mathrm{Z}_t\right\}_{t=1}^T$ be a pre-specified collection of Banach spaces,  $\mathrm{H}_t:=\{\boldsymbol{O}_{1:t-1},\boldsymbol{A}_{1:t-1},\}$ be the past (joint) observations and actions.
    A collection $\left\{\sigma_t: \mathrm{H}_t \rightarrow \mathrm{Z}_t\right\}_{t=1}^T$ of history compression functions is called an \textit{information state generator} and $\left\{Z_t\right\}_{t=1}^T$ is called an \textit{information state}, if the process $\left\{Z_t\right\}_{t=1}^T$, where $Z_t=\sigma_t\left(H_t\right)$, satisfies the following properties:
    
    \noindent\textbf{(P1) Sufficient for performance evaluation}, i.e., for any timestep $t$, any realization $h_t$ of $H_t$ and any choice $\boldsymbol{a}_t$ of $\boldsymbol{A}_t$, we have
    $$
    \mathbb{E}\left[R_t \mid H_t=h_t, \boldsymbol{A}_t=\boldsymbol{a}_t\right]=\mathbb{E}\left[R_t \mid Z_t=\sigma_t\left(h_t\right), \boldsymbol{A}_t=\boldsymbol{a}_t\right] .
    $$
    \textbf{(P2) Sufficient to predict itself}, i.e., for any time $t$, any realization $h_t$ of $H_t$ and any choice $\boldsymbol{a}_t$ of $\boldsymbol{A}_t$, we have that for any Borel subset $\mathrm{B}$ of $\mathrm{Z}_{t+1}$,
    $$
    \mathbb{P}\left(Z_{t+1} \in \mathrm{B} \mid H_t=h_t, \boldsymbol{A}_t=\boldsymbol{a}_t\right)=\mathbb{P}\left(Z_{t+1} \in \mathrm{B} \mid Z_t=\sigma_t\left(h_t\right), \boldsymbol{A}_t=\boldsymbol{a}_t\right) .
    $$
\end{definition}

It is evident that the history $\mathrm{H}_t$ is a trivial information state for any partially observed model~\citet{subramanian2022approximate}. 
In F2A2, we enrich the observation space based on all histories to create an information state space. 
However, for computational efficiency, F2A2 learns a Markovian policy in the approximated information state space.
To accomplish this, we first perform a finite memory compression (FMC) of the entire history by maintaining a fixed window of the local history as an approximate information state. 
We then use LSTM, a type of recurrent neural network, to encode the truncated/approximated information state.

Nevertheless, truncating and embedding the information state leads to errors. 
As per theoretical results from~\citet{subramanian2022approximate} and~\citet{mao2020information}, the distance between the optimal Markovian policy learned in the approximated information state space and the actual optimal Markovian policy is bounded by $(T-t+1)(\varepsilon+L_V \delta)$, where $L_V$ represents the upper bound of the Lipschitz constant of the policy and value function.
The values of $(\varepsilon, \delta)$ are defined as follows:

\begin{definition}
    Let $\left\{\hat{Z}_t\right\}_{t=1}^T$ be a pre-specified collection of Banach spaces, $\mathfrak{F}$ be a probability metrics with a $\zeta$-structure~\citep{zolotarev1983probability}, and $\left(\varepsilon, \delta\right)$ be pre-specified positive real numbers. A collection $\left\{\hat{\sigma}_t: \mathrm{H}_t \rightarrow\right.$ $\left.\hat{Z}_t\right\}_{t=1}^T$ of history compression functions, along with approximate update kernels $\left\{\hat{P}_t: \hat{Z}_t \times\right.$ $\left.\boldsymbol{A} \rightarrow \Delta\left(\hat{Z}_{t+1}\right)\right\}_{t=1}^T$ and reward approximation functions $\left\{\hat{r}_t: \hat{Z}_t \times \boldsymbol{A} \rightarrow \mathbb{R}\right\}_{t=1}^T$, is called an $\left(\varepsilon, \delta\right)$ approximate information state generator and $\left\{Z_t\right\}_{t=1}^T$ is called an \textit{approximate information state} if the process $\left\{\hat{Z}_t\right\}_{t=1}^T$, where $\hat{Z}_t=\hat{\sigma}_t\left(H_t\right)$, satisfies the following properties:
    
    \noindent\textbf{(AP1) Sufficient for approximate performance evaluation}, i.e., for any time t, any realization $h_t$ of $H_t$ and any choice $\boldsymbol{a}_t$ of $\boldsymbol{A}_t$, we have
    $$
    \left|\mathbb{E}\left[R_t \mid H_t=h_t, \boldsymbol{A}_t=\boldsymbol{a}_t\right]-\hat{r}_t\left(\hat{\sigma}_t\left(h_t\right), \boldsymbol{a}_t\right)\right| \leq \varepsilon
    $$
    \textbf{(AP2) Sufficient to predict itself approximately}, i.e., for any time t, any realization $h_t$ of $H_t$, any choice $\boldsymbol{a}_t$ of $\boldsymbol{A}_t$, and for any Borel subset $\mathrm{B}$ of $\hat{Z}_{t+1}$, define $\mu_t(\mathrm{~B}):=$ $\mathbb{P}\left(\hat{Z}_{t+1} \in B \mid H_t=h_t, \boldsymbol{A}_t=\boldsymbol{a}_t\right)$ and $\nu_t(\mathrm{~B}):=\hat{P}_t\left(B \mid \hat{\sigma}_t\left(h_t\right), \boldsymbol{a}_t\right) ;$ then
    $$
    d_{\mathfrak{F}}\left(\mu_t, \nu_t\right) \leq \delta.
    $$
\end{definition}

The theoretical upper bound for the $(\varepsilon, \delta)$ values remains uncertain due to the need for quantifying the expressiveness of the LSTM.
As the state space, action space, and task horizon increase, the error caused by using FMC to estimate the information state will likely amplify, resulting in a more significant gap between the policy converged by F2A2 and the optimal policy. 
Therefore, our future research will focus on more efficient methods of estimating the information state based on existing work~\citet{mao2020information}. 

Furthermore, the \textit{infinite hierarchy of beliefs} in the POSG provides an alternative perspective that can be used to analyze the upper bound for the $(\varepsilon, \delta)$ values and the limitations of the Markovian policy, as discussed in the next section.

\subsection{Infinite Hierarchy Belief Representation Capabilities}

In the preceding section, our analysis begins with the information state. 
It aims to theoretically examine how the curse of dimensionality arises from the enriched observation space in POSG and its impact on the Markovian policy.
Bayesian RL frameworks represent another approach to addressing partially observed problems that are discussed in the literature~\citep{Zamir2008BayesianGG,Ross2007BayesAdaptiveP,Katt2018BayesianRL,Foerster2018BayesianAD}. 
These frameworks maintain a posterior distribution over environment models, where at each step, a model is sampled from the posterior, and its corresponding optimal policy is learned and executed.

In practice, it is reasonable for each agent in a POSG to compute a belief state that captures their uncertainty about the environment's state, following the logic of POMDPs.
In this context, such a belief state is called a "zeroth-order belief," as discussed in~\citet{moreno2021neural}. 
When viewed from a single agent's perspective, assuming that other agents' policies are fixed, the POSG can be regarded as a POMDP. 
The presence of other agents transforms these agents into part of the environment, causing the unknown state of the world to encompass not only the environment state but also the knowledge state of other agents.
Consequently, agent $i$ must create a belief about the other agent's beliefs, known as "first-order" beliefs. 
This recursive process can continue, allowing agents to form "second-order" beliefs about other agents' "first-order" beliefs, and so on. 
Ultimately, each agent maintains an "infinite hierarchy" of beliefs.

It is apparent that F2A2 must possess the ability to express the infinite hierarchy belief if it uses finite memory compression and LSTM models to generate small $(\varepsilon, \delta)$ values. 
However, representing an infinite hierarchy of beliefs is theoretically unfeasible. 
Fortunately, agents do not need to consider an infinite hierarchy of beliefs to exhibit robust decision-making and generalization abilities. 
This fact is exemplified by human beings, with the Keynes Beauty Contest~\citep{keynes1937general} being a classic illustration. 
In a simplified version of this experiment, players are asked to select a number between $0$ and $100$, with the player whose guess is closest to half of the average declared the winner.
According to the theory of Nash equilibrium (NE), selecting $0$ is the only rational choice for each player in the Keynes Beauty Contest. 
This reasoning is as follows: "If all players make random guesses, then the average of those guesses would be $50$ (level-$0$). 
Therefore, I should guess at most $1/2 * 50=25$ (level- $ 1 $). 
If other players think similarly, I should not guess more than $1/2 * 25=13$ (level-$2$)," and so on. 
In this manner, the level of beliefs can continue to develop until all players choose $0$, which is the only NE in this game.

However, experimental evidence reveals that most human players select numbers between $13$ and $25$~\citep{coricelli2009neural}, contradicting this theoretical result. 
This discrepancy is because not all human players exhibit perfect rationality; they are bounded by the levels of recursion they prefer to reason with (i.e., bounded rationality). 
Furthermore, in constrained scenarios, \citet{wen2021modelling} demonstrated that endowing agents with higher-level reasoning capabilities did not significantly enhance performance.

To this end, we can comprehend the limitations of Markovian policies based on the previous analysis by examining whether finite memory compression (FMC) and LSTM can represent a \ul{finite} hierarchy of beliefs. 
In this regard, \citet{moreno2021neural} introduces a scalable approach to approximate hierarchical belief structures using recursive deep generative models and leverages these belief models to obtain representations useful for complex tasks. 
For belief approximation at each level, they employ a Markovian policy based on finite memory compression and two independent RNNs. 
This implies that multi-layer RNNs may be capable of representing a finite hierarchy of beliefs. 
However, in F2A2, only a single-layer LSTM is used in the Markovian policy, which likely imposes significant limitations on representing multi-layer beliefs and leads to large $(\varepsilon, \delta)$ values. 
Therefore, in future work, we intend to conduct more comprehensive research on designing a more effective network structure to estimate multi-layer beliefs.
}}

\vskip 0.2in
\bibliography{main}

\end{document}